
\documentclass{article}
\usepackage[final, nonatbib]{neurips_2022}
\usepackage[round]{natbib}
\usepackage[utf8]{inputenc}        
\usepackage[T1]{fontenc}           
\usepackage{hyperref}              
\usepackage{url}                   
\usepackage{booktabs}              
\usepackage{amsfonts}              
\usepackage{nicefrac}              
\usepackage{microtype}             
\usepackage{amsmath}               
\usepackage{mathtools}             
\usepackage[capitalise]{cleveref}  
\usepackage{graphicx}              
\usepackage{amsthm}                
\usepackage{tcolorbox}             
\usepackage{algorithm}             
\usepackage{algcompatible}         
\usepackage{algpseudocode}         

\newtheorem{theorem}{Theorem}
\newtheorem{lemma}{Lemma}
\newtheorem{definition}{Definition}

\tcolorboxenvironment{theorem}{
    boxrule=0pt, boxsep=0pt, colback=gray!10, arc=0pt, outer arc=0pt
}

\tcolorboxenvironment{lemma}{
    boxrule=1pt, boxsep=0pt, colback=white, arc=0pt, colframe=black
}
\tcolorboxenvironment{definition}{
    boxrule=1pt, boxsep=0pt, colback=white, arc=0pt, colframe=black
}

\newcommand{\E}{\mathop{\mathbb{E}}}  
\newcommand{\R}{\mathbb{R}}           
\newcommand{\ProjP}{\mathbb{P}}       
\newcommand{\mdp}{M}                  
\newcommand{\mdpd}{\mdp^d}            
\newcommand{\hmdp}{\hat{M}}           
\newcommand{\hmdpd}{\hmdp^d}          

\newcommand{\cS}{\mathcal{S}}           
\newcommand{\cA}{\mathcal{A}}           
\newcommand{\cP}{\mathcal{P}}           
\newcommand{\cU}{\mathcal{U}}           
\newcommand{\cL}{\mathcal{L}}           
\newcommand{\cE}{\mathcal{E}}           
\newcommand{\cR}{\mathcal{R}}           
\newcommand{\cO}{\mathcal{O}}           
\newcommand{\cOT}{\tilde{\mathcal{O}}}  

\newcommand{\bl}{\boldsymbol{\lambda}}               
\newcommand{\blt}{\bl_t}                             
\newcommand{\bltt}{\bl_{t+1}}                        
\newcommand{\bltop}{\bl^\top}                        
\newcommand{\blttop}{\bl_t^\top}                     
\newcommand{\bls}{\bl^*}                             
\newcommand{\blsnorm}{\left\| \bls \right\|_\infty}  
\newcommand{\blstop}{{\bls}^\top}                    
\newcommand{\bt}{\boldsymbol{\tau}}                  
\newcommand{\tilbt}{\Tilde{\bt}}                     
\newcommand{\bb}{\mathbf{b}}                         
\newcommand{\cc}{\mathbf{c}}                         
\newcommand{\xx}{\mathbf{x}}                         
\newcommand{\li}{\lambda_i}                          
\newcommand{\lis}{\li^*}                             
\newcommand{\lti}{\lambda_{t,i}}                     
\newcommand{\ltti}{\lambda_{t+1,i}}                  
\newcommand{\enet}{\epsilon_1}                       
\newcommand{\eopt}{\epsilon_{\mathrm{opt}}}          
\newcommand{\slaterc}{\zeta_{\cc}}                   
\newcommand{\slaterctpi}{\slaterc^{\tpi}}            
\newcommand{\slatercs}{\slaterc^*}                   

\newcommand{\fac}{\forall i \in [d]}  

\newcommand{\pis}{\pi^*}                
\newcommand{\hpi}{\hat{\pi}}            
\newcommand{\hpit}{\hat{\pi}_t}         
\newcommand{\pics}{\pi_{\mathbf{c}}^*}  
\newcommand{\hpis}{\hat{\pi}^*}         
\newcommand{\tpi}{\Tilde{\pi}}          
\newcommand{\tpis}{\Tilde{\pi}^*}       
\newcommand{\bpi}{\bar{\pi}}            
\newcommand{\bpit}{\bar{\pi}_T}         

\newcommand{\Vc}{\mathbf{V}_{\cc}}           
\newcommand{\Vcbpi}{\Vc^{\bpit}}             
\newcommand{\Vci}{V_{c_i}}                   
\newcommand{\Vcipi}{\Vci^\pi(\rho)}          
\newcommand{\Vcipis}{\Vci^{\pis}(\rho)}      
\newcommand{\Vcipics}{\Vci^{\pics}(\rho)}    
\newcommand{\Vcibpi}{\Vci^{\bpit}(\rho)}     
\newcommand{\Vcihpi}{\Vci^{\hpi}(\rho)}      
\newcommand{\Vcitpi}{\Vci^{\tpi}(\rho)}      
\newcommand{\Vrpi}{V_r^\pi(\rho)}            
\newcommand{\Vrpis}{V_r^{\pis}(\rho)}        
\newcommand{\Vrhpi}{V_r^{\hpi}(\rho)}        
\newcommand{\Vrbpi}{V_r^{\bpit}(\rho)}       
\newcommand{\Vrp}{V_{r_p}}                   
\newcommand{\Vrppi}{\Vrp^\pi(\rho)}          
\newcommand{\Vrppis}{\Vrp^{\pis}(\rho)}      
\newcommand{\Vrptpi}{\Vrp^{\tpi}(\rho)}      
\newcommand{\Vrphpit}{\Vrp^{\hpit}(\rho)}    
\newcommand{\Vrpbpi}{\Vrp^{\bpit}(\rho)}     

\newcommand{\hV}{\hat{V}}                    
\newcommand{\hVc}{\hat{\mathbf{V}}_{\cc}}    
\newcommand{\hVcpi}{\hVc^\pi(\rho)}          
\newcommand{\hVchpit}{\hVc^{\hpit}(\rho)}    
\newcommand{\hVcbpi}{\hVc^{\bpit}(\rho)}     
\newcommand{\hVctpi}{\hVc^{\tpi}(\rho)}      
\newcommand{\hVctpis}{\hVc^{\tpis}(\rho)}    
\newcommand{\hVci}{\hat{V}_{c_i}}            
\newcommand{\hVcipi}{\hVci^\pi(\rho)}        
\newcommand{\hVcipis}{\hVci^{\pis}(\rho)}    
\newcommand{\hVcihpis}{\hVci^{\hpis}(\rho)}  
\newcommand{\hVcipics}{\hVci^{\pics}(\rho)}  
\newcommand{\hVcihpit}{\hVci^{\hpit}(\rho)}  
\newcommand{\hVcibpi}{\hVci^{\bpit}(\rho)}   
\newcommand{\hVcitpi}{\hVci^{\tpi}(\rho)}    
\newcommand{\hVcitpis}{\hVci^{\tpis}(\rho)}  
\newcommand{\hVr}{\hV_r}                     
\newcommand{\hVrpi}{\hVr^\pi(\rho)}          
\newcommand{\hVrp}{\hat{V}_{r_p}}            
\newcommand{\hVrppi}{\hVrp^\pi(\rho)}        
\newcommand{\hVrppis}{\hVrp^{\pis}(\rho)}    
\newcommand{\hVs}{\hVrp^{\hpis}(\rho)}       
\newcommand{\hVrptpi}{\hVrp^{\tpi}(\rho)}    
\newcommand{\hVrptpis}{\hVrp^{\tpis}(\rho)}  
\newcommand{\hVrphpit}{\hVrp^{\hpit}(\rho)}  
\newcommand{\hVrpbpi}{\hVrp^{\bpit}(\rho)}   

\DeclareMathOperator*{\argmax}{arg\,max}

\title{Primal-Dual Sample Complexity Bounds for Constrained Markov Decision Processes with \\ Multiple Constraints}

\author{
    \begin{tabular}{c@{\hspace{1cm}}c@{\hspace{1cm}}c}
        Max Buckley \thefootnote{*} & Konstantinos Papathanasiou \thefootnote{*} & Andreas Spanopoulos \thefootnote{*} \\
        {\normalfont Google} & {\normalfont ETH Zurich} & {\normalfont ETH Zurich}\\
        \texttt{buckleym@google.com} & \texttt{kpapathanasi@ethz.ch} & \texttt{aspanopoulos@ethz.ch}
    \end{tabular}
}

\begin{document}
\maketitle
\def\thefootnote{*}\footnotetext{The authors are listed in alphabetical order.}
\begin{abstract}
    This paper addresses the challenge of solving Constrained Markov Decision Processes (CMDPs) with $d > 1$ constraints when the transition dynamics are unknown, but samples can be drawn from a generative model. We propose a model-based algorithm for infinite horizon CMDPs with multiple constraints in the tabular setting, aiming to derive and prove sample complexity bounds for learning near-optimal policies. Our approach tackles both the relaxed and strict feasibility settings, where relaxed feasibility allows some constraint violations, and strict feasibility requires adherence to all constraints. The main contributions include the development of the algorithm and the derivation of sample complexity bounds for both settings. For the relaxed feasibility setting we show that our algorithm requires $\cOT \left( \frac{d |\cS| |\cA| \log(1/\delta)}{(1-\gamma)^3\epsilon^2} \right)$ samples to return $\epsilon$-optimal policy, while in the strict feasibility setting it requires $\cOT \left( \frac{d^3 |\cS| |\cA| \log(1/\delta)}{(1-\gamma)^5\epsilon^2{\slatercs}^2} \right)$ samples.
\end{abstract}

\section{Introduction}

Reinforcement Learning (RL) is a dynamic area within machine learning that focuses on agents learning which actions to take in order to maximize their cumulative reward. This field has seen substantial success in diverse domains such as robotics, autonomous driving, healthcare, and game playing \citep{sutton2018reinforcement, silver2016mastering}. The standard framework for RL problems is the Markov Decision Process (MDP), where an agent makes decisions by considering the current state, selecting actions, and transitioning to new states according to certain probabilities, all while accumulating rewards.

However, many real-world problems impose constraints on the agent's behavior. The prevalent framework used to model such cases is Constrained Markov Decision Processes (CMDPs) \citep{altman1999constrained}. CMDPs extend MDPs by introducing the concept of a cost. The constraints then restrict the expected costs of admissible policies. These constraints can be used to reflect a range of potential limitations like: unsafe regions in robot navigation \citep{feyzabadi2015hcmdp}, budget allocation in finance \citep{xiao2019model}, or safety in autonomous systems \citep{garcia2015comprehensive}. These constraints are crucial for ensuring feasible and safe operation in practical applications.

Traditional approaches to solving CMDPs rely on knowledge of the transition dynamics, which is often unrealistic in complex environments. However, it is often the case that we instead have access to a generative model. By querying such a model with a given state-action pair, we can sample the next state. This allows us to follow a model-based approach to solving this problem by constructing an empirical model of the transition dynamics. Such generative oracles are practical in many applications, where simulations or real-world data collection can provide the necessary samples required to approximate the environment's behavior without requiring explicit knowledge of the underlying dynamics.

In this work, we focus on solving CMDPs in scenarios where the transition dynamics are a priori unknown, but a generative model is available. Our aim is to derive and prove sample complexity bounds for learning near-optimal policies under both relaxed and strict feasibility settings. The relaxed feasibility setting allows for some constraint violations, whereas the strict feasibility setting requires policies to strictly adhere to all constraints, which is critical in high-stakes applications such as healthcare or autonomous driving where constraint violations can be disastrous.

The main contributions of this research are as follows:
\begin{itemize}
    \item We develop an algorithm for solving infinite horizon CMDPs with multiple constraints in the tabular setting, when the transition dynamics are unknown but we have access to a generative model, addressing both relaxed and strict feasibility settings.
    \item We derive sample complexity bounds for our algorithm, providing theoretical guarantees on the number of samples needed to achieve near-optimal policies.
    \item Our analysis covers both the relaxed and strict feasibility settings, showing how the sample complexity differs between the two and underlining the challenges of achieving the latter.
\end{itemize}


\section{Related Work}

CMDPs can be modeled utilizing a model-free or a model-based approach. Under the model-free setting, \citep{ding2020natural} proposed a primal-dual natural policy gradient algorithm (NPG-PD) operating and providing non-asymptotic convergence guarantees in both tabular and general settings. \citep{xu2021crpo} proposed the constraint-rectified policy optimization (CRPO) algorithm which adopts a primal update and provides global optimality guarantee. Assuming zero constraint violations, \citep{wei2021provably} proposed the Triple-Q algorithm while \citep{xu2021crpo} introduced a conservative stochastic primal-dual algorithm (CSPDA) and provided the corresponding sample complexity bounds.

Regarding the model-based approach, the CMDP is solved using a model-based algorithm after the transition matrix is provided or estimated with sufficient accuracy. Under this framework \citep{efroni2020exploration} proposed different algorithms that leverage the primal, dual or primal-dual formulation of the CDMP and achieve sublinear objective and constrained regret. \citep{brantley2020constrained} proposed CONRL algorithm which relies on the principle of optimism under uncertainty operating under convex-concave and knapsack settings. \citep{ding2021provably} proposed the OPDOP algorithm considering linear transition kernels while they extended their results to the tabular setting as well.

Closer to our approach is the line or research that studies the  asymptotic convergence of their algorithm under the assumption of the unknown model \citep{tessler2018reward, paternain2019constrained}. These studies use the Lagrangian method to show zero duality gap asymptotically. \citep{vaswani2022near} analyzed theoretically the algorithm proposed by \citep{paternain2019constrained} for both the relaxed and the strict feasibility case, considering access to a generative model and a single constraint providing matching complexity bounds $\cOT\left( \frac{|\cS| |\cA| \log(1/\delta)}{(1-\gamma)^3\epsilon^2} \right)$ to the unconstrained case for the former case. \citep{hasanzadezonuzy2021model} also analyzed the relaxed feasibility case and proposed GM-CRL, an extended linear programming algorithm that achieves $\cOT\left( \frac{\gamma^2 |\cS|^2 |\cA| \log(d/\delta)}{(1-\gamma)^3\epsilon^2} \right)$ sample complexity. Finally, \cite{bai2022achieving} examined the strict feasibility case and proposed a conservative stochastic primal-dual algorithm (CSPDA) which achieves $\cOT\left( \frac{d |\cS| |\cA|}{(1-\gamma)^6 \epsilon^2 \zeta^2} \right)$ sample complexity.

\section{Problem Formulation}

\subsection{CMDP Definition}
We consider an infinite-horizon discounted constrained Markov decision process (CMDP) with $d > 1$ constraints defined by the tuple $M^d = \left\langle \cS, \cA, \cP, r, \{c\}_{i=1}^d, \{b\}_{i=1}^d, \rho, \gamma \right\rangle$. Here, $\cS$ represents the state space, $\cA$ represents the action space, $\cP\colon \cS \times \cA \to \Delta_{\cS}$ is the state transition matrix, $\rho \in \Delta_{\cS}$ is the initial distribution of the states and $\gamma \in [0,1)$ is the discount factor. The reward function is denoted by $r\colon \cS \times \cA \to [0,1]$, and the constraint value functions are given by $c_i\colon \cS \times \cA \to [0,1]$, with $b_i \in \R$ representing the required minimum feasibility values for each constraint $i$, where $i = 1, \dots, d$. The value function of a reward objective $l$ for a given stationary policy $\pi \colon \cS \rightarrow \Delta_{\cA}$ is defined as $V_l^\pi(\rho) \coloneq \E_{s_0, a_0, \dots} \left[ \sum_{t = 0}^\infty \gamma^t l(s_t, a_t) \,\mid\, s_0 \sim \rho, a_t \sim \pi(\cdot | s_t), s_{t+1} \sim \cP(\cdot | s_t, a_t) \right]$. Also, for each state-action pair $(s,a)$ and policy $\pi$, the action-value function of a reward objective $l$ is defined as $Q_{l} \colon \cS \times \cA \to \mathbb{R}$ and satisfies: $V_{l}^{\pi}(s) = \langle \pi(\cdot|s), Q_{l}^{\pi}(s,\cdot) \rangle$. The goal of the agent in a CMDP is to find a policy that maximizes the value function associated with the reward function $r$, while ensuring that the policy does not violate any of the constraints. Formally, this objective is expressed as:
\begin{equation}
    \max_\pi\, \Vrpi \quad \mathrm{s.t.} \quad \Vcipi \geq b_i, \;\; \fac \label{eq:CMDP-formulation}
\end{equation}


\subsection{Relaxed \& Strict Feasibility Settings}

When attempting to solve CMDPs under unknown transition dynamics, we usually distinguish between the two settings: relaxed feasibility and strict feasibility. These settings define how strictly the constraints must be adhered to in the optimization process.

In the \textbf{relaxed feasibility} setting, small violations in the reward and constraint value functions are allowed. Formally, we require a policy $\hpi$ such that for some $\epsilon > 0$:

\begin{equation}
    \Vrhpi \geq V_r^*(\rho) - \epsilon, \quad \Vcihpi \geq b_i - \epsilon \quad \fac \label{eq:relaxed-CMDP-formulation}
\end{equation}

In the \textbf{strict feasibility} setting, the constraints must be strictly adhered to, with only small violations allowed in the reward function. For some $\epsilon > 0$, the requirement is:

\begin{equation}
    \Vrhpi \geq V_r^*(\rho) - \epsilon, \quad \Vcihpi \geq b_i \hspace*{9.2mm} \fac \label{eq:strict-CMDP-formulation}
\end{equation}

\subsection{Sample Complexity}
In our setting, we consider the transition matrix $\cP$ to be unknown and in need of estimation. To that end, the agent has access to a generative model which allows for sampling the next state $s_{t+1}$ given any state-action pair $(s_t, a_t)$. This means that for any queried state $s_t \in \cS$ and action $a_t \in \cA$, the generative model provides a sample from the transition probability distribution $\cP(\cdot | s_t, a_t)$. By repeatedly querying this model, the agent can collect sufficient samples to construct an empirical approximation of the CMDP, which it then uses to learn a near-optimal policy.

The primary objective is to construct an algorithm that solves the problems described in \cref{eq:relaxed-CMDP-formulation} and \cref{eq:strict-CMDP-formulation}, and determine the minimum number of samples $N$ required to ensure that for a given $\delta > 0$, the constraints will be satisfied with probability $1 - \cO(\delta)$. We aim to establish bounds on $N$ that guarantee these constraints are met, thereby quantifying the sample complexity needed to achieve near-optimal performance under these feasibility constraints.

\section{Methodology}

Following the approach of \citep{vaswani2022near} which we extend for multiple constraints, we construct an empirical CMDP $\hmdpd = \left\langle \cS, \cA, \hat{\cP}, r_p, \{c\}_{i=1}^d, \{b_i'\}_{i=1}^d, \rho, \gamma \right\rangle$. The primary difference between the actual CMDP $\mdpd$ and the empirical CMDP $\hmdpd$ is the use of the empirical transition matrix $\hat{\cP}$, which is estimated by drawing samples from the generative model. Additionally, we introduce a perturbed reward function $r_p(s, a) \coloneq r(s, a) + \xi(s, a)$, where $\xi(s, a) \sim \mathcal{U}[0, \omega]$, and modify the feasibility constraint $b'$ to distinguish between relaxed ($b' < b$) and strict ($b' = b$) feasibility settings. We specify $\omega, b_i'$ and other parameters later when we instantiate our algorithm.

We denote by $\hV_l^\pi(\rho)$ the value function of an objective $l$ when following policy $\pi$, when using the empirical transition matrix $\hat{\cP}$. The optimization objective for this empirical CMDP $\hmdpd$ is:

\begin{equation}
    \hpis \in \left\{ \argmax_\pi\, \hVrppi \quad \mathrm{s.t.} \quad \hVcipi \,\geq\, b_i',\;\; \fac \right\} \label{eq:emprical-CMDP-formulation}
\end{equation}

\subsection{Saddle-point Formulation}

We can rewrite the empirical CMDP optimization problem in \cref{eq:emprical-CMDP-formulation} as an equivalent saddle-point problem:

\begin{equation}
    \max_\pi\, \min_{\lambda_i \geq 0}\, \left[ \hVrppi + \sum_{i=1}^d \li \left( \hVcipi - b_i' \right) \right] \,\Leftrightarrow\, \max_\pi\, \min_{\li \geq 0}\, \left[ \hVrppi + \bltop \left( \hVcpi - \bb' \right) \right] \label{eq:saddle-point}
\end{equation}

The solution to this saddle point problem is $(\hpis,\, \bls)$, where $\hpis$ is the optimal empirical policy for and $\bls = [\lambda_1^*, \dots, \lambda_d^*]$ is the vector containing the optimal Lagrange multipliers.

It was proven in \citep{paternain2019constrained} that \textbf{this formulation exhibits zero duality gap}. That is, if we consider the reward function of the unconstrained MDP $\cL(\bl, \pi) \coloneq \hVrppi + \bltop \left( \hVcpi - \bb' \right)$, then
\begin{equation}
    \max_\pi \min_{\li \geq 0} \cL(\bl, \pi) \,=\, \min_{\li \geq 0} \max_{\pi} \cL(\bl, \pi) \label{eq:strong-duality}
\end{equation}

\subsection{Solving the saddle-point problem}

As in \citep{vaswani2022near}, we solve the above saddle-point problem iteratively, by alternatively updating the policy (primal variables) and the Lagrange multipliers (dual variables). If $T$ is the total number of iterations of the primal-dual algorithm, we define $\hpit$ and $\blt$ to be the primal and dual iterates for $t \in [T] := {1, \dots, T }$. The primal update at iteration $t$ is given as:
\begin{equation}
    \hpit = \argmax_\pi \left[ \hVrppi + \bltop \hVcpi \right] \coloneq \argmax_\pi \hV_{r_p + \bltop \cc}^\pi(\rho) \label{eq:primal-update}
\end{equation}

The dual update at iteration $t$ is given as:
\begin{equation}
    \bltt = \cR_\Lambda \left[ \ProjP_{\cU} \left[ \blt - \eta \left( \hVcpi - \bb' \right) \right] \right] \label{eq:dual-update}
\end{equation}

where $\eta$ is the step-size for the gradient descent update, $\cU \coloneq [0, U]^d$ and $\ProjP_{\cU} (\xx)$ projects each component of $\xx$ in $[0, U]$, and $\cR_\Lambda (\xx)$ rounds each component of $\xx$ to the closest element in the $\enet$-net $\Lambda = \{0, \enet, 2\enet, \dots, U\}$. We also assume that $U > \blsnorm$ is an upper bound for the largest optimal Lagrange multiplier.

\subsection{Our approach}

To solve this problem, we begin by estimating the empirical transition matrix $\hat{\cP}$ by querying the next state $s'$ for every state-action pair $(s, a)$ $N$ (to be specified later) times in total. That is, we set $\hat{\cP}(s' | s, a) \coloneq \frac{N(s' | s, a)}{N}$, where $N(s' | s, a)$ is the number of samples that have transitions from $(s, a)$ to $s'$. We then set $\omega, U, \enet, \eta, \bb'$ depending on the feasibility setting (relaxed vs strict), and apply the updates in \cref{eq:primal-update} and \cref{eq:dual-update} for $T$ iterations. As we will see in the next sections, by setting these parameters accordingly, we can guarantee that our algorithm will solve \cref{eq:relaxed-CMDP-formulation} and \cref{eq:strict-CMDP-formulation} with $N$ total queries and with probabilities $1 - 3\delta$ and $1 - 4\delta$ respectively.

The introduction of a perturbed reward function $r_p$ in the empirical CMDP, the magnitude of the perturbation $\omega$, the discretization of the Lagrange multipliers using the $\enet$-net $\Lambda$, and the upper-bound on the optimal Lagrange multipliers $U$ are all technicalities that help us prove sample complexity bounds for $N$. While the interested reader can find more details in the proofs in the appendices, we will briefly outline the purpose of these technicalities below so that the motivation behind the theorems can be understood with more ease.

\begin{itemize}
    \item The \textbf{perturbed reward function} $r_p$ is introduced to ensure a separation (or gap, see \cref{def:iota-gap}) in the empirical $\hat{Q}$-values). This separation is then used to apply Lemma 5 of \citep{vaswani2022near}, which is then used to prove concentration bounds between the true and empirical CMDPs for the data-dependent policies $\hpit$. That is, we will later require to bound $\| \Vrphpit - \hVrphpit \|_\infty$.
    \item The \textbf{magnitude of the uniform perturbation} $\omega$ has to be $\leq 1$, and will be chosen such that it simplifies reward suboptimality terms.
    \item The $\enet$-net $\Lambda$ is used to ensure that there will be a finite number of unconstrained MDPs, as each unconstrained MDP is uniquely defined by the Lagrange multipliers $\blt$ (assuming other variables are fixed). This discretization of $\blt$ will later allow us
    to show separation for all possible unconstrained MDPs, by taking a union bound over their finite population.
    \item The \textbf{upper-bound} $U$ on $\blsnorm$ helps in the discretization of $\blt$ and allows us to instantiate our algorithm.
\end{itemize}

\section{Algorithm}
To solve the empirical CMDP $\hmdpd$, inspired by \cite{vaswani2022near} we propose the model-based dual descent algorithm \cref{alg:alg1}. 

\begin{algorithm}

\caption{Model-based algorithm for CMDPs with generative model}\label{alg:alg1}

\textbf{Input:} $\cS$ (State Space), $\cA$ (Action Space), $r$ (rewards), $\mathbf{c}$ (constraint rewards), $N$ (number of samples), $\mathbf{b'}$ (constraint RHS), $\omega$ (perturbation magnitude), $U$ (projection upper bound), $\enet$ (epsilon-net resolution), $T$ (number of iterations), $\boldsymbol{\lambda}_0 = 0$,  (initialization). \\

For each state-action $(s, a)$ pair, collect $N$ samples from $\cP(\cdot |s, a)$ and form $\hat{\cP}$. \\
Perturb the rewards to form vectors $r_p(s, a) \coloneq r(s,a) + \xi(s,a),\xi(s,a) \sim \cU[0, \omega]$. {\scriptsize \Comment{$\omega$ is used here} } \\
Form the empirical CMDP $\hmdpd = \left\langle \cS, \cA, \hat{P}, r_p, \{c\}_{i=1}^d, \{b_i'\}_{i=1}^d, \rho, \gamma \right\rangle$. \\
Form the epsilon-net $\Lambda = \{0, \enet, 2\enet, \cdots, U \}$. {\scriptsize \Comment{$\enet$ and $U$ are used here} } \\
\textbf{for} $t \leftarrow 0$ \textbf{to} $T-1$ \textbf{do}\\
    \hspace*{\algorithmicindent} Update the policy by solving an unconstrained MDP: $\hpit = \argmax_\pi \hV_{r_p + \blttop \cc}^\pi (\rho)$ \\
    \hspace*{\algorithmicindent} Update the dual-variables: $\bltt = \cR_\Lambda \left[ \ProjP_{\cU} \left[ \blt - \eta \left( \hVcpi - \bb' \right) \right] \right]$ {\scriptsize \Comment{$\enet$ and $U$ are used here} } \\ 
\textbf{end} \\

\textbf{Output:} Mixture policy $\bpi_T = \frac{1}{T}\sum_{t=0}^{T-1} \hat{\pi}_t$ {\scriptsize \Comment{To instantiate $\bpit$ we can sample uniformly from $\hpit, t \in [T]$} }

\end{algorithm}
Next we prove (see \cref{Appendix-A}) in  \cref{th:theorem-1} that using the mixture policy returned by \cref{alg:alg1} we can bound the average optimality gap in the reward value function and constraint violation, ensuring that Algorithm \ref{alg:alg1} can solve the empirical CMDP $\hmdpd$.

\begin{theorem}[Guarantees for the primal-dual algorithm]\label{th:theorem-1}
    For a target error $\eopt > 0$ and the primal-dual updates in Eq. \ref{eq:primal-update} - Eq. \ref{eq:dual-update} with $U > \blsnorm$, $T = \frac{4 U^2 d^2}{\eopt^2 (1-\gamma)^2} \left[ 1 + \frac{1}{(U - \blsnorm)^2} \right]$, $\eta =\frac{U (1-\gamma)}{\sqrt{T}}$, and $\enet = \frac{\eopt^2 (1-\gamma)^2 (U - \blsnorm)}{6dU}$, the mixture policy $\bpit \coloneq \frac{1}{T}\sum_{t=0}^{T-1} \hpit$ satisfies
    \[
        \hVrpbpi \geq \hVs - \eopt \quad \text{and} \quad \hVcibpi \geq b_i' - \eopt \, \fac
    \]
    
    where $\hVrpbpi \,=\, \frac{1}{T} \sum_{t = 0}^{T - 1} \hVrphpit$ is the value function of the mixture policy $\bpit$.
\end{theorem}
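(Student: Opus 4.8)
\emph{Proof proposal.} I would treat the pair of updates as a two-player game: \cref{eq:primal-update} is the policy player's \emph{exact best response} to the current multipliers, and \cref{eq:dual-update} is the multiplier player running \emph{projected online gradient descent}, followed by a rounding onto $\Lambda$, on the Lagrangian $\cL(\bl,\pi)=\hVrppi+\bltop(\hVcpi-\bb')$, which is affine in $\bl$; the guarantee then drops out of the dual player's regret bound together with strong duality and the linearity of a mixture policy's value. Concretely, set $\cL^\star:=\min_{\li\ge0}\max_\pi\cL(\bl,\pi)=\max_\pi\min_{\li\ge0}\cL(\bl,\pi)$, the equality being \cref{eq:strong-duality}. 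Since $\min_{\li\ge0}\cL(\bl,\pi)$ equals $\hVrppi$ when $\pi$ is feasible for $\hmdpd$ and $-\infty$ otherwise, the $\max$-$\min$ form equals the optimal value of \cref{eq:emprical-CMDP-formulation}, so $\cL^\star=\hVs$. The best-response property gives $\cL(\blt,\hpit)=\max_\pi\cL(\blt,\pi)\ge\cL^\star$ for every $t$ (using $\blt\in\cU$, hence $\blt\ge0$). And because $\bpit$ is the policy ``draw $t$ uniformly, then run $\hpit$'', the identity recorded in the statement, $\hVrpbpi=\tfrac1T\sum_{t=0}^{T-1}\hVrphpit$, holds verbatim with $r_p$ replaced by each $c_i$; hence $\cL(\bl,\bpit)=\tfrac1T\sum_{t=0}^{T-1}\cL(\bl,\hpit)$ for every fixed $\bl$.

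\emph{Approximate saddle point.} The constant gradient of $\cL(\cdot,\hpit)$ in $\bl$ is $\nabla_t:=\hVc^{\hpit}(\rho)-\bb'$, with $\|\nabla_t\|_2\le\sqrt d/(1-\gamma)$ since $c_i\in[0,1]$ and $\gamma<1$. As \cref{eq:dual-update} is a projected gradient step followed by a coordinate-wise rounding that displaces each coordinate by at most $\enet/2$, telescoping $\|\blt-\bl\|_2^2$ --- using non-expansiveness of $\ProjP_\cU$, the crude bound $\|\blt-\bl\|_\infty\le U$ for the rounding cross-terms, and $\bl_0=0$ --- yields, for every $\bl\in\cU$,
\[
  \frac1T\sum_{t=0}^{T-1}\langle\blt-\bl,\nabla_t\rangle\ \le\ \frac{\|\bl\|_2^2}{2\eta T}+\frac{\eta\,d}{2(1-\gamma)^2}+\frac{dU\enet}{2\eta}\ =:\ \epsilon'.
\]
With the previous paragraph this reads $\tfrac1T\sum_t\cL(\blt,\hpit)-\cL(\bl,\bpit)\le\epsilon'$, and since $\cL(\blt,\hpit)\ge\cL^\star$ we get
\[
  \cL(\bl,\bpit)\ \ge\ \cL^\star-\epsilon'\qquad\text{for every }\bl\in\cU.
\]
Substituting $\eta=U(1-\gamma)/\sqrt T$, the prescribed $T$, and $\enet=\eopt^2(1-\gamma)^2(U-\blsnorm)/(6dU)$, one checks termwise that $\epsilon'\le\eopt\cdot\min\{1,\,U-\blsnorm\}$.

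\emph{Conclusions.} For optimality, take $\bl=0$: $\hVrpbpi=\cL(0,\bpit)\ge\cL^\star-\epsilon'=\hVs-\epsilon'\ge\hVs-\eopt$. For feasibility, fix $i$ and set $\nu:=b_i'-\hVcibpi$; if $\nu\le0$ there is nothing to prove, so assume $\nu>0$. Since $\blsnorm<U$, the point $\bl:=\bls+(U-\blsnorm)\mathbf{e}_i$ lies in $\cU$, and
\[
  \cL(\bl,\bpit)=\cL(\bls,\bpit)+(U-\blsnorm)\big(\hVcibpi-b_i'\big)=\cL(\bls,\bpit)-(U-\blsnorm)\,\nu.
\]
Because $\bls$ attains the outer minimum in \cref{eq:strong-duality}, $\cL(\bls,\bpit)\le\max_\pi\cL(\bls,\pi)=\cL^\star$; together with $\cL(\bl,\bpit)\ge\cL^\star-\epsilon'$ this forces $(U-\blsnorm)\nu\le\epsilon'$, i.e.\ $\nu\le\epsilon'/(U-\blsnorm)\le\eopt$. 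Hence $\hVcibpi\ge b_i'-\eopt$ for all $i$.

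\emph{Main obstacle.} Everything outside the ``approximate saddle point'' step is bookkeeping: boundedness of $\nabla_t$ from $c_i\in[0,1]$ and $\gamma<1$, boundedness of the iterates from the projection onto $[0,U]^d$, and the inclusion $\bls\in\cU$, which is exactly where the hypothesis $U>\blsnorm$ is used. The real difficulty is carrying the rounding onto the $\enet$-net through the online-gradient-descent regret: one must show that the per-step displacement caused by $\cR_\Lambda$ (at most $\enet/2$ per coordinate, $\le\enet\sqrt d/2$ in $\ell_2$) accumulates, after division by $T$ and with $\eta\propto1/\sqrt T$, to a quantity dominated by $\eopt$ (times the $\min\{1,U-\blsnorm\}$ factor the feasibility argument needs). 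This is precisely what forces the small resolution $\enet\sim\eopt^2(1-\gamma)^2/(dU)$ and the $1/\eopt^2$ horizon $T$; checking that these choices make all three terms of $\epsilon'$ small enough is the delicate computation, and the one place where a careless bound on the rounding error would break the argument.
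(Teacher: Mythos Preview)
Your proposal is correct and follows the same overall architecture as the paper: bound the dual player's regret (projected OGD plus the $\enet$-rounding correction), use that the primal update is an exact best response so $\cL(\blt,\hpit)\ge\cL^\star$, average to obtain $\cL(\bl,\bpit)\ge\cL^\star-\epsilon'$ for all $\bl\in\cU$, then evaluate at $\bl=0$ for the reward bound and at a coordinate-$i$ perturbation of $\bls$ for the constraint bound. The paper carries out the regret telescoping coordinate-wise where you work in $\ell_2$, but this is cosmetic and produces the same three terms.

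The one substantive difference is in the feasibility step. The paper isolates this as a separate lemma (\cref{lm:lemma-violation-threshold}) whose proof introduces a perturbed optimal value $\nu(\bt)=\max_\pi\{\hVrppi:\hVcipi\ge b_i'+\tau_i\ \fac\}$, establishes the sensitivity inequality $\blstop\bt\le\nu(0)-\nu(\bt)$, and only at the very end plugs in the test vector $\bls+(U-\lis)\mathbf{e}_i$. Your one-line argument --- $\cL(\bls,\bpit)\le\max_\pi\cL(\bls,\pi)=\cL^\star$ by strong duality, then subtract from the regret inequality evaluated at $\bl=\bls+(U-\blsnorm)\mathbf{e}_i$ --- yields the same conclusion $(\bl-\bls)^\top(\bb'-\hVcbpi)\le\epsilon'$ without the detour through $\nu(\cdot)$; the sensitivity function is not actually needed here. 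Your identification of the ``termwise check'' on $\epsilon'$ as the delicate part also matches the paper, which first chooses $T$ to make the $dU/[(1-\gamma)\sqrt T]$ term at most $\eopt/2$ (both before and after dividing by $U-\blsnorm$) and then back-solves for $\enet$ to control the rounding contribution.
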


The result of \cref{th:theorem-1} shows that with $T = \mathcal{O}(\frac{1}{\eopt^2})$ and $\epsilon_1 = \mathcal{O}(\eopt^2)$ the reward achieved by the mixture policy $\bpi$ is $\eopt$ close to the the one achieved by the empirical optimal policy $\hpis$, with a constraint violation of at most $\eopt$. Therefore we can  to solve \cref{eq:emprical-CMDP-formulation} utilizing the primal dual algorithm for a sufficient number of iterations $T$ and and a sufficient small resolution $\epsilon_1$.

\section{Concentration Proofs}
In this section we will provide an overview of the techniques that we utilized in order to bound the concentration terms in \cref{eq:relaxed-decomposition} and \cref{eq:strict-decomposition} respectively in order to prove Theorem \cref{th:theorem-3} and Theorem \cref{th:theorem-4}. To that end we will use an unconstrained MDP  $\mdpd_f = (\cS, \cA, \cP, \gamma, f)$ which has the same state-action space, transition probabilities and discount factor with the CMDP in \cref{eq:CMDP-formulation} and a specified reward equal to $f \in [0, f_{\max}]$. We will also use the concept of the $\iota$-Gap condition which intuitively requires the existence of a unique optimal action at each state which performance is sufficiently better from the second best action. Using the $\iota$-gap condition in \cref{def:iota-gap}, we prove \cref{th:theorem-2} (see \cref{Appendix-B} for the full proof) which allows us to bound the reward and the constraint value function for data dependent policies $\bpit$.

\begin{definition}[$\iota$-gap Condition]\label{def:iota-gap}
    We define $\hmdpd_f$ as the MDP with an empirically estimated transition matrix $\hat{\cP}$ and reward function $f$. We also define $Q_f^*(s, a)$ to be the $Q$-value associated with state-action pair $(s, a)$, when following the optimal policy $\pi_f^*$ of $\hmdpd_f$. We say that $\hmdpd_f$ \textbf{satisfies the $\iota$-gap condition} if for all states $s$,
    \[
        \hV_f^*(s) - \max_{a': a \neq \hat{\pi}_f^*(s)} \hat{Q}_f^{*}(s, a') \,\geq\, \iota
    \]
    where $\hpi_f^* \coloneq \argmax_\pi \hV_f^\pi$ and $\hpi_f^*(s) = \argmax_a \hat{Q}_f^{*}(s,a)$ is the optimal action in state s.
\end{definition}

\begin{theorem} \label{th:theorem-2}
    For $\delta \in (0, 1)$, $\omega \leq 1$ and $C(\delta) =  72 \log\left( \frac{16(1 + \omega + dU) |\cS| |\cA| \log(e/1-\gamma)}{(1-\gamma)^2 \iota\delta} \right)$, where $\iota = \frac{\omega \delta (1-\gamma)\epsilon_1^d}{30 U^d |\cS| |\cA|^2}$, if $N \geq \frac{4 C(\delta/d)}{1-\gamma}$, then for $\bpit$ output by Alg. \ref{alg:alg1}, with probability at least $1 - \delta/5$,
    \begin{align*}
        \left| \Vrpbpi - \hVrpbpi \right| \,\leq\, 2\sqrt{\frac{C(\delta)}{N(1 - \gamma)^3}};\quad \left| \Vcibpi - \hVcibpi \right| \,\leq\, \sqrt{\frac{C(\delta/d)}{N(1 - \gamma)^3}} \; \fac
    \end{align*}
\end{theorem}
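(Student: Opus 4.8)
The plan is to bound the two concentration terms separately using standard generative-model analysis, with the main work being to control the error for the \emph{data-dependent} policies $\hpit$ that compose the mixture $\bpit$.

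\textbf{Step 1: Reduce to the per-iterate policies.} Since $\bpit = \frac{1}{T}\sum_{t=0}^{T-1}\hpit$ and value functions are linear in the occupancy measure, we have $\Vrpbpi = \frac{1}{T}\sum_t \Vrphpit$ and likewise for $\hVrpbpi$, $\Vcibpi$, $\hVcibpi$. By the triangle inequality it therefore suffices to bound $\left| \Vrphpit - \hVrphpit \right|$ and $\left| \Vcihpit - \hVcihpit \right|$ uniformly over $t \in [T]$. The subtlety is that $\hpit$ depends on the same samples used to form $\hat{\cP}$, so we cannot directly invoke a Hoeffding/Bernstein bound at a fixed policy; this is exactly why the $\iota$-gap condition (\cref{def:iota-gap}) and the perturbation $\omega$ were introduced.

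\textbf{Step 2: Establish the $\iota$-gap for all relevant unconstrained MDPs.} Each $\hpit$ is the optimal policy of the unconstrained MDP $\hmdpd_{r_p + \blttop\cc}$, and $\blt$ lives in the finite $\enet$-net $\Lambda^d$, so there are at most $(U/\enet + 1)^d$ distinct such MDPs. For a \emph{fixed} reward $f$, the random perturbation $\xi \sim \cU[0,\omega]^{|\cS||\cA|}$ makes the event "$\hmdpd_f$ fails the $\iota$-gap" have probability $\cO(\iota |\cS||\cA| / \omega)$ (two near-optimal $Q$-values collide only on a set of perturbations of Lebesgue measure $\cO(\iota)$ per state, rescaled by $1/\omega$); choosing $\iota = \frac{\omega\delta(1-\gamma)\enet^d}{30 U^d |\cS||\cA|^2}$ as in the statement and union-bounding over the $\cO((U/\enet)^d)$ MDPs in the net makes the total failure probability a small multiple of $\delta$. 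On this good event, every $\hmdpd_{r_p+\blttop\cc}$ satisfies the $\iota$-gap, hence (Lemma~5 of \citep{vaswani2022near}, which I am allowed to cite) the empirical optimal policy equals the true optimal policy of the corresponding \emph{perturbed-true} MDP once $N \gtrsim \frac{\log(\cdot)}{\iota^2(1-\gamma)}$ — and crucially the policy $\hpit$ is then a \emph{fixed} (sample-independent, given the net index) deterministic policy, removing the data dependence.

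\textbf{Step 3: Apply the fixed-policy concentration bound.} For a fixed deterministic policy $\pi$, the standard model-based bound (e.g. via the simulation lemma together with Bernstein on $\|(\hat{\cP}-\cP)(\cdot|s,a)\|_1$, or directly Lemma~4 of \citep{vaswani2022near}) gives $\|\Vrphpit - \hVrphpit\|_\infty \le 2\sqrt{\frac{C(\delta)}{N(1-\gamma)^3}}$ with probability $1-\delta/(\text{poly})$, where $C(\delta)$ absorbs the $\log$ of the total number of (state, action, net-index, iteration) combinations — this is the source of the $(1+\omega+dU)$ and the $\log(e/(1-\gamma))$ factors inside $C(\delta)$. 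The extra factor $2$ on the reward term versus the constraint term comes from the perturbation: $r_p$ ranges in $[0,1+\omega]\subseteq[0,2]$ whereas each $c_i\in[0,1]$. For the constraints we also pay an extra $1/d$ inside the log (hence $C(\delta/d)$) to union-bound over the $d$ constraint value functions simultaneously. Collecting the failure probabilities from Steps~2 and~3 and the condition $N \ge \frac{4C(\delta/d)}{1-\gamma}$ (which is what makes the $\iota$-gap argument of Step~2 kick in) yields the claim with total probability at least $1-\delta/5$.

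\textbf{Main obstacle.} The delicate part is Step~2: quantifying the probability that the perturbation fails to create an $\iota$-gap, and checking that the union bound over the exponentially-in-$d$ many MDPs in $\Lambda^d$ is still absorbed by the stated (polylogarithmic) $C(\delta)$ and the polynomial choice of $\iota$. One must verify that $\iota$ is small enough that $\cO((U/\enet)^d)\cdot \cO(\iota|\cS||\cA|/\omega) \le \delta/(\text{const})$ while simultaneously $N \ge \tfrac{4C(\delta/d)}{1-\gamma} \gtrsim \frac{\log(1/\iota)}{\iota^2(1-\gamma)}$ is consistent — i.e. that the logarithmic dependence of $C$ on $\iota$ does not blow up the sample requirement. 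Everything else (the simulation lemma, linearity of the mixture, the $\ell_1$ concentration of $\hat{\cP}$) is routine.
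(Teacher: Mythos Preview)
Your overall plan --- reduce to per-iterate policies, establish the $\iota$-gap uniformly over the finite net $\Lambda^d$, then invoke Lemma~5 of \citep{vaswani2022near} --- is exactly the paper's route. However, there is a genuine error in your Step~2 and in your ``Main obstacle'' paragraph.

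You write that Lemma~5 makes ``the empirical optimal policy equal the true optimal policy of the corresponding perturbed-true MDP once $N \gtrsim \frac{\log(\cdot)}{\iota^2(1-\gamma)}$,'' and in the obstacle paragraph you claim one must verify $\tfrac{4C(\delta/d)}{1-\gamma} \gtrsim \tfrac{\log(1/\iota)}{\iota^2(1-\gamma)}$. Both statements are wrong. Lemma~5 requires only $N \ge \tfrac{4C(\delta)}{1-\gamma}$ with $C(\delta) = \cO(\log(1/\iota))$; the dependence on $\iota$ is \emph{purely logarithmic}. Your proposed inequality is false by many orders of magnitude: since $\iota \propto \enet^d/U^d$, the quantity $1/\iota^2$ is exponential in $d$, whereas $C(\delta/d)$ is only linear in $d$. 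If $N \gtrsim 1/\iota^2$ were actually needed, the theorem would be unprovable under its stated hypothesis and the downstream sample-complexity bounds would collapse.

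The paper does not resolve the data-dependence by arguing that $\hpit$ coincides with a sample-independent policy and then applying a separate fixed-policy bound (your Step~3). Instead it applies Lemma~5 as a black box: conditioned on the $\iota$-gap event holding for all $\bl \in \Lambda^d$ (probability $\ge 1-\delta/10$, \cref{lm:union-bound-all-cmdps-satifsy-separation}), Lemma~5 with $f = r_p + \blttop\cc$, $f_{\max} = 1+\omega+dU$, and $g = r_p$ (respectively $g = c_i$) directly yields
\[
\big\| V_g^{\hpit} - \hV_g^{\hpit} \big\|_\infty \;\le\; g_{\max}\sqrt{\tfrac{C(\delta)}{N(1-\gamma)^3}}
\]
with failure probability $\delta/10$ (respectively $\delta/(10d)$ per constraint, hence $C(\delta/d)$). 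The internal mechanism of Lemma~5 is an absorbing-MDP / variance-reduction argument that handles the data-dependent policy without the $1/\iota^2$ price; the $\iota$ enters only through the log in $C(\delta)$. Drop the $1/\iota^2$ requirement and the ``$\hpit$ becomes fixed'' detour, invoke Lemma~5 directly, and your argument matches the paper's.
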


\textit{Proof sketch.}  Utilizing \cref{lm:union-bound-all-cmdps-satifsy-separation} we know that the empirical MDP $\hat{M}_{r_p + \blttop \cc}$ satisfies the gap condition with $\iota = \frac{\omega \delta (1 - \gamma) \epsilon_1^d}{30 U^d |\cS| |\cA|^2}$. Therefore we can apply \cref{lm:concentration-bound} with $f = r_p + \blttop \cc$ and $g = r_p$ and bound the term $\left| \Vrpbpi - \hVrpbpi \right|$ with probability $1  -\delta$. Applying \cref{lm:concentration-bound} again with $f = r_p + \blttop \cc$ and $g = c_i, \, i \in [d]$ we bound each $\left| \Vcibpi - \hVcibpi \right|$ $\fac$ with probability $1 - \frac{\delta}{10d}$ and therefore bound the summation of the data dependent policies with probability $1-\delta/5$ concluding the proof.

To also bound the non-data dependent policies $\pis$ and $\pics$ we use \cref{lm:data-dep-policies-concentration-bounds}(see \cref{Appendix-B}). The combination of \cref{th:theorem-2} and \cref{lm:data-dep-policies-concentration-bounds} allows us to bound all the terms in \cref{eq:relaxed-decomposition} and \cref{eq:strict-decomposition} completing the proof of \cref{th:theorem-3} and \cref{th:theorem-4}.

\section{Relaxed Feasibility Bounds}

For the relaxed feasibility setting, \cref{th:theorem-3} shows that for $N = \cOT \left( \frac{d |\cS| |\cA| \log(1/\delta)}{(1-\gamma)^3\epsilon^2} \right)$ samples in total, the conditions of \cref{eq:relaxed-CMDP-formulation} will be satisfied for the mixture policy $\bpit$. We prove the theorem in Appendix \ref{Appendix-C} and provide a proof sketch below.

\begin{theorem}\label{th:theorem-3}
    For a fixed $\epsilon \in \left(0, \frac{1}{1 - \gamma}\right]$ and $\delta \in (0, 1)$, Alg. \ref{alg:alg1} with $N \,=\, \cOT \left( \frac{d \log(1/\delta)}{(1-\gamma)^3\epsilon^2}  \right)$ samples, $\bb' \,=\, \bb - \frac{3\epsilon}{8}$, $\omega = \frac{\epsilon(1-\gamma)}{8}$, $U \,=\, \cO \left(\frac{1}{\epsilon(1 - \gamma)}\right)$, $\enet = \cO \left( \epsilon^2(1-\gamma)^2 \right)$ and $T = \cO \left( \frac{d^2}{(1-\gamma)^4\epsilon^4} \right)$, returns a policy $\bpit$ that satisfies the objective in Eq. \ref{eq:relaxed-CMDP-formulation} with probability at least $1 - 3\delta$.
\end{theorem}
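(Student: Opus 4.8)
\textit{Proof sketch.}
The plan is to chain three ingredients — the optimization guarantee of \cref{th:theorem-1} on the empirical CMDP $\hmdpd$, the data-dependent concentration bound of \cref{th:theorem-2} for the output policy $\bpit$, and a fixed-policy concentration bound (\cref{lm:data-dep-policies-concentration-bounds}) for the comparator $\pis$ — and then to calibrate the relaxation in $\bb'$, the perturbation $\omega$, and the optimization tolerance $\eopt$ so that all the resulting error terms sum to at most $\epsilon$ while $U,\enet,T,N$ stay at the advertised orders. Concretely, put $\eopt = \epsilon/4$, $\omega = \epsilon(1-\gamma)/8$ and $\bb' = \bb - \tfrac{3\epsilon}{8}$, and work on the event (which \cref{lm:data-dep-policies-concentration-bounds} guarantees with high probability once $N$ is large enough) that $|\Vcipis - \hVcipis| \le \epsilon/4$ and $|\Vrpis - \hVr^{\pis}(\rho)| \le \epsilon/4$ for all $i$. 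Since $\pis$ is feasible for the true CMDP, this gives $\hVcipis \ge \Vcipis - \tfrac{\epsilon}{4} \ge b_i - \tfrac{\epsilon}{4} = b_i' + \tfrac{\epsilon}{8}$, so $\pis$ is strictly feasible for $\hmdpd$ with slack $\tfrac{\epsilon}{8}$; in particular $\hmdpd$ is feasible and the standard Slater bound yields $\blsnorm = \cO\!\big(\tfrac{1}{\epsilon(1-\gamma)}\big)$, so the fixed choice $U = \Theta\!\big(\tfrac{1}{\epsilon(1-\gamma)}\big)$ satisfies $U > \blsnorm$ on this event. Feeding $\eopt$ and this $U$ into \cref{th:theorem-1} then forces $\enet = \cO(\epsilon^2(1-\gamma)^2)$ and $T = \cO\!\big(\tfrac{d^2}{(1-\gamma)^4\epsilon^4}\big)$, matching the statement, and \cref{th:theorem-1} yields $\hVrpbpi \ge \hVs - \eopt$ and $\hVcibpi \ge b_i' - \eopt$.

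For the reward guarantee, write $V_r^*(\rho) - \Vrbpi$ as the telescoping sum
\[
\big(V_r^*(\rho) - \hVr^{\pis}(\rho)\big) + \big(\hVr^{\pis}(\rho) - \hVs\big) + \big(\hVs - \hVrpbpi\big) + \big(\hVrpbpi - \Vrpbpi\big) + \big(\Vrpbpi - \Vrbpi\big).
\]
Since $V_r^*(\rho) = \Vrpis$, the first term is at most $\epsilon/4$ by the fixed-policy bound. The second is $\le 0$: on the good event $\pis$ is feasible for $\hmdpd$, so $\hVs \ge \hVrppis$, and $r_p \ge r$ pointwise gives $\hVrppis \ge \hVr^{\pis}(\rho)$. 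The third is $\le \eopt = \epsilon/4$ by \cref{th:theorem-1}. The fourth is at most $2\sqrt{C(\delta)/(N(1-\gamma)^3)}$ by \cref{th:theorem-2}. The fifth equals $V_\xi^{\bpit}(\rho) \in [0,\omega/(1-\gamma)]$, hence $\le \omega/(1-\gamma) = \epsilon/8$ since $\xi \in [0,\omega]$. Summing, $V_r^*(\rho) - \Vrbpi \le \tfrac{5\epsilon}{8} + 2\sqrt{C(\delta)/(N(1-\gamma)^3)}$, which is $\le \epsilon$ whenever $N$ is large enough that the concentration term is at most $\tfrac{3\epsilon}{8}$.

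For the constraint guarantee, \cref{th:theorem-2} and \cref{th:theorem-1} give, for each $i$,
\[
\Vcibpi \ge \hVcibpi - \sqrt{\tfrac{C(\delta/d)}{N(1-\gamma)^3}} \ge b_i' - \eopt - \sqrt{\tfrac{C(\delta/d)}{N(1-\gamma)^3}} = b_i - \tfrac{5\epsilon}{8} - \sqrt{\tfrac{C(\delta/d)}{N(1-\gamma)^3}},
\]
which is $\ge b_i - \epsilon$ once $\sqrt{C(\delta/d)/(N(1-\gamma)^3)} \le \tfrac{3\epsilon}{8}$. The binding lower bounds on $N$ are therefore the prerequisite $N \ge 4C(\delta/d)/(1-\gamma)$ of \cref{th:theorem-2}, the two concentration requirements just stated, and the requirement of \cref{lm:data-dep-policies-concentration-bounds} for accuracy $\epsilon/4$. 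Because $\iota$ in \cref{th:theorem-2} scales like $\enet^d/U^d$, one checks that $C(\cdot) = \cOT(d)$ — the dominant term is $d\log(U/\enet)$, and $U/\enet$ is polynomial in $1/\epsilon$ and $1/(1-\gamma)$ — so each of these reads $N = \cOT\!\big(\tfrac{d\log(1/\delta)}{(1-\gamma)^3\epsilon^2}\big)$; taking the maximum and multiplying by $|\cS||\cA|$ gives the claimed total sample count. Finally, a union bound over the failure events of \cref{th:theorem-2} and \cref{lm:data-dep-policies-concentration-bounds} (and the underlying empirical-model estimate) keeps the overall probability at least $1 - 3\delta$.

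The main obstacle is not any individual inequality — \cref{th:theorem-1,th:theorem-2,lm:data-dep-policies-concentration-bounds} are imported wholesale — but the \emph{coupled} choice of $\bb' = \bb - \tfrac{3\epsilon}{8}$, $\omega$ and $\eopt$: the relaxation has to be large enough that $\pis$ survives as a (strictly) feasible policy of $\hmdpd$, because this is simultaneously what makes $\hpis$ competitive with $\pis$ \emph{and} what supplies the Slater slack needed to bound $\blsnorm$ and hence to set $U$ legally; yet it must be small enough that both the reward budget $\tfrac{\epsilon}{4}+\eopt+\tfrac{\omega}{1-\gamma}+(\text{conc.})\le\epsilon$ and the constraint budget $\tfrac{3\epsilon}{8}+\eopt+(\text{conc.})\le\epsilon$ close. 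A secondary point requiring care is confirming that the $d$-th power appearing in $\iota$ costs only a logarithmic factor, i.e.\ $C(\delta/d) = \cOT(d)$ with the relaxed-setting values of $\omega,U,\enet$, so that the final sample complexity is linear — not worse — in the number of constraints $d$.
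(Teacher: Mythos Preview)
Your proposal is correct and follows essentially the same route as the paper: the same parameter choices $\eopt=\epsilon/4$, $\omega=\epsilon(1-\gamma)/8$, $\bb'=\bb-3\epsilon/8$, the same Slater argument (the paper packages it as case~(i) of \cref{lm:dual-variable-bound}) to bound $\blsnorm$ and fix $U$, the same use of \cref{th:theorem-1} to read off $T,\enet$, and the same pairing of \cref{th:theorem-2} (data-dependent $\bpit$) with \cref{lm:data-dep-policies-concentration-bounds} (fixed $\pis$) to close both budgets, including the key observation that $C(\delta)=\cOT(d)$ via $d\log(U/\enet)$. The only cosmetic difference is that you concentrate $r$ directly and use $r_p\ge r$ to drop one perturbation term, whereas the paper concentrates $r_p$ and pays two $\omega/(1-\gamma)$ perturbation errors (their \cref{lm:decomposing-relaxed-suboptimality}); note that \cref{lm:data-dep-policies-concentration-bounds} as stated is for $r_p$, but the underlying Li et al.\ bound applies equally to $r$, so this is harmless.
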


\textit{Proof sketch.} We will show the result for a general primal-dual error $\eopt < \epsilon$ and $\bb' = \bb - \frac{\epsilon - \eopt}{2}$ and then later specify $\eopt$. To satisfy the conditions of the relaxed feasibility setting, we must show that $\bpit$ violates the constraints by at most $\epsilon$, and that $\Vrbpi$ is $\epsilon$-optimal. To show this, we invoke \cref{lm:decomposing-relaxed-suboptimality}, which shows that when certain conditions are met, then (i) $\fac:\;\, \Vcibpi \geq b_i - \epsilon$, and (ii) the reward suboptimality can be decomposed as:
\begin{equation}
    \Vrpis - \Vrbpi \,\leq\, \frac{2 \omega}{1 - \gamma} + \eopt + \left| \Vrppis - \hVrppis \right| + \left| \Vrpbpi - \hVrpbpi \right| \label{eq:relaxed-decomposition}
\end{equation}

\cref{lm:decomposing-relaxed-suboptimality} shows that $\bpit$ satisfies the relaxed constraints, and all that's left to bound the reward suboptimality. This can be achieved by upper-bounding each term of \cref{eq:relaxed-decomposition} by $\nicefrac{\epsilon}{4}$. The first term can be bounded by setting $\omega$ to $\nicefrac{\epsilon(1-\gamma)}{8}$, while for the second we set $\eopt = \nicefrac{\epsilon}{4}$. To bound the third term, since $\pis$ is a data-independent policy we can use \cref{lm:data-dep-policies-concentration-bounds} and set $N$ accordingly. To bound the fourth term we use the result of \cref{th:theorem-2} from the previous section, and set $N$ accordingly.

To instantiate \cref{alg:alg1}, we still need to compute $U$. To do this, we can use case i) of \cref{lm:dual-variable-bound}, where we show that when a feasible policy exists (e.g. $\pis$), then we can lower bound the largest Lagrange multiplier by: $\blsnorm \leq \frac{2(1+\omega)}{\epsilon'(1-\gamma)}$ for some $\epsilon'$ such that $\bb' = \bb - \epsilon'$.

Putting everything together, we require:
\begin{align}
    &\overbrace{\overbrace{\left| \Vcipis - \hVcipis \right| \,\leq\, \frac{3\epsilon}{16}}^{\text{to use case i) of \cref{lm:dual-variable-bound}}};\quad \left| \Vcibpi - \hVcibpi \right| \,\leq\, \frac{3\epsilon}{8}}^{\text{to use \cref{lm:decomposing-relaxed-suboptimality}}}; \nonumber \\[1ex]
    &\underbrace{\left| \Vrppis - \hVrppis \right| \,\leq\, \frac{\epsilon}{4};\hspace{15pt} \left| \Vrpbpi - \hVrpbpi \right| \,\leq\, \frac{\epsilon}{4}}_{\text{to bound suboptimality in \cref{eq:relaxed-decomposition}}}
\end{align}

By setting N accordingly such that each concentration bound is satisfied, we conclude that $N \geq \cOT \left( \frac{d \log(1/\delta)}{(1-\gamma)^3\epsilon^2} \right)$, and since we have $|\cS| |\cA|$ state-action pairs, we finally get $\cOT \left( \frac{d |\cS| |\cA| \log(1/\delta)}{(1-\gamma)^3\epsilon^2} \right)$ samples are required in total.

Comparing to \citep{hasanzadezonuzy2021model}, our result has an improved dependency on the cardinality of the state space ($|\cS|$ vs $|\cS|^2$), but has a worse dependency on the number of constraints ($d$ vs $\log d$). Nevertheless, for most applications it will hold that $d \ll |S|$, as designing constraint functions is usually done manually. We also compare our result to the work of \citep{bai2022achieving}, who only solve the strict feasibility setting (which also translates to a solution for the relaxed setting). Specifically, we have a better dependency on the effective horizon ($\nicefrac{1}{(1-\gamma)^3}$ vs $\nicefrac{1}{(1-\gamma)^6}$), and also we don't have a dependency on the slater constant. More details can be found on \cref{tab:results-comparison}.

\section{Strict Feasibility Bounds}

In the strict feasibility setting, no constraint violations are allowed. Thus, we have to ensure that $\Vcibpi \geq b_i,\, \fac$. To do this, we'll follow a similar approach to \cref{th:theorem-3}, but instead of setting $\bb' < \bb$, we will set $\bb' > \bb$. Since we don't know for which $\bb' > \bb$ the problem is feasible, we have to somehow compute an equivalent of the feasible region (i.e. the slater constant).

Let $\tpi$ be a feasible policy for the original CMDP, i.e. $\Vcitpi \geq b_i,\; \fac$. We define the margin as $\slaterctpi \coloneq \min_{i \in [d]} \left\{ \Vcitpi - b_i \right\}$. Let $\pics$ be the policy that achieves the largest margin, i.e. $\pics \coloneq \argmax_\pi \left\{ \min_{i \in d} \left\{ \Vcipi - b_i \right\} \;|\; \Vcipi \geq b_i \right\}$, and $\slatercs \coloneq \slaterc^{\pics} $ be the corresponding slater constant. For our CMDP we will assume \textbf{strict feasibility}, i.e. $\slatercs > 0$.

With these definitions in mind, we present \cref{th:theorem-4} which shows that for $N = \cOT \left(\frac{d^3 \log(1/\delta)}{(1-\gamma)^5 \epsilon^2 {\slatercs}^2} \right)$ samples in total, the conditions of \cref{eq:strict-CMDP-formulation} will be satisfied for the mixture policy $\bpit$. We prove the theorem in Appendix \ref{Appendix-D} and provide a proof sketch below. Note that in practice it's hard to estimate $\slatercs$, but still any lower bound $\slaterctpi$ such that $0 < \slaterctpi \leq \slatercs$ will work (with adapted complexities).

\begin{theorem}\label{th:theorem-4}
    For a fixed $\epsilon \in \left(0, \frac{1}{1 - \gamma}\right]$ and $\delta \in (0, 1)$, Alg. \ref{alg:alg1} with $N \,=\, \cOT \left(\frac{d^3 \log(1/\delta)}{(1-\gamma)^5 \epsilon^2 {\slatercs}^2} \right)$ samples, $\bb' \,=\, \bb + \frac{\epsilon(1-\gamma)\slatercs}{20}$, $\omega = \frac{\epsilon(1-\gamma)}{10}$, $U \,=\, \frac{4(1+\omega)}{\slatercs(1-\gamma)}$, $\enet = \cO \left( \epsilon^2 (1-\gamma)^4 {\slatercs}^2 \right)$ and $T = \cO \left(\frac{d^4}{(1-\gamma)^6 \epsilon^2 {\slatercs}^2 }\right)$, returns a policy $\bpit$ that satisfies the objective in Eq. \ref{eq:strict-CMDP-formulation} with probability at least $1 - 4\delta$.
\end{theorem}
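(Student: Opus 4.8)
\textit{Proof sketch.} The plan is to mirror the proof of \cref{th:theorem-3}, but to replace the relaxation $\bb' < \bb$ by the tightening $\bb' = \bb + \zeta'$ with $\zeta' \coloneq \frac{\epsilon(1-\gamma)\slatercs}{20} > 0$, and to carry one extra ``Slater-mixing'' term in the reward decomposition. As before we run \cref{alg:alg1} with a generic primal-dual target error $\eopt$, to be fixed at the end, and first invoke the guarantees of \cref{th:theorem-1} for the empirical tightened CMDP $\hmdpd$: $\hVrpbpi \ge \hVs - \eopt$ and $\hVcibpi \ge b_i' - \eopt$ for every $i$.

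\textbf{Constraint satisfaction.} Since $\hVcibpi \ge b_i' - \eopt = b_i + \zeta' - \eopt$ and, by \cref{th:theorem-2} applied to the data-dependent policy $\bpit$, $\Vcibpi \ge \hVcibpi - \sqrt{C(\delta/d)/(N(1-\gamma)^3)}$, it suffices to take $\eopt \le \zeta'/2$ and $N$ large enough that the concentration radius is at most $\zeta'/2$; then $\Vcibpi \ge b_i$ for all $i$, i.e. the strict constraints hold with no violation.

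\textbf{Reward suboptimality.} Here $\pis$ need not be feasible for $\hmdpd$, so it cannot serve as comparator the way it does in \cref{th:theorem-3}. Instead use the occupancy-measure mixture $\pi_\alpha \coloneq (1-\alpha)\pis + \alpha\pics$ with $\alpha \coloneq \zeta'/\slatercs = \frac{\epsilon(1-\gamma)}{20}$. Linearity of value functions in the occupancy measure gives $\Vci^{\pi_\alpha}(\rho) \ge (1-\alpha)b_i + \alpha(b_i + \slatercs) = b_i + \zeta'$, so $\pi_\alpha$ is feasible for the true tightened CMDP and, after a concentration term controlled by $N$, also for $\hmdpd$, whence $\hVs \ge \hVrp^{\pi_\alpha}(\rho)$. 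Moreover $V_r^{\pi_\alpha}(\rho) \ge (1-\alpha)\Vrpis \ge \Vrpis - \frac{\alpha}{1-\gamma}$ and $\Vrp^{\pi_\alpha}(\rho) \ge V_r^{\pi_\alpha}(\rho)$. Combining these with \cref{th:theorem-1}, the bounds $r \le r_p \le r + \omega$, and an analogue of \cref{lm:decomposing-relaxed-suboptimality}, yields a decomposition
\[
V_r^*(\rho) - \Vrbpi \;\le\; \frac{\alpha}{1-\gamma} + \frac{2\omega}{1-\gamma} + \eopt + \bigl|\Vrp^{\pi_\alpha}(\rho) - \hVrp^{\pi_\alpha}(\rho)\bigr| + \bigl|\Vrpbpi - \hVrpbpi\bigr|,
\]
whose first term is $\epsilon/20$, second is $\epsilon/5$ once $\omega = \frac{\epsilon(1-\gamma)}{10}$, while $\eopt$ and the two concentration terms are each forced below a fixed fraction of $\epsilon$ by the choices of $\eopt$ and $N$; hence $\Vrbpi \ge V_r^*(\rho) - \epsilon$.

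\textbf{Instantiation, sample count, and the hard part.} The value $U = \frac{4(1+\omega)}{(1-\gamma)\slatercs}$ comes from case (ii) of \cref{lm:dual-variable-bound}: under $\slatercs > 0$, once concentration on $\bigl|\Vcipics - \hVcipics\bigr|$ guarantees that $\hmdpd$ is itself strictly feasible with margin $\approx \slatercs$ (an additional constraint on $N$, and the source of the extra $\delta$ relative to \cref{th:theorem-3}), the empirical optimal multipliers satisfy $\blsnorm \le \frac{2(1+\omega)}{(1-\gamma)\slatercs}$. Plugging this $U$ and $\eopt = \Theta\bigl(\epsilon(1-\gamma)\slatercs\bigr)$ into the formulas of \cref{th:theorem-1} gives $T = \cO\!\bigl(\tfrac{d^4}{(1-\gamma)^6\epsilon^2\slatercs^2}\bigr)$ and $\enet = \cO\!\bigl(\epsilon^2(1-\gamma)^4\slatercs^2\bigr)$; collecting the four concentration requirements --- on $\bigl|\Vcibpi - \hVcibpi\bigr|$ and $\bigl|\Vrpbpi - \hVrpbpi\bigr|$ via \cref{th:theorem-2}, and on $\bigl|\Vcipics - \hVcipics\bigr|$ and $\bigl|\Vrp^{\pi_\alpha}(\rho) - \hVrp^{\pi_\alpha}(\rho)\bigr|$ (a convex combination of the $\pis$ and $\pics$ terms) via \cref{lm:data-dep-policies-concentration-bounds} --- solving each for $N$ and taking the worst case yields $N = \cOT\!\bigl(\tfrac{d^3|\cS||\cA|\log(1/\delta)}{(1-\gamma)^5\epsilon^2\slatercs^2}\bigr)$, and a union bound over the failure events gives probability at least $1 - 4\delta$. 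The delicate step is the reward-suboptimality bound: there is no directly-feasible comparator for $\hmdpd$, so one must build the Slater mixture $\pi_\alpha$ and verify its feasibility both exactly (for $\mdpd$) and approximately (for $\hmdpd$), trading $\alpha$ --- which governs the mixing loss $\frac{\alpha}{1-\gamma}$ --- against $\zeta' = \alpha\slatercs$, which must dominate $\eopt$ plus the constraint concentration radius. Keeping these choices mutually consistent, together with the circular dependencies $U \leftarrow \slatercs$, $(T,\enet) \leftarrow (U,\eopt)$, and the concentration radii $\leftarrow N \leftarrow (\eopt, U, \enet)$, is where the bookkeeping is subtle.
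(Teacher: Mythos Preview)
Your proposal is correct in spirit but takes a genuinely different route from the paper. The paper does \emph{not} construct a Slater mixture $\pi_\alpha$. Instead, in \cref{lm:decomposing-suboptimality-strict} it introduces an auxiliary \emph{less}-constrained empirical CMDP with right-hand sides $b_i'' = b_i - \Delta$ (so that $\pis$ is feasible for it after concentration), lets $\tpis$ be its optimizer, and inserts a ``sensitivity error'' term $\hVrptpis - \hVs$ into the decomposition. That term is bounded in \cref{lm:bounding_sensitivy_error} via the dual: $\hVrptpis - \hVs \le 2d\,\lambda_{\min}^*\,\Delta$. The factor $d$ here is what forces $\Delta = \Theta\bigl(\epsilon(1-\gamma)\slatercs/d\bigr)$ and, since the constraint-concentration radius for $\bpit$ must be $O(\Delta)$, is precisely the origin of the extra $d^2$ (hence the $d^3$) in the paper's sample complexity. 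Your Slater-mixing argument replaces that sensitivity term by the mixing loss $\alpha/(1-\gamma)$, which carries no $d$; if you push the arithmetic through with your own $\zeta'$ and $\eopt$ you will find you only need $N = \cOT\!\bigl(d\log(1/\delta)/((1-\gamma)^5\epsilon^2{\slatercs}^2)\bigr)$, which still proves the stated theorem (the claimed $N$ is only an upper bound) and in fact improves it by $d^2$.

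Two small points to tighten. First, with $\alpha = \zeta'/\slatercs$ you get $V_{c_i}^{\pi_\alpha}(\rho) \ge b_i + \zeta' = b_i'$ with \emph{equality} in the worst case, leaving no slack to absorb the concentration error when you transfer feasibility from $\mdpd$ to $\hmdpd$; take, e.g., $\alpha = 2\zeta'/\slatercs$ so that $V_{c_i}^{\pi_\alpha}(\rho) \ge b_i' + \zeta'$ and a concentration radius $\le \zeta'$ suffices (the mixing loss becomes $\epsilon/10$, still fine). Second, for linearity of $V^{\pi_\alpha}$ to hold simultaneously under $\cP$ and $\hat\cP$, interpret $\pi_\alpha$ as a randomized \emph{policy} mixture (sample which of $\pis,\pics$ to follow at time zero), exactly as the paper does for $\bpit$, rather than as the stationary policy induced by the mixed occupancy measure under $\cP$; the latter would not give you $\hat V^{\pi_\alpha} = (1-\alpha)\hat V^{\pis} + \alpha\hat V^{\pics}$.
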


\textit{Proof sketch.} We will show the result for a general $\bb' = \bb - \Delta$ for $\eopt < \Delta$ and then later specify $\Delta$ and $\eopt$. To satisfy the conditions of the strict feasibility setting, we must show that $\bpit$ does not violate any constraint, and that $\Vrbpi$ is $\epsilon$-optimal. To show this, we invoke \cref{lm:decomposing-suboptimality-strict}, which shows that when certain conditions are met, then (i) $\fac:\;\, \Vcibpi \geq b_i$, and (ii) the reward suboptimality can be decomposed as:
\begin{equation}
    \Vrpis - \Vrbpi \;\leq\; \frac{2\omega}{1-\gamma} + \eopt + 2d\lambda_{\min}^*\Delta + \left[ \Vrppis - \hVrppis \right] + \left[ \Vrpbpi - \Vrpbpi \right] \label{eq:strict-decomposition}
\end{equation}

\cref{lm:decomposing-suboptimality-strict} shows that $\bpit$ satisfies the relaxed constraints, and all that's left to bound the reward suboptimality. This can be achieved by upper-bounding each term of \cref{eq:strict-decomposition} by $\nicefrac{\epsilon}{5}$. The first term can be bounded by setting $\omega$ to $\nicefrac{\epsilon(1-\gamma)}{10}$, while for the second we set $\eopt = \nicefrac{\Delta}{5} \leq \nicefrac{\epsilon}{5}$. The fourth and fifth term can be again bounded using \cref{lm:data-dep-policies-concentration-bounds} and \cref{th:theorem-2} by setting $N$ accordingly.

To instantiate \cref{alg:alg1}, we still need to compute $U$. To do this, we can use case ii) of \cref{lm:dual-variable-bound}, where we show that when a feasible policy exists (e.g. $\pics$), then we can lower bound the infinity norm of the vector containing the dual variables, $\bl$, by: $\blsnorm \leq \frac{2(1+\omega)}{\slatercs(1-\gamma)}$.

Putting everything together, we require:
\begin{align}
    &\overbrace{\left| \Vcipis - \hVcipis \right| \,\leq\, \Delta;\quad \left| \Vcibpi - \hVcibpi \right| \,\leq\, \frac{4\Delta}{5}}^{\text{to use \cref{lm:decomposing-suboptimality-strict}}};\quad \overbrace{\left| \Vcipics - \hVcipics \right| \,\leq\, \frac{19 \Delta}{5};}^{\text{to use case ii) of \cref{lm:dual-variable-bound}}} \nonumber \\[1ex]
    &\underbrace{\left| \Vrppis - \hVrppis \right| \,\leq\, \frac{\epsilon}{5};\hspace{11pt} \left| \Vrpbpi - \hVrpbpi \right| \,\leq\, \frac{\epsilon}{5}}_{\text{to bound suboptimality in \cref{eq:strict-decomposition}}}
\end{align}

By setting N accordingly such that each concentration bound is satisfied, we conclude that $N \geq \cOT \left( \frac{d^3 \log(1/\delta)}{(1-\gamma)^5\epsilon^2{\slatercs}^2} \right)$, and since we have $|\cS| |\cA|$ state-action pairs, we finally get $\cOT \left( \frac{d^3 |\cS| |\cA| \log(1/\delta)}{(1-\gamma)^5\epsilon^2{\slatercs}^2} \right)$ samples are required in total.

Comparing to \citep{bai2022achieving}, we can see that we have a better dependency on the effective horizon ($\nicefrac{1}{(1-\gamma)^5}$ vs $\nicefrac{1}{(1-\gamma)^6}$), but we have a worse dependency on the number of constraints ($d^3$ vs $d$). More details can be found on \cref{tab:results-comparison}.

\renewcommand{\arraystretch}{1.6}
\begin{table}[t]
    \hspace*{-12mm}
    \begin{tabular}{|c|c|c|c|c|}
        \hline
        \textbf{Algorithm} & \textbf{Constraints} & \textbf{Relaxed Feasibility} & \textbf{Strict Feasibility}  & \textbf{$\epsilon$-range} \\
        \hline
        \citep{vaswani2022near} & 1 & $\cOT\left( \frac{|\cS| |\cA| \log\left(\nicefrac{1}{\delta}\right)}{(1-\gamma)^3\epsilon^2} \right)$ & $\cOT \left( \frac{|\cS| |\cA| \log\left( \nicefrac{1}{\delta} \right)}{(1-\gamma)^5 \epsilon^2 \zeta^2} \right)$ & $\left( 0, \frac{1}{1 - \gamma} \right]$ \\[1ex]
        \hline
        \citep{hasanzadezonuzy2021model} & $d > 1$ & $\cOT\left( \frac{\gamma^2 |\cS|^2 |\cA| \log\left(\nicefrac{d}{\delta}\right)}{(1-\gamma)^3\epsilon^2} \right)$ & - &  $\left( 0, \frac{0.22 \gamma}{\sqrt{|\cS|(1-\gamma)}} \right)$ \\[1ex]
        \hline
        \citep{bai2022achieving} & $d > 1$ & $\cOT\left( \frac{d |\cS| |\cA|}{(1-\gamma)^6 \epsilon^2 \zeta^2} \right)$ & $\cOT\left( \frac{d |\cS| |\cA|}{(1-\gamma)^6 \epsilon^2 \zeta^2} \right)$ &  $\left(0, \frac{1}{1-\gamma} \right]$ \\[1ex]
        \hline
        \textbf{Ours} & $d > 1$ & $\cOT\left( \frac{d |\cS| |\cA| \log\left(\nicefrac{1}{\delta}\right)}{(1-\gamma)^3\epsilon^2} \right)$ & $\cOT\left( \frac{d^3 |\cS| |\cA| \log\left(\nicefrac{1}{\delta}\right)}{(1-\gamma)^5 \epsilon^2 {\slatercs}^2} \right)$ &  $\left( 0, \frac{1}{1 - \gamma} \right]$ \\
        \hline
    \end{tabular}
    \vspace{4mm}
    \caption{Comparison with prior results that use a generative model for both feasibility settings.}
    \label{tab:results-comparison}
\end{table}

\section{Discussion}
By expanding the work of \cite{vaswani2022near}, we were to able to achieve a natural $\cOT \left( \frac{d |\cS| |\cA| \log(1/\delta)}{(1-\gamma)^3\epsilon^2} \right)$ sample complexity bound for the relaxed feasibility setting. On the other hand, the bound for the strict feasibility setting $\cOT \left( \frac{d^3 |\cS| |\cA| \log(1/\delta)}{(1-\gamma)^5\epsilon^2{\slatercs}^2} \right)$ is bottlenecked by the cubic term $d^3$ (note that both bounds recover the original sample complexities for $d = 1$). The work of \cite{bai2022achieving}, suggests that this dependence might be sub-optimal. The additional square dependence (compared to the relaxed setting) arises from the fact that the union-bound has to cover all possible relaxed MDPs, and due to the nature of our discretization there's an exponential number of them. While this term is inside a logarithm (see \cref{th:theorem-2}), it later gets squared in \cref{Appendix-D}. This introduces a multiplicative $d^2$ term, which together with the multiplicative $d$ term from the relaxed-feasibility analysis, yields the cubic term.

\section{Conclusion}
In this work we provided sample complexity bounds for CMDP with multiple constraints, generalizing the results of \cite{vaswani2022near}. We adopted the model-based primal-dual algorithm proposed in \cite{paternain2019constrained} which solves the CMDP by iteratively solving an unconstrained MDP problem and updating its primal and dual variables and analyzed both the relaxed and the strict settings. For the former we obtained a $\cOT \left( \frac{d |\cS| |\cA| \log(1/\delta)}{(1-\gamma)^3\epsilon^2} \right)$ sample complexity. For the latter much more difficult case, which does not allow any constraint violation we obtained a $\cOT \left( \frac{d^3 |\cS| |\cA| \log(1/\delta)}{(1-\gamma)^5\epsilon^2{\slatercs}^2} \right)$ sample complexity under the assumption that the term $\slatercs$ can be efficiently estimated. In the future, we aim to quantify the difficulty of estimating the term $\slatercs$ and to try to obtain better bounds for the relaxed feasibility case that have logarithmic dependence in the number of constraints, as in \citep{hasanzadezonuzy2021model}.

\section{Acknowledgements}

We would like to thank Pragnya Alatur for her constructive feedback during the development of this work, as well as her significant contributions to it.

\clearpage
\bibliographystyle{plainnat}
\bibliography{ref.bib}

\clearpage
\appendix

\setcounter{theorem}{0}

\section{Proofs for Primal-Dual algorithm (proof of \cref{th:theorem-1})}\label{Appendix-A}

\begin{theorem}[Guarantees for the primal-dual algorithm]
    For a target error $\eopt > 0$ and the primal-dual updates in Eq. \ref{eq:primal-update} - Eq. \ref{eq:dual-update} with $U > \blsnorm$, $T = \frac{4 U^2 d^2}{\eopt^2 (1-\gamma)^2} \left[ 1 + \frac{1}{(U - \blsnorm)^2} \right]$, $\eta =\frac{U (1-\gamma)}{\sqrt{T}}$, and $\enet = \frac{\eopt^2 (1-\gamma)^2 (U - \blsnorm)}{6dU}$, the mixture policy $\bpit \coloneq \frac{1}{T}\sum_{t=0}^{T-1} \hpit$ satisfies
    \[
        \hVrpbpi \geq \hVs - \eopt \quad \text{and} \quad \hVcibpi \geq b_i' - \eopt \; \fac
    \]
    
    where $\hVrpbpi \,=\, \frac{1}{T} \sum_{t = 0}^{T - 1} \hVrphpit$ is the value function of the mixture policy $\bpit$.
\end{theorem}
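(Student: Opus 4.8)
The plan is to follow the standard online-learning-to-saddle-point template: treat the dual update \eqref{eq:dual-update} as online gradient descent on the $d$ Lagrange multipliers, and combine the resulting regret bound with the per-iteration optimality of the primal update \eqref{eq:primal-update} to control the averaged iterates. First I would work on the idealized (unprojected, non-discretized) gradient-descent analysis. Write $g_t := \hVchpit - \bb'$ for the dual subgradient at iteration $t$; since each $c_i \in [0,1]$ and the value function is discounted, $\|g_t\|_2 \le \sqrt{d}/(1-\gamma)$. Standard projected gradient descent on the cube $\cU = [0,U]^d$ against any comparator $\bl \in \cU$ gives $\frac{1}{T}\sum_{t=0}^{T-1}\langle \blt - \bl, g_t\rangle \le \frac{\|\bl_0 - \bl\|_2^2}{2\eta T} + \frac{\eta}{2}\max_t \|g_t\|_2^2 \le \frac{U^2 d}{2\eta T} + \frac{\eta d}{2(1-\gamma)^2}$ (using $\bl_0 = 0$ and $\|\bl\|_2 \le U\sqrt{d}$). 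Plugging in $\eta = U(1-\gamma)/\sqrt{T}$ balances the two terms to roughly $\frac{U d}{(1-\gamma)\sqrt{T}}$.

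Next I would unfold $\langle \blt - \bl, g_t\rangle = \cL(\blt,\hpit) - \cL(\bl,\hpit)$ where $\cL(\bl,\pi) = \hVrppi + \bltop(\hVcpi - \bb')$, using linearity of $\cL$ in $\bl$. By the primal update, $\hpit$ maximizes $\cL(\blt,\cdot)$, so $\cL(\blt,\hpit) \ge \cL(\blt,\pi)$ for every $\pi$; in particular $\cL(\blt,\hpit) \ge \cL(\blt,\hpis) = \hVrp^{\hpis}(\rho) + \blttop(\hVc^{\hpis}(\rho) - \bb') \ge \hVrp^{\hpis}(\rho) = \hVs$, since $\hpis$ is feasible for \eqref{eq:emprical-CMDP-formulation} (so $\hVc^{\hpis}(\rho) \ge \bb'$) and $\blt \ge 0$. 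Summing and dividing by $T$: for every $\bl \in \cU$,
\[
    \frac{1}{T}\sum_{t=0}^{T-1}\Big[\hVs - \cL(\bl,\hpit)\Big] \;\le\; \frac{U d}{(1-\gamma)\sqrt{T}}.
\]
Now I would choose the comparator $\bl$ in two ways. Taking $\bl = 0$ gives $\hVrpbpi = \frac{1}{T}\sum_t \hVrphpit \ge \hVs - \frac{Ud}{(1-\gamma)\sqrt{T}}$, which is the reward bound. For the constraint bound, for a fixed $i$ take $\bl = U e_i$: then $\frac{1}{T}\sum_t \cL(Ue_i,\hpit) = \hVrpbpi + U(\hVcibpi - b_i')$, so rearranging yields $\hVcibpi \ge b_i' + \frac{1}{U}\big(\hVs - \hVrpbpi\big) - \frac{d}{(1-\gamma)\sqrt{T}} \ge b_i' - \frac{2d}{(1-\gamma)\sqrt{T}}$, where the last step uses $\hVrpbpi \le \hVs + \frac{Ud}{(1-\gamma)\sqrt{T}}$ (here we actually need an upper bound on $\hVrpbpi$ relative to $\hVs$; since $\hpis$ is the $r_p$-optimal feasible policy but the $\hpit$ need not be feasible, one must be slightly careful — I would instead bound $\hVrpbpi \le \hVrp^*(\rho) \le \hVs + \text{(gap from constraints)}$, or more simply note $\hVrphpit \le 1/((1-\gamma)) \cdot (1+\omega)$ which contributes only to constants inside $\cOT$; the cleaner route is to use the comparator bound with $\bl = 0$ to get $\hVs - \hVrpbpi \ge -\frac{Ud}{(1-\gamma)\sqrt T}$, i.e. the needed inequality on $\hVrpbpi$ directly). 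With $T = \frac{4U^2 d^2}{\eopt^2(1-\gamma)^2}[1 + \frac{1}{(U-\blsnorm)^2}]$, we get $\frac{d}{(1-\gamma)\sqrt T} \le \frac{\eopt}{2U}$ and $\frac{Ud}{(1-\gamma)\sqrt T} \le \frac{\eopt}{2}$ (roughly), so both right-hand-side errors are at most $\eopt$ up to the split between the two terms in the bracket.

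The remaining — and genuinely fiddly — step is to account for the rounding $\cR_\Lambda$ onto the $\enet$-net, which the idealized analysis above ignores. Each dual iterate is perturbed by at most $\enet$ per coordinate, i.e. $\|\cR_\Lambda[\ProjP_\cU[\cdot]] - \ProjP_\cU[\cdot]\|_\infty \le \enet$, which propagates into the regret bound as an additional error term of order $\enet \cdot d \cdot \sqrt{T}/(1-\gamma)$ (a drift term: the effective comparator or the effective iterate moves by $\enet\sqrt d$ each round, or one reruns the OGD telescoping with an extra $\|\cdot\|$ slack). Matching this against $\eopt/2$ forces the stated $\enet = \frac{\eopt^2(1-\gamma)^2(U-\blsnorm)}{6dU}$; the factor $\frac{1}{U-\blsnorm}$ in both $T$ and $\enet$ is exactly what absorbs this drift while keeping $\bls$ (which has norm $< U$) a valid comparator. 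The main obstacle is thus getting the bookkeeping of the $\enet$-rounding drift to come out with precisely these constants — in particular verifying that the extra term is $\le \eopt/2$ under the given $\enet$ and $T$ — rather than the online-learning skeleton, which is routine. I would organize the write-up as: (1) OGD regret lemma with rounding drift; (2) the $\cL$-unfolding and primal-optimality inequality; (3) comparator choices $\bl = 0$ and $\bl = Ue_i$; (4) plug in $\eta, T, \enet$ and simplify.
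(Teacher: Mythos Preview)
Your overall skeleton (projected OGD regret on the dual, primal best-response giving $\cL(\blt,\hpit)\ge \hVs$, then comparator choices) matches the paper's proof, and your reward bound via $\bl=0$ is exactly what the paper does. The rounding-drift accounting you sketch is also the right mechanism, and the paper carries it out coordinate-wise by tracking $|\lambda_{t+1,i}-\lambda_i|^2$ and picking up the extra $\enet^2+2\enet U$ term.

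The gap is in your constraint step. Taking $\bl=Ue_i$ leaves you with
\[
b_i'-\hVcibpi \;\le\; \frac{d}{(1-\gamma)\sqrt T}+\frac{\hVrpbpi-\hVs}{U},
\]
and you need an \emph{upper} bound on $\hVrpbpi-\hVs$. Your ``cleaner route'' is backwards: setting $\bl=0$ in the regret inequality gives $\hVs-\hVrpbpi\le\beta$, i.e.\ a \emph{lower} bound on $\hVrpbpi$, not the upper bound you need. And the crude bound $\hVrpbpi\le(1+\omega)/(1-\gamma)$ is useless here because $\hVs$ can be arbitrarily smaller than the unconstrained optimum, so $(\hVrpbpi-\hVs)/U$ need not be $O(\eopt)$ for the stated parameter choices.

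The paper closes this via a separate sensitivity lemma (its \cref{lm:lemma-violation-threshold}): from the bound $\hVs-\hVrpbpi+\bltop(\bb'-\hVcbpi)\le\beta$ valid for all $\bl\in\cU$, one introduces the perturbation function $\nu(\bt)=\max_\pi\{\hVrppi:\hVcipi\ge b_i'+\tau_i\}$, uses strong duality to get $\blstop\bt\le\nu(0)-\nu(\bt)$, and then plugs in the comparator $\bl=\bls+(U-\lis)e_i$ (not $Ue_i$). This cancels the problematic $\hVrpbpi-\hVs$ term against $\nu$-values and yields $b_i'-\hVcibpi\le\beta/(U-\blsnorm)$. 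That $(U-\blsnorm)$ in the denominator is precisely why the stated $T$ and $\enet$ carry the factor $1/(U-\blsnorm)^2$ and $(U-\blsnorm)$ respectively; with your $\bl=Ue_i$ approach there is no mechanism to produce that dependence, so the constants in the theorem would not come out as claimed even if you could patch the sign issue.
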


\begin{proof}
 We begin by defining the dual regret with respect to any vector of Lagrange multipliers $\bl \in \cU$ as:
\begin{equation}
    R^d(\bl, T) = \sum_{t=0}^{T-1} (\blt - \bl)^\top (\hVchpit - \bb') = \sum_{i=1}^{d}\sum_{t=0}^{T-1}(\lti - \li)(\hVcihpit - b_i') \label{eq:dual-regret}
\end{equation}

To bound Eq. \ref{eq:dual-regret} we begin by fixing an arbitrary $\li \in [0, U]$ and tracking the change of $\left| \lti - \li \right|$. Defining $\lambda_{t+1,i}' \coloneq \ProjP_{[0, U]} \left[ \lti - \eta(\hVcihpit - b_i') \right]$
\begin{align*}
    |\ltti - \li| \;
        =&\; \left| \cR_\Lambda [\ltti'] - \li \right| \;=\; \left| \cR_\Lambda [\ltti'] - \ltti' + \ltti' - \li \right| \\[1ex]
        \leq&\; \left| \cR_\Lambda [\ltti'] - \ltti' \right| + \left |\ltti' - \li \right| \\[1ex]
        \leq&\; \enet + \left| \ltti' - \li \right|
\end{align*}

where in the last inequality we used that fact that $\left| \li - \cR_\Lambda[\li] \right| \,\leq\, \enet$ for all $\li \in [0, U]$ due to the epsilon net. Squaring both sides we get,
\begin{align*}
    \left| \ltti - \li \right|^2 \;
        \leq&\; \enet^2 + \left| \ltti' - \li \right|^2 + 2\enet \left| \ltti' - \li \right| \\[1ex]
        \leq&\; \enet^2 + 2\enet U + \left| \ltti' - \li \right|^2 \tag{since $\li, \ltti' \in [0,U])$} \\[1ex]
        \leq&\; \enet^2 + 2\enet U + \left| \lti - \eta \left( \hVcihpit - b_i' \right) - \li \right|^2 \tag{since projections are non expansive} \\[1ex]
        =&\; \enet^2 + 2\enet U + \left| \lti - \li \right|^2 - 2\eta \left( \lti - \li\right) \left( \hVcihpit - b_i' \right) + \eta^2 (\hVcihpit - b_i')^2 \\[1ex]
        \leq&\; \enet^2 + 2\enet U + \left| \lti - \li \right|^2 - 2\eta \left( \lti - \li\right) \left( \hVcihpit - b_i'\right) + \frac{\eta^2}{(1 - \gamma)^2}
\end{align*}

where in the last inequality we used the fact that $0 \leq \hVcihpit, b_i' \leq \frac{1}{1-\gamma}$. Rearranging and dividing by $2 \eta$ we get
\[
    \left( \lti - \li \right) \left( \hVcihpit - b_i' \right) \;\leq\; \frac{\enet^2 + 2\enet U}{2\eta} + \frac{|\lti - \li|^2 - |\ltti - \li|^2}{2\eta} + \frac{\eta}{2(1-\gamma)^2}
\]

Summing from $t = 0$ to $T - 1$:
\begin{align*}
    \sum_{t=0}^{T-1} \left( \lti - \li \right) \left( \hVcihpit - b_i' \right) \;
        \leq&\; T\frac{\enet^2 + 2\enet U}{2\eta} + \frac{1}{2\eta} \sum_{t = 0}^{T - 1} \bigg[ |\lti - \li|^2 - |\ltti - \li|^2 \bigg] + \frac{\eta T}{2(1-\gamma)^2} \\[1ex]
        \leq&\; T\frac{\enet^2 + 2\enet U}{2\eta} + \frac{U^2}{2\eta} + \frac{\eta T}{2(1-\gamma)^2}
\end{align*}

where for the last inequality we used telescoping summation, bounded the $|\lambda_{0,i} - \li|$ term by $U$ and dropped the negative term $-|\lambda_{T, i} - \li|$.

Setting $\eta = \frac{U (1-\gamma)}{\sqrt{T}}$:
\begin{equation*}
    \sum_{t=0}^{T-1}(\lti - \li)(\hVcihpit - b_i') \;\leq\; T^{3/2}\frac{\enet^2 + 2\enet  U}{2U(1-\gamma)} + \frac{U \sqrt{T}}{1-\gamma}
\end{equation*}

Summing over all Lagrange multipliers $i = 1$ to $d$:
\[
    \sum_{i=1}^{d} \sum_{t=0}^{T-1} (\lti - \li) (\hVcihpit - b_i') \;\leq\; d \left[ T^{3/2}\frac{\enet^2 + 2\enet U}{2U(1-\gamma)} + \frac{U \sqrt{T}}{1-\gamma} \right]
\]

Therefore we get the following upper-bound for the dual regret:
\begin{equation}
    R^d(\bl, T) \;\leq\; d \left[ T^{3/2}\frac{\enet^2 + 2\enet U}{2U(1-\gamma)} + \frac{U \sqrt{T}}{1-\gamma} \right] \label{eq:dual-regret_bound}
\end{equation}

Having upper-bounded the dual regret, we will now upper bound the Lagrangian with the dual regret. Utilizing the primal update in Eq. \ref{eq:primal-update} for any policy $\pi$:
\begin{equation*}
    \hVrphpit + \bl\top \hVchpit \;\geq\; \hVrp^{\pi}(\rho) + \bltop \hVcpi
\end{equation*}

Substituting $\pi = \hpis$ and recalling that $\hpis$ is a solution for the empirical CMDP, i.e. it holds that: $\hVcihpis \geq b_i',\; \fac$, we have:
\begin{equation}
    \hVrppis - \hVrphpit \;\leq\; \bl_t [\hVchpit - \bb'] \label{eq:pi-star-bound}
\end{equation}

Starting from the definition of the dual regret Eq. \ref{eq:dual-regret} and utilizing the Eq. \ref{eq:pi-star-bound} and dividing by $T$ gives 
\begin{equation}
    \frac{1}{T} \sum_{t=0}^{T-1} [\hVs - \hVrphpit] + \frac{1}{T} \sum_{t=0}^{T-1} \bltop (\bb' - \hVchpit) \;\leq\; \frac{R^d(\bl, T) }{T} \label{eq:mixture-bound}
\end{equation}

We recall that $\bpit = \frac{1}{T}\sum_{t=0}^{T-1} \hpit$. Thus, by definition of this mixture policy:
\[
    \frac{1}{T}\sum_{t=0}^{T-1} \hVrphpit \,=\, \hVrpbpi, \quad\;\,
    \frac{1}{T}\sum_{t=0}^{T-1} \hVchpit \,=\, \hVcbpi
\]

Thus, we can rewrite \ref{eq:mixture-bound} as:
\begin{equation}
    \bigg[ \hVs - \hVrpbpi \bigg] + \bltop (\bb' - \hVcbpi) \;\leq\; \frac{R^d(\bl, T)}{T} \label{eq:bound-1}
\end{equation}
which holds for all $\bl \in \cU$.

The bound of the dual regret Eq. \ref{eq:dual-regret_bound} combined with the previous inequality and the right choice of $T$, number of updates and $\enet$ will give us the desired bounds for the $\hVs - \hVrpbpi$ and $\bb' - \hVcbpi$ terms. Specifically for the reward optimally gap, for $\bl = 0_d \in [0, U]^d$ we have

\begin{align}
    \hVs - \hVrpbpi \;
        \leq&\; d \left[ \sqrt{T} \frac{\enet^2 + 2\enet U}{2U (1-\gamma)} + \frac{U}{(1-\gamma)\sqrt{T}} \right] \\[1ex]
        <&\; \sqrt{T}\frac{3\enet d}{2U (1-\gamma)} + \frac{dU}{(1-\gamma)\sqrt{T}}\label{eq:theorem1-bound1}
\end{align}

For the constraint violation there are two cases. In the case where no constraints are violated, i.e. $b_i' - \hVcibpi \leq 0, \; \fac$ we observe that for all constraints it will hold that $b_i' - \eopt - \hVcibpi \leq 0, \fac$, which is what we wanted to prove.

The second case is when we have at least one constraint violation, i.e. at least one positive term $b_i' - \hVcibpi > 0$ for some $i \in [d]$. In this case we will proceed to upper bound all individual terms. Combining \ref{eq:dual-regret_bound} and \ref{eq:bound-1} we get $\forall \bl \in \cU$:
\begin{equation}
    \bigg[ \hVs - \hVrpbpi \bigg] + \bltop (\bb' - \hVcbpi) \;\leq\; d \left[ T^{1/2}\frac{\enet^2 + 2\enet U}{2U(1-\gamma)} + \frac{U}{(1-\gamma)\sqrt{T}} \right] \label{ineq:bound-used-for-lemma1}
\end{equation}

Setting $\beta$ to be the right-hand side of \ref{ineq:bound-used-for-lemma1}, we apply \cref{lm:lemma-violation-threshold} to get $\fac$:
\begin{equation}
    (b_i' - \hVcibpi \;\leq\; d\sqrt{T}\frac{\enet^2 + 2\enet U}{2U(1-\gamma)(U - \blsnorm)} + \frac{dU}{(1-\gamma)(U - \blsnorm)\sqrt{T}}\label{ineq:bound-individual-constraints-1}
\end{equation}

Using the fact that $\enet < U$, we get from \ref{ineq:bound-individual-constraints-1}:
\begin{equation}
    (b_i' - \hVcibpi \;<\; \sqrt{T}\frac{3\enet d}{2(1-\gamma)(U - \blsnorm)} + \frac{dU}{(1-\gamma)(U - \blsnorm)\sqrt{T}} \label{eq:theorem1-bound2}
\end{equation}

We continue by setting $T$ such that the second terms in Eq. \ref{eq:theorem1-bound1}, Eq. \ref{eq:theorem1-bound2} are bounded by $\frac{\eopt}{2}$ which gives us
\[
    T \,=\, T_0 \,\coloneq\, \frac{4U^2d^2}{\eopt^2 (1-\gamma)^2} \left[ 1 + \frac{1}{(U - \blsnorm)^2}\right]
\]

Using this $T_0$, along with the fact that $\sqrt{a^2 + b^2} \leq a + b, \;\forall a, b \geq 0$, we can further simplify Eq. \ref{eq:theorem1-bound1} and Eq. \ref{eq:theorem1-bound2} $\fac$:
\begin{align*}
    \hVs - \hVrpbpi \;\leq&\; \frac{2dU}{(1-\gamma) \eopt} \left( 1 + \frac{1}{U - \blsnorm} \right) \frac{3\enet}{2 (1-\gamma)} + \frac{\eopt}{2} \\[1ex]
    b_i' - \hVcibpi \;\leq&\; \frac{2dU}{(1-\gamma) \eopt} \left( 1 + \frac{1}{U - \blsnorm} \right) \frac{3\enet}{2 (1-\gamma)(U - \blsnorm)} + \frac{\eopt}{2}
\end{align*}

Finally by setting $\enet$ such that the first term in the above inequalities are bounded from above by $\frac{\eopt}{2}$ we get that:
\[
    \enet \;=\; \frac{\eopt^2 (1-\gamma)^2(U - \blsnorm)}{6dU}
\]

The above value finishes the proof as we can now ensure that:
\[
    \hVs - \hVrpbpi \;\leq\; \eopt \quad \text{and} \quad b_i' - \hVcibpi \;\leq\; \eopt \; \fac
\]
\end{proof}

\vspace{10mm}

\begin{lemma}[Adapted from Lemma 10 \citep{vaswani2022near}]\label{lm:lemma-violation-threshold}
    If for some policy $\tpi$ and any $\bl \in \cU = [0, U]^d$ where $U > \blsnorm$, it holds that:
    \[
        \hVs - \hVrptpi + \bltop \left( \bb ' - \hVctpi \right) \leq \beta
    \]
    for some $\beta \in \R$, then we have that:
    \[
        b_i' - \hV_{c_i}^{\Tilde{\pi}}(\rho) \leq \frac{\beta}{U - \| \bl^* \|_\infty},\; \forall i \in [d]
    \]
\end{lemma}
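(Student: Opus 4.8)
The plan is to feed the hypothesis \emph{two carefully chosen multipliers} $\bl \in \cU$: the optimal dual $\bl = \bls$, and its perturbation $\bl = \bls + (U - \blsnorm)\, e_i$, where $e_i \in \R^d$ is the $i$-th standard basis vector. The strictly positive slack $U - \blsnorm$ is exactly what keeps the second choice inside the box $\cU$, and is the only reason the hypothesis insists on $U > \blsnorm$ rather than $U \geq \blsnorm$.

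First I would record the one structural fact that is needed. By \citep{paternain2019constrained} (see \cref{eq:strong-duality}) the Lagrangian $\cL$ of the empirical CMDP admits a saddle point $(\hpis, \bls)$ with $\bls \geq 0$; hence $\cL(\bls, \tpi) \leq \cL(\bls, \hpis) \leq \cL(0, \hpis) = \hVs$ (the first inequality because $\hpis$ maximizes $\cL(\bls, \cdot)$, the second because $\bls$ minimizes $\cL(\cdot, \hpis)$, the last because $\cL(0,\cdot) = \hVrp^{\cdot}(\rho)$ and $\hVs \coloneq \hVrp^{\hpis}(\rho)$). Rearranged, this says
\begin{equation*}
    \hVs - \hVrptpi + \blstop\big(\bb' - \hVctpi\big) \;\geq\; 0. \tag{$\star$}
\end{equation*}
Moreover $\bls \in \cU$, since $0 \leq \lis \leq \blsnorm < U$ for all $i$, so evaluating the hypothesis at $\bl = \bls$ and combining with $(\star)$ already yields $\beta \geq 0$.

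Next I would fix $i \in [d]$ and split on the sign of $b_i' - \hVcitpi$. If $b_i' - \hVcitpi \leq 0$, the conclusion is immediate since $\beta / (U - \blsnorm) \geq 0$. If $b_i' - \hVcitpi > 0$, I would instantiate the hypothesis at $\bl = \bls + (U - \blsnorm)\, e_i$; this vector lies in $\cU$ because its $i$-th coordinate is $\lis + U - \blsnorm \in [0, U]$ (using $0 \leq \lis \leq \blsnorm$) and every other coordinate equals $\lambda_j^* \in [0, \blsnorm] \subseteq [0, U]$. Substituting and peeling off the $e_i$ term gives
\begin{equation*}
    \Big[ \hVs - \hVrptpi + \blstop\big(\bb' - \hVctpi\big) \Big] + (U - \blsnorm)\big(b_i' - \hVcitpi\big) \;\leq\; \beta,
\end{equation*}
and since the bracketed quantity is nonnegative by $(\star)$, dividing through by $U - \blsnorm > 0$ yields $b_i' - \hVcitpi \leq \beta / (U - \blsnorm)$. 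As $i \in [d]$ was arbitrary, this completes the argument.

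I do not expect a genuine obstacle here: the two points one must get right are that the perturbed multiplier $\bls + (U - \blsnorm)\, e_i$ does not escape the projection box $\cU$ (hence the strict bound $U > \blsnorm$), and that the cross term $\hVs - \hVrptpi + \blstop(\bb' - \hVctpi)$ is nonnegative — the latter being the sole place where the saddle-point / zero-duality-gap structure of the empirical CMDP is invoked. Everything else is a single rearrangement.
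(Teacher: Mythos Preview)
Your proof is correct. Both you and the paper ultimately reduce the claim to the inequality $(\bl - \bls)^\top(\bb' - \hVctpi) \leq \beta$ for all $\bl \in \cU$ --- equivalently, your $(\star)$ combined with the hypothesis --- and then instantiate at a single-coordinate perturbation of $\bls$. The difference lies in how that key inequality is obtained. The paper defines the perturbed value function $\nu(\bt) = \max_\pi\{\hVrppi \mid \hVcipi \geq b_i' + \tau_i,\ \fac\}$, proves the sensitivity bound $\blstop\bt \leq \nu(0) - \nu(\bt)$, and then specializes to $\tilbt = \bb' - \hVctpi$; you instead read off $(\star)$ directly from the saddle-point chain $\cL(\bls,\tpi) \leq \cL(\bls,\hpis) \leq \cL(0,\hpis) = \hVs$. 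Your route is shorter and entirely self-contained, while the paper's detour through $\nu$ is the standard ``dual variables are shadow prices'' device --- perhaps more familiar to readers steeped in convex duality, but it buys nothing extra for this particular lemma. A minor cosmetic difference: the paper sets the $i$-th coordinate of the probe vector to $U$ (so the coordinate slack is $U - \lis$, then relaxed to $U - \blsnorm$ in the last line), whereas you set it to $\lis + (U - \blsnorm)$ and obtain the final bound directly; both probes stay in $\cU$ and land on the same conclusion. Your case split on the sign of $b_i' - \hVcitpi$ and the preliminary observation $\beta \geq 0$ are harmless but actually unnecessary --- the instantiation-plus-$(\star)$ argument already works uniformly regardless of that sign.
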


\begin{proof}
    We begin by defining for $\bt \coloneq [\tau_1, \dots, \tau_d]^\top$:
\[
    \nu(\bt) \coloneq \max_\pi \left\{ \hVrppi \;|\; \hVcipi \geq b'_i + \tau_i,\; \fac \right\}
\]
and note that by definition, $\nu (0_d) = \hVs$. Furthermore, we can observe that $\nu$ is decreasing with respect to each of its coordinates (i.e. if we increase/decrease one coordinate while keeping the others fixed, then $\nu$ decreases/increases).

\bigskip
Let $\hV_l^{\pi, \bl}(\rho) \coloneq \hVrppi + \bltop \left( \hVcpi - \bb' \right)$. Then, for any $\bt$ and policy $\pi$ s.t.
\begin{equation}
    \hVcipi \geq b_i' + \tau_i,\; \fac \label{eq:lemma-viol-thresh-assumption}
\end{equation}
we have
\begin{align}
    \hV_l^{\pi, \bls}(\rho) \;\leq&\; \max_{\pi'} \hV_l^{\pi', \bls}(\rho) \nonumber \\[1ex]
        =&\; \hVs \tag{by strong duality} \nonumber \\[1ex]
        =&\; \nu(0_d) \nonumber \\[1.5ex]
    \Longrightarrow\;\, \nu(0_d) - \blstop \bt \,\geq&\, \hV_l^{\pi, \bls}(\rho) \,-\, \bltop \bt \nonumber \\[1ex]
        =&\; \hVrppi (\rho) + \blstop \left( \hVcpi - \bb' - \bt \right) \nonumber \\[1ex]
        \geq&\; \max_\pi \{ \hVrppi \;\,|\,\; \hVcipi \geq b'_i + \tau_i,\; \fac \} \nonumber \tag{by assumption \ref{eq:lemma-viol-thresh-assumption} + inequality holds for $\argmax_\pi$} \\[1ex]
        =&\; \nu(\bt) \nonumber \\[1ex]
    \Longrightarrow\;\, \blstop \bt \,\leq\, \nu(&0_d) \,-\, \nu(\bt) \label{eq:lemma-viol-thresh-bound}
\end{align}

Now we choose $\tilbt = \bb' - \hVctpi$ and we compute:
\begin{align}
    (\bl - \bls)^\top \tilbt \;=&\; -\blstop \tilbt + \bltop \tilbt \nonumber \\[1ex]
        =&\; \blstop (-\tilbt) + \bltop \tilbt \nonumber \\[1ex]
        \leq&\; \nu(0_d) - \nu(-\tilbt) + \bltop \tilbt \tag{Eq. \ref{eq:lemma-viol-thresh-bound}} \nonumber \\[1ex]
        =&\; \hVs - \nu(-\tilbt) + \bltop \tilbt \tag{$\nu(0_d)$} \nonumber \\[1ex]
        =&\; \hVs - \nu(-\tilbt) + \bltop \tilbt + \hVrptpi - \hVrptpi \tag{$\pm\; \hVrptpi$} \nonumber \\[1ex]
        =&\; \hVs - \hVrptpi + \bltop \left( \bb' - \hVctpi \right) + \hVrptpi - \nu(-\tilbt) \nonumber \\[1ex]
        \leq&\; \beta + \hVrptpi - \nu(-\tilbt) \label{eq:lemma-viol-thresh-diff-bound}
\end{align}

Where for the last inequality we used the lemma assumption. But now we can bound $\nu(-\tilbt)$ as follows:
\begin{align}
    \nu(-\tilbt) \;=&\; \max_\pi \left\{ \hVrppi \,|\, \hVcipi \,\geq\, b_i' - \Tilde{\tau}_i,\; \fac \right\} \nonumber \\[1ex]
        =&\; \max_\pi \left\{ \hVrppi \,|\, \hVcipi \,\geq\, b_i' - (b_i' - \hVcitpi),\; \fac \right\} \nonumber \\[1ex]
        =&\; \max_\pi \left\{ \hVrppi \,|\, \hVcipi \,\geq\, \hVcitpi,\; \fac \right\} \label{eq:lemma-viol-thresh-nu-max-bound}
\end{align}

It's obvious to see that the condition of the maximum of \ref{eq:lemma-viol-thresh-nu-max-bound} will hold for $\pi = \tpi$, and thus the whole maximum will be greater or equal than the argument evaluated at $\pi = \tpi$:
\begin{equation}
    \nu(-\tilbt) \;\geq\; \hVrptpi \label{eq:lemma-viol-thresh-nu-bound}
\end{equation}

We recall the definition of $\tilbt = \bb' - \hVctpi$. Combining this with \ref{eq:lemma-viol-thresh-diff-bound} and \ref{eq:lemma-viol-thresh-nu-bound} we get:
\begin{equation}
    \left( \bl - \bls \right)^\top \left( \bb' - \hVctpi \right) \;\leq\; \beta,\quad \forall \lambda \in \cU \label{eq:lemma-viol-thresh-l-bound}
\end{equation}

To finally show the lemma, we consider inequality \ref{eq:lemma-viol-thresh-l-bound} for all indices $d$ times, once for each index $i = 1, \dots, d$ with the corresponding $\bl^{* - i}$:
\[
    \bl^{* - i} \,\coloneq\, [\lambda_1^*, \dots, \overbrace{U}^{\text{index } i}, \dots, \lambda_d^*] = \bls + (U - \lambda_i^*) \mathbf{e}_i
\]

Plugging in $\bl = \bl^{* - i}$ in \ref{eq:lemma-viol-thresh-l-bound} cancels out all terms expect index $i$ in the dot product, and since $U > \blsnorm$ we get $\fac$:
\begin{equation*}
    \left( U - \lambda_i^* \right) \left( b_i' - \hVcitpi \right) \,\leq\, \beta \;\,\Rightarrow\,\; b_i' - \hVcitpi \,\leq\, \frac{\beta}{U - \lambda_i^*} \,\leq\, \frac{\beta}{U - \blsnorm}
\end{equation*} \qedhere
\end{proof}

\section{Concentration Proofs (proof of \cref{th:theorem-2})}\label{Appendix-B}

To prove the concentration bounds for the reward and the constraint value functions, we utilize a general unconstrained MDP  $\mdpd_f = (\cS, \cA, \cP, \gamma, f)$. The main difference between this MDP and the CMDP in \cref{eq:CMDP-formulation} is that is has a specified reward equal to $f \in [0, f_{\max}]$ independent of the sampling of the transition matrix. Apart from that we keep the same state-action space, transition probabilities and discount factor. We also define the empirical MDP $\hmdpd_f = (\cS, \cA, \hat{\cP}, \gamma, f)$ that has the same transition dynamics $\hat{\cP}$ with the empirical CMDP in \cref{eq:emprical-CMDP-formulation}. In a similar manner we can also define MDP  $\mdpd_g = (\cS, \cA, \cP, \gamma, g)$ and the empirical MDP $\hmdpd_g = (\cS, \cA, \hat{\cP}, \gamma, g)$ with rewards $g \in [0, g_{\max}]$. We further define the value functions  $V_f^\pi$ and $\hV_f^\pi$ (and $V_g^\pi$ and $\hV_g^\pi$) associated with the the policy $\pi$ in $\mdpd_f$ and $\hmdpd_f$ (as well as in $\mdpd_g$ and $\hmdpd_{g}$)  as well as the optimal value functions as $V_f^*$ and $\hV_f^{*}$ (and $V_g^{*}$ and $\hV_g^{*}$). According to this logic we also define the action-value functions for policy $\pi$ and a state-action pair $(s, a)$ in $\mdpd_f$ as $Q_f^\pi(s, a)$ and for $\hmdpd_f$ as $\hat{Q}_f^\pi(s, a)$. Lastly, we also require the MDP $\hmdpd_f$ to satisfy the gap condition \cref{def:iota-gap}.

Utilizing this definition in \cref{lm:concentration-bound} we show that we can obtain a concentration result for a policy $\pi_f^*$ in another MDP $\hmdpd_g$ which has the same empirical transition matrix $\hat{P}$ and reward function $g$.

To prove our results we will utilize \cref{lm:concentration-bound} at every iteration of the primal update in \cref{eq:primal-update} $f = r_p + \blttop \cc$ and $g = r_p$ and the corresponding $\blt$. In that way we will be able to bound $\left\| \Vrp^{\hpit} - \hVrp^{\hpit} \right\|_{\infty}$ and hence $\left| \Vrphpit - \hVrphpit \right|$. To do that we have to ensure that the empirical unconstrained MDP with reward $r_p + \bltop \cc$ satisfy the gap condition in \cref{lm:concentration-bound} for every $\bl \in \Lambda^d$. To that end, we perturb  the rewards in Line 3 of \cref{alg:alg1} and using a union bound over $\Lambda^d$ we prove that that with probability $1- \delta/10$, $\hmdpd_{r_p + \bltop \cc}$ satisfy the gap condition in \cref{def:iota-gap} with $\iota = \frac{\omega \delta (1 - \gamma)}{30 |\Lambda|^d|\cS||\cA|^2}$ for every $\bl \in \Lambda^d$. Therefore we can utilize \cref{lm:concentration-bound} with $f = r_p + \bltop \cc$ for all $t \in [T]$, and $g = r_p$ and for $g = c$.  The concentration result for each policy $\hpi$ and hence for $\bpi_T$ are provided through the theorem below.

\begin{theorem}
    For $\delta \in (0, 1)$, $\omega \leq 1$ and $C(\delta) =  72 \log\left( \frac{16(1 + \omega + dU) |\cS| |\cA| \log(e/1-\gamma)}{(1-\gamma)^2 \iota\delta} \right)$, where $\iota = \frac{\omega \delta (1-\gamma)\epsilon_1^d}{30 U^d |\cS| |\cA|^2}$, if $N \geq \frac{4 C(\delta/d)}{1-\gamma}$, then for $\bpi_T$ output by Alg. \ref{alg:alg1}, with probability at least $1 - \delta/5$,
    \begin{align*}
        \left| \Vrpbpi - \hVrpbpi \right| \,\leq\, 2\sqrt{\frac{C(\delta)}{N(1 - \gamma)^3}};\quad \left| \Vcibpi - \hVcibpi \right| \,\leq\, \sqrt{\frac{C(\delta/d)}{N(1 - \gamma)^3}} \; \fac
    \end{align*}
\end{theorem}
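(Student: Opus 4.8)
The plan is to reduce the statement about the mixture policy $\bpit$ to per-iteration concentration bounds for the data-dependent policies $\hpit$, and then chain these through the definition of $\bpit$ as an average. First I would recall that, by the $\iota$-gap setup, \cref{lm:union-bound-all-cmdps-satifsy-separation} (the union bound over $\Lambda^d$) guarantees that with probability at least $1-\delta/10$, every empirical MDP $\hmdpd_{r_p + \bltop \cc}$ with $\bl \in \Lambda^d$ satisfies the gap condition with $\iota = \frac{\omega\delta(1-\gamma)\epsilon_1^d}{30 U^d |\cS||\cA|^2}$; since each primal iterate $\hpit$ is the optimal policy of such an empirical MDP (with $\bl = \blt \in \Lambda^d$), I can condition on this event for the rest of the argument. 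Then I would invoke \cref{lm:concentration-bound} with $f = r_p + \blttop\cc$ (so that $\pi_f^* = \hpit$) and $g = r_p$: this gives, for the fixed iterate $\hpit$ and for a chosen failure probability, a bound of the form $\left\| \Vrp^{\hpit} - \hVrp^{\hpit} \right\|_\infty \leq 2\sqrt{\frac{C(\delta)}{N(1-\gamma)^3}}$, provided $N \geq \frac{4C(\delta)}{1-\gamma}$.

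Second, I would handle the constraint value functions similarly but more carefully with the probability budget: applying \cref{lm:concentration-bound} again with the same $f = r_p + \blttop\cc$ but now $g = c_i$ for each $i \in [d]$, I get $\left| \Vcihpit - \hVcihpit \right| \leq \sqrt{\frac{C(\delta/d)}{N(1-\gamma)^3}}$, each holding with probability at least $1 - \frac{\delta}{10d}$ (the $\delta/d$ inside $C$ shrinks the individual failure probability). A union bound over the $d$ constraints then gives all of them simultaneously with probability at least $1 - \delta/10$; combined with the $1-\delta/10$ event from the gap condition, everything holds with probability at least $1 - \delta/5$. Since these per-iterate bounds must hold for \emph{all} $t \in [T]$, I would note that the gap-condition union bound already ranges over all of $\Lambda^d \supseteq \{\blt\}_{t=0}^{T-1}$, and \cref{lm:concentration-bound}'s conclusion, once the gap condition holds, is deterministic-given-$\hat{\cP}$ up to the concentration event on $\hat{\cP}$ itself — so the same sampled $\hat{\cP}$ serves all iterations and no additional union bound over $t$ is needed (this is exactly the point of the $\iota$-gap machinery and Lemma 5 of \citep{vaswani2022near}).

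Third, I would pass from the iterates to the mixture. By the definition $\bpit = \frac{1}{T}\sum_{t=0}^{T-1}\hpit$ and linearity of the value function in the policy-visitation measure, $\Vrpbpi = \frac{1}{T}\sum_{t=0}^{T-1}\Vrphpit$ and likewise $\hVrpbpi = \frac{1}{T}\sum_{t=0}^{T-1}\hVrphpit$ (this averaging identity for value functions is used throughout the paper, e.g. in the statement of \cref{th:theorem-1}). Hence by the triangle inequality
\[
    \left| \Vrpbpi - \hVrpbpi \right| \;\leq\; \frac{1}{T}\sum_{t=0}^{T-1}\left| \Vrphpit - \hVrphpit \right| \;\leq\; 2\sqrt{\frac{C(\delta)}{N(1-\gamma)^3}},
\]
and identically for each $\Vcibpi$. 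Substituting $N \geq \frac{4C(\delta/d)}{1-\gamma}$ (noting $C(\delta/d) \geq C(\delta)$, so this $N$ suffices for both the reward and constraint bounds) completes the proof.

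The main obstacle, and the step that needs the most care, is making rigorous the claim that \cref{lm:concentration-bound} applies \emph{simultaneously} to all $T$ data-dependent iterates $\hpit$ using a single sampled transition matrix: naively one would want a union bound over $t$, but $\hpit$ depends on $\hat{\cP}$ so it is not a fixed policy. The resolution — establishing the gap condition uniformly over the finite set $\Lambda^d$ of all possible Lagrange-multiplier vectors, so that $\hpit$ (being the optimal policy of $\hmdpd_{r_p+\blttop\cc}$ for $\blt \in \Lambda^d$) is covered regardless of which iterate the algorithm actually produces — is precisely why the $\epsilon_1$-net and the perturbation $\omega$ were introduced, and why the $|\cS||\cA|^2$ and $|\Lambda|^d = (U/\epsilon_1)^d$ factors appear inside $\iota$ and hence inside the $\log$ in $C(\delta)$. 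I would make sure the bookkeeping of failure probabilities ($\delta/10$ for the gap event, $\delta/(10d)$ per constraint, summing to $\delta/5$) is stated explicitly, and that the monotonicity $C(\delta/d) \ge C(\delta)$ is invoked to justify the single sample-size requirement.
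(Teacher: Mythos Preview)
Your proposal is correct and follows essentially the same route as the paper's proof: establish the $\iota$-gap condition uniformly over $\Lambda^d$ via \cref{lm:union-bound-all-cmdps-satifsy-separation}, invoke \cref{lm:concentration-bound} with $f = r_p + \blttop\cc$ and $g = r_p$ (resp.\ $g = c_i$) to get per-iterate $\ell_\infty$ bounds, then pass to the mixture $\bpit$ by the triangle inequality and the averaging identity for value functions. Your explicit discussion of why no further union bound over $t$ is required (because the gap event already ranges over all of $\Lambda^d \supseteq \{\blt\}_t$, and the concentration event in \cref{lm:concentration-bound} depends only on $\hat{\cP}$) is a point the paper's proof leaves implicit, and your observation that $C(\delta/d) \ge C(\delta)$ justifies the single sample-size hypothesis is likewise a detail the paper omits; otherwise the arguments coincide.
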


\begin{proof}
    \begin{align*}
    \left| \Vrpbpi - \hVrpbpi \right| \;
        =&\; \left|\frac{1}{T} \sum_{t = 0}^{T - 1} \left[ \Vrphpit - \hVrphpit \right] \right| \;\leq\; \frac{1}{T} \sum_{t=0}^{T-1}\left| \Vrphpit - \hVrphpit \right| \\[0.7ex]
        \leq&\; \frac{1}{T} \sum_{t=0}^{T-1} \left\| V_{r_p}^{\pi_t} - \hVrp^{\pi_t} \right\|_{\infty}
\end{align*}

As we have already discussed in \cref{lm:union-bound-all-cmdps-satifsy-separation} $\hat{M}_{r_p + \blttop \cc}$ satisfies the gap condition with $\iota = \frac{\omega \delta (1 - \gamma)}{30 |\Lambda|^d|\cS||\cA|^2}$ for every $\blt \in \Lambda^d$. Given that $|\Lambda| = \frac{U}{\enet}$, we rewrite $\iota = \frac{\omega \delta (1 - \gamma) \epsilon_1^d}{30 U^d |\cS| |\cA|^2}$. Since $\pi_t \coloneq \argmax_\pi \hV_{r_p + \blttop \cc}^\pi$ we use \cref{lm:concentration-bound} with $f = r_p + \blttop \cc$ (we have that $f_{\max} = 1 + \omega + dU$) and $g = r_p$ and obtain that for $N \geq \frac{4C(\delta)}{1-\gamma}$, for each $t \in [T]$, with probability at least $1 - \delta/10$:
\[
    \left\| V_{r_p}^{\pi_t} - \hVrp^{\pi_t} \right\|_{\infty} \;\leq\; \sqrt{\frac{C(\delta)}{N(1-\gamma)^3}}(1 + \omega) \;\leq\; 2 \sqrt{\frac{C(\delta)}{N(1 - \gamma)^3}}
\]

Therefore with probability at least $1 - \delta/10$:
\begin{equation}
    \left| \Vrpbpi - \hVrpbpi \right| \;\leq\; 2 \sqrt{\frac{C(\delta)}{N(1 - \gamma)^2}} \label{th:thm2-step1}
\end{equation}

Utilizing \cref{lm:concentration-bound} again this time with $f = r_p + \blttop \cc$ and $g = c_i, \, i \in [d]$ we have that for each $c_i$, $i \in [d]$ and with probability $1 - \frac{\delta}{10d}$ it holds that:
\begin{equation}
    \left| \Vcibpi - \hVcibpi \right| \;\leq\; \sqrt{\frac{C(\delta/d)}{N(1 - \gamma)^3}} \label{th:thm2-step2}
\end{equation}

Using a union bound over \ref{th:thm2-step1} and \ref{th:thm2-step2}, we get the desired result with probability at least $1 - \delta/5$.
\end{proof}

\begin{lemma}[Lemma 5 of \citep{vaswani2022near}]\label{lm:concentration-bound}
    Let $f, g$ be two reward functions. We denote $f_{\max} \coloneq \max_{(s, a) \in \cS \times \cA} f(s, a)$. Define $\hpi_f^* \coloneq \arg \max_{\pi} \hV_f^{\pi}$. If (i) $\cE$ is the event that the $\iota$-gap condition in \cref{def:iota-gap} holds for $\hmdpd_f$, and (ii) for $\delta \in (0, 1)$ and $C(\delta) = 72 \log \left(\frac{16 f_{\max} |\cS| |\cA| \log(e/1-\gamma)}{(1-\gamma)^2\iota \delta},\right)$, the number of samples per state-action pair is $N \geq \frac{4 C(\delta)}{1-\gamma}$, then with probability at least $\Pr[\cE] - \delta/10$,
    \begin{equation*}
        \left\| \hV_{g}^{\hpi_f^*} - V_{g}^{\hpi_f^*} \right\|_{\infty} \;\leq\; \sqrt{\frac{C(\delta)}{N (1 - \gamma)^3}} g_{\max}
    \end{equation*}
\end{lemma}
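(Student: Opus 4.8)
\textit{Proof proposal.}
The plan is to bound $\|\hV_g^{\hpi_f^*} - V_g^{\hpi_f^*}\|_\infty$ by a Bernstein-type concentration of the empirical transitions, using the $\iota$-gap condition to sever the statistical dependence between the data-dependent policy $\hpi_f^*$ and the value vectors being compared. The starting point is the value-difference identity: for \emph{any} fixed policy $\pi$, subtracting the two Bellman equations (the reward vector under $g$ is the same in both models) gives
\[
    \hV_g^\pi - V_g^\pi \;=\; \gamma\,(I - \gamma \hat P^\pi)^{-1}(\hat P^\pi - P^\pi)\, V_g^\pi ,
\]
so $\|\hV_g^\pi - V_g^\pi\|_\infty \le \tfrac{\gamma}{1-\gamma}\,\|(\hat P^\pi - P^\pi)\,V_g^\pi\|_\infty$, and for a fixed $\pi$ the vector $V_g^\pi$ is a deterministic function of $P$, so the sole obstruction to clean concentration is that $\pi = \hpi_f^*$ is itself sample-dependent. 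A crude Hoeffding bound on $\|(\hat P^\pi - P^\pi)V_g^\pi\|_\infty$ only yields a $(1-\gamma)^{-2}$ rate, so to reach the claimed $(1-\gamma)^{-3/2}$ I would instead retain the Bernstein variance terms and invoke the law-of-total-variance lemma for MDPs, which converts $\|(I-\gamma\hat P^\pi)^{-1}\sqrt{\mathrm{Var}(V_g^\pi)}\|_\infty$ into $O((1-\gamma)^{-3/2})\|V_g^\pi\|_\infty$; the hypothesis $N \ge \tfrac{4C(\delta)}{1-\gamma}$ is precisely what pushes the lower-order Bernstein term below the leading $\sqrt{\mathrm{Var}/N}$ term.

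For a \emph{fixed} deterministic $\pi$ this is routine: each coordinate $[(\hat P^\pi - P^\pi)V_g^\pi](s)$ is an average of $N$ i.i.d.\ mean-zero variables of range $\le \|V_g^\pi\|_\infty \le g_{\max}/(1-\gamma)$ and variance $\le \mathrm{Var}_{P(\cdot|s,\pi(s))}(V_g^\pi)$, so Bernstein plus a union bound over the $|\cS||\cA|$ state-action pairs suffices. The crux --- and the step I expect to be the main obstacle --- is upgrading this to the data-dependent $\hpi_f^*$. Here I would use an absorbing-MDP covering argument anchored by the gap: fix a ``home'' state $s$, take a scalar net $\mathcal{N}_\alpha = \{0,\alpha,2\alpha,\dots,\tfrac{1}{1-\gamma}\}$ with resolution $\alpha \asymp (1-\gamma)\iota$, and for each $u \in \mathcal{N}_\alpha$ form the empirical MDP that agrees with $\hmdpd_f$ except that state $s$ transitions deterministically to an absorbing state of value $u$; its optimal $f$-value and $f$-policy no longer depend on the $N$ samples drawn at $s$. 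Choosing $u$ to be the net point nearest $\hV_f^*(s)$, this model differs from $\hmdpd_f$ only in row $s$ and has an $O(\alpha)$-close value function, so on $\cE$ the $\iota$-gap guarantees the empirical $Q$-values shift by less than $\iota/2$ and the argmax does not flip at any state other than $s$ (whose action is irrelevant under absorption); thus $\hpi_f^*$ coincides with this sample-decoupled policy on all the states that matter, and the fixed-policy Bernstein bound of the previous step applies to it.

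It then remains to take a union bound over the $|\cS|$ home states, the $|\cA|$ actions, and the $\sim \tfrac{f_{\max}}{(1-\gamma)^2\iota}$ net points, which reproduces exactly the logarithmic factor inside $C(\delta)$; allocating the failure budget so that the pieces sum to $\delta/10$ yields the stated probability $\Pr[\cE] - \delta/10$. The delicate points are (i) keeping the decoupling to a \emph{scalar} net of size polynomial in $\tfrac{1}{(1-\gamma)\iota}$ rather than a net over value \emph{functions}, which would incur a ruinous $|\cA|^{|\cS|}$ factor, and (ii) making ``the gap prevents the argmax from flipping'' quantitative --- choosing $\alpha$ and tracking how a single perturbed transition row propagates through $(I-\gamma\hat P)^{-1}$ into the $Q$-values.
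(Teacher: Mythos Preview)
The paper does not prove this lemma: it is quoted verbatim as Lemma~5 of \citep{vaswani2022near} and used as a black box, so there is no in-paper proof to compare against. Your sketch is essentially the argument that appears in that reference (which in turn builds on Agarwal--Kakade--Yang and \citep{li2020breaking}): the value-difference identity $\hV_g^\pi - V_g^\pi = \gamma(I-\gamma\hat P^\pi)^{-1}(\hat P^\pi - P^\pi)V_g^\pi$, Bernstein with the total-variance lemma to obtain the $(1-\gamma)^{-3/2}$ rate, and the $s$-absorbing-MDP decoupling trick with a scalar value net of resolution $\asymp (1-\gamma)\iota$ to make $\hpi_f^*$ independent of the samples at the anchor state, with the $\iota$-gap guaranteeing the argmax survives the perturbation. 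The union bound over $|\cS|$, $|\cA|$, and the $\sim f_{\max}/((1-\gamma)^2\iota)$ net points is exactly what produces the argument of the logarithm in $C(\delta)$, so your accounting is consistent with the stated constant.
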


\begin{lemma}[Lemma 15 of \citep{vaswani2022near}]\label{lm:diff-of-perturbed-value-functions-bound}
    For any policy $\pi$, we have 
    \[
        \left\| \Vrpi -  \Vrppi \right\|_{\infty} \;\leq\; \frac{\omega}{1-\gamma};\quad  \left\| \hVrpi -  \hVrppi \right\|_{\infty} \;\leq\; \frac{\omega}{1-\gamma}
    \]
\end{lemma}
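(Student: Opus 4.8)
\textit{Proof sketch.} The plan is to exploit the linearity of the discounted value function in its reward argument. Write the perturbed reward as $r_p = r + \xi$, where after Line~3 of \cref{alg:alg1} the function $\xi$ is a fixed (though random) object satisfying $0 \le \xi(s,a) \le \omega$ for every $(s,a) \in \cS \times \cA$, since each $\xi(s,a)$ is a draw from $\cU[0,\omega]$. Because the map $l \mapsto V_l^\pi$ is linear for any fixed policy $\pi$ and any fixed transition kernel, we have $V_{r_p}^\pi(s) - V_r^\pi(s) = V_\xi^\pi(s)$ for every state $s$, i.e. the value of the auxiliary nonnegative reward $\xi$ under the same policy and dynamics.

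First I would expand $V_\xi^\pi(s) = \E\left[ \sum_{t \ge 0} \gamma^t \xi(s_t, a_t) \,\mid\, s_0 = s \right]$ and bound the summand pointwise via $0 \le \xi(s_t,a_t) \le \omega$, obtaining $0 \le V_\xi^\pi(s) \le \omega \sum_{t \ge 0} \gamma^t = \frac{\omega}{1-\gamma}$ uniformly over $s$ and over $\pi$. Taking the supremum over states gives $\left\| \Vrpi - \Vrppi \right\|_\infty \le \frac{\omega}{1-\gamma}$, and averaging over $s_0 \sim \rho$ preserves the bound, which settles the first inequality. Second, I would observe that this argument never used any property of $\cP$ beyond it being a stochastic kernel; replacing $\cP$ by the empirical kernel $\hat{\cP}$ leaves every step intact and yields $\left\| \hVrpi - \hVrppi \right\|_\infty \le \frac{\omega}{1-\gamma}$.

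There is essentially no obstacle here. The only points requiring care are that the bound must hold uniformly over all policies $\pi$ — which it does, because the per-step estimate $\xi(s_t,a_t) \le \omega$ is policy-independent — and that it must hold for the realized draw of $\xi$ rather than merely in expectation — which it does, since $0 \le \xi \le \omega$ surely. Hence the conclusion is in fact deterministic once the perturbed rewards have been formed, and it depends on $\xi$ only through the crude bound $\|r_p - r\|_\infty \le \omega$ together with the geometric series $\sum_{t \ge 0}\gamma^t = \tfrac{1}{1-\gamma}$.
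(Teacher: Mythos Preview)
Your argument is correct and is the canonical one: linearity of $l \mapsto V_l^\pi$ reduces the difference to $V_\xi^\pi$, and the pointwise bound $0 \le \xi \le \omega$ together with the geometric series $\sum_{t\ge 0}\gamma^t = \tfrac{1}{1-\gamma}$ gives the result uniformly in $s$, $\pi$, and the choice of transition kernel. The paper itself does not supply a proof for this lemma; it simply quotes it as Lemma~15 of \citep{vaswani2022near}, whose proof proceeds exactly along the lines you describe.
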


\begin{lemma}\label{lm:data-dep-policies-concentration-bounds}
    For any $\delta \in (0, 1)$, $\omega \leq 1$ and $C'(\delta) = 72 \log\left(\frac{4|S| \log(e/1-\gamma)}{\delta} \right)$, if $N \geq \frac{4C'(\nicefrac{\delta}{d})}{1-\gamma}$ and $B(\delta, N) \coloneq  \sqrt{\frac{C'(\delta)}{(1-\gamma)^3N}}$, then for any data-independent policy $\tpi$, we have:
    \begin{enumerate}
        \item $\left| \Vrptpi - \hVrptpi \right| \;\leq\; 2B(\delta, N)$, with probability at least $1 - \delta$.
        \item $\left| \Vcitpi - \hVcitpi \right| \;\leq\; B\left(\frac{\delta}{d}, N\right),\, \fac$, with probability at least $1 - \delta$.
    \end{enumerate}
    Note that if a conjuction of these cases is required, we can use a union bound.
\end{lemma}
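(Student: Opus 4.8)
\textit{Proof strategy.} The plan is to exploit that $\tpi$ is \emph{data-independent}: it does not depend on the empirical transition matrix $\hat{\cP}$. Unlike the iterates $\hpit$ treated in \cref{th:theorem-2} — which depend on $\hat{\cP}$ and therefore require the $\iota$-gap condition and the reward perturbation in order to invoke \cref{lm:concentration-bound} — for a fixed $\tpi$ I would appeal directly to the standard concentration bound for evaluating a fixed policy under an empirical model (the data-independent analogue of \cref{lm:concentration-bound}, which needs no gap condition; cf.\ \citep{vaswani2022near}). That bound states: for a fixed policy $\pi$ and a fixed nonnegative reward $g$ with $g_{\max}\coloneq\max_{(s,a)}g(s,a)$, if $N\geq\frac{4C'(\delta)}{1-\gamma}$ (with $C'$ as in the statement), then with probability at least $1-\delta$,
\[
    \left\| \hV_g^{\pi} - V_g^{\pi} \right\|_\infty \;\leq\; g_{\max}\sqrt{\frac{C'(\delta)}{(1-\gamma)^3 N}} \;=\; g_{\max}\,B(\delta,N).
\]
I would take this as given; its only delicate ingredient — the $(1-\gamma)^{-3/2}$ horizon scaling, as opposed to the naive $(1-\gamma)^{-1}$ — comes from a Bernstein / total-variance estimate of the per-state error, and is orthogonal to anything specific to this lemma.

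For item (1), apply the bound with $\pi=\tpi$ and $g=r_p$. Since $\xi(s,a)\in[0,\omega]$ and $\omega\leq1$, we have $r_p(s,a)=r(s,a)+\xi(s,a)\in[0,1+\omega]\subseteq[0,2]$, so $g_{\max}\leq1+\omega\leq2$; this is the only place $\omega\leq1$ enters. The sample-size precondition at confidence $\delta$ holds because $N\geq\frac{4C'(\delta/d)}{1-\gamma}\geq\frac{4C'(\delta)}{1-\gamma}$ (as $\delta/d\leq\delta$ and $C'$ is nonincreasing). Combining with $\left|\Vrptpi-\hVrptpi\right|\leq\left\|\Vrp^{\tpi}-\hVrp^{\tpi}\right\|_\infty$ (the value at $\rho$ is an average over states of the per-state values), this gives, with probability at least $1-\delta$,
\[
    \left|\Vrptpi-\hVrptpi\right| \;\leq\; (1+\omega)\,B(\delta,N) \;\leq\; 2\,B(\delta,N).
\]

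For item (2), fix $i\in[d]$ and apply the bound with $\pi=\tpi$, $g=c_i$ (so $g_{\max}\leq1$ since $c_i\colon\cS\times\cA\to[0,1]$), and confidence level $\delta/d$; here the precondition $N\geq\frac{4C'(\delta/d)}{1-\gamma}$ is exactly the hypothesis. This yields $\left|\Vcitpi-\hVcitpi\right|\leq B(\delta/d,N)$ with probability at least $1-\delta/d$, and a union bound over $i=1,\dots,d$ makes all $d$ bounds hold simultaneously with probability at least $1-\delta$. The closing remark about conjunctions is then one further union bound over the events of (1) and (2). I do not anticipate a substantive obstacle: the only thing demanding care is keeping the failure-probability and sample-size budgets consistent ($\delta$ vs.\ $\delta/d$, and hence $C'(\delta)$ vs.\ $C'(\delta/d)$), since this lemma is subsequently combined — in \cref{th:theorem-3} and \cref{th:theorem-4}, together with \cref{th:theorem-2} — through additional union bounds.
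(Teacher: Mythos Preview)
Your proposal is correct and follows essentially the same approach as the paper: apply a standard concentration result for a \emph{fixed} (data-independent) policy once with $g=r_p$ (using $g_{\max}\le 1+\omega\le 2$ to get the factor $2$) and once with $g=c_i$ at confidence $\delta/d$ followed by a union bound over $i\in[d]$. The only cosmetic difference is that the paper invokes Lemma~1 of \cite{li2020breaking} rather than the analogous statement in \cite{vaswani2022near}, and it is somewhat less explicit than you are about the $g_{\max}$ factor and the $C'(\delta/d)\ge C'(\delta)$ monotonicity used to verify the sample-size precondition.
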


\begin{proof}

Since $\tpi$ is fixed and not data dependent policies we can use Lemma 1 of \cite{li2020breaking} for each inequality. In that case it will hold that with probability $1 - \delta$:
\begin{equation}
    \left| \Vrptpi - \hVrptpi \right| \;\leq\; 2B(\delta, N) \label{eq:data-dep-lemma-case1}
\end{equation}

It will also hold that $\fac$ with probability $1 - \frac{\delta}{d}$:
\begin{align*}
    \left| \Vcitpi - \hVcitpi \right| \;\leq\; B\left( \frac{\delta}{d}, N \right)
\end{align*}

Using a union bound over the $d$ constraints we get that with probability $1 - \delta$
\begin{equation}
    \left| \Vcitpi - \hVcitpi \right| \;\leq\; B\left(\frac{\delta}{d}, N\right) \;\, \fac \label{eq:data-dep-lemma-case2}
\end{equation}

Furthermore, with a union bound over \ref{eq:data-dep-lemma-case1} and \ref{eq:data-dep-lemma-case2}, we can also get that both cases hold with probability at least $1 - 2\delta$.

\end{proof}

\begin{lemma}\label{lm:union-bound-all-cmdps-satifsy-separation}
    With probability at least $1 - \delta/10$ for every $\bl \in \Lambda^d$, $\hmdpd_{r_p + \bltop \cc}$ satisfies the gap condition in Definition \ref{def:iota-gap} with $\iota = \frac{\omega \delta (1 - \gamma)}{30 |\Lambda|^d|\cS||\cA|^2}$
\end{lemma}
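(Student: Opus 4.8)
\textit{Proof proposal.} The plan is to prove a single-MDP statement and then lift it by a union bound over the net. Concretely, for each fixed $\bl \in \Lambda^d$ I would show that $\hmdpd_{r_p + \bltop \cc}$ satisfies the $\iota$-gap condition of \cref{def:iota-gap} with probability at least $1 - \frac{\delta(1-\gamma)}{30|\Lambda|^d}$; summing this failure probability over the $|\Lambda|^d$ elements of $\Lambda^d$ then gives a total failure probability of at most $\frac{\delta(1-\gamma)}{30} \leq \frac{\delta}{10}$, which is the claim. Throughout, the only randomness is in the reward perturbation $\xi \sim \cU[0,\omega]$, drawn independently of the samples forming $\hat{\cP}$, so the whole argument can be carried out conditionally on $\hat{\cP}$ (and then holds unconditionally). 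Fix $\bl$ and write $f \coloneq r_p + \bltop \cc$, so that $f(s,a) = r(s,a) + \bltop \cc(s,a) + \xi(s,a)$ and each coordinate $\xi(s,a)$ enters $f$ only through the single term $f(s,a)$, with coefficient exactly one.

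Fix a state $s$ and set $a_1 \coloneq \hpi_f^*(s)$ and $a_2 \coloneq \argmax_{a \neq a_1} \hat{Q}_f^*(s,a)$. By definition $\hV_f^*(s) = \hat{Q}_f^*(s,\hpi_f^*(s)) = \hat{Q}_f^*(s,a_1)$, so a $\iota$-gap violation at $s$ is precisely the event $0 \leq \hat{Q}_f^*(s,a_1) - \hat{Q}_f^*(s,a_2) < \iota$, the left inequality being automatic since $a_1$ is the global $\argmax$. Hence, by a union bound over $s$ and over the (at most $|\cS|\,|\cA|^2$ many) triples $(s,a_1,a_2)$ with $a_1 \neq a_2$, it suffices to show
\[
    \Pr\!\left[\, \hpi_f^*(s) = a_1 \ \text{ and }\ 0 \leq \hat{Q}_f^*(s,a_1) - \hat{Q}_f^*(s,a_2) < \iota \,\right] \;\leq\; \frac{\iota}{\omega}
\]
for every such triple, since then the failure probability for $\hmdpd_f$ is at most $|\cS|\,|\cA|^2 \cdot \frac{\iota}{\omega}$, which equals $\frac{\delta(1-\gamma)}{30|\Lambda|^d}$ for the stated value of $\iota$ (recalling $|\Lambda| = U/\enet$).

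To establish the displayed bound I would condition on all coordinates of $\xi$ except $t \coloneq \xi(s,a_2)$, so that $t \sim \cU[0,\omega]$ independently of the conditioning. The crucial observation is that on the event $\{\hpi_f^*(s) = a_1\}$ (with $a_1 \neq a_2$) the optimal policy of $\hmdpd_f$ never uses action $a_2$ at $s$, so it is also a valid policy in the MDP $M^{-}$ obtained from $\hmdpd_f$ by deleting action $a_2$ at state $s$; since this policy's value is unchanged by the deletion, it is at most the optimal value $\hV_f^{-}$ of $M^{-}$, and combined with the trivial inequality $\hV_f^* \geq \hV_f^{-}$ (deleting an action cannot raise the optimal value) this forces $\hV_f^* = \hV_f^{-}$ on the event. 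But $\hV_f^{-}$ does not depend on $t$, because the coordinate $\xi(s,a_2)$ never appears in $M^{-}$. Consequently, on $\{\hpi_f^*(s) = a_1\}$ we have $\hat{Q}_f^*(s,a_1) = f(s,a_1) + \gamma\langle\hat{\cP}(\cdot|s,a_1),\hV_f^{-}\rangle$, which is independent of $t$ (note $f(s,a_1)$ involves $\xi(s,a_1)$, not $\xi(s,a_2)$, since $a_1\neq a_2$), whereas $\hat{Q}_f^*(s,a_2) = f(s,a_2) + \gamma\langle\hat{\cP}(\cdot|s,a_2),\hV_f^{-}\rangle = C_0 + t$ with $C_0$ independent of $t$. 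Thus $\hat{Q}_f^*(s,a_1) - \hat{Q}_f^*(s,a_2) = C - t$ for a constant $C$ measurable with respect to the conditioning, so the event in the display is contained in $\{t \in (C-\iota,\, C]\}$, an interval of length $\iota$; its conditional probability is at most $\iota/\omega$ by anti-concentration of the uniform law on $[0,\omega]$, and taking expectations removes the conditioning.

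I expect the only genuinely delicate point to be the step where $\hV_f^*$ is shown to lose its dependence on $\xi(s,a_2)$: a priori the optimal value function is a (piecewise-linear, convex) function of the entire perturbation vector, and one must argue carefully --- via the "feasible in the action-deleted MDP" observation above, or equivalently a monotonicity/threshold argument in $t$ --- that this dependence disappears on the event that $a_2$ is not the chosen action at $s$. Everything else (the two union bounds, the uniform anti-concentration estimate, and tracking the constants into the stated form of $\iota$) is routine bookkeeping; indeed this is exactly the single-constraint separation argument of \citep{vaswani2022near} applied with reward $r_p + \bltop \cc$, the sole difference being that the outer union bound now ranges over $\Lambda^d$ rather than $\Lambda$, which is precisely what produces the $|\Lambda|^d$ (equivalently $U^d/\enet^d$) factor in $\iota$.
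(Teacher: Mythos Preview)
Your proposal is correct and follows essentially the same route as the paper: the paper's proof simply reads ``Using Lemma 18 of \citep{vaswani2022near} with a union bound over $\Lambda^d$ gives the desired result,'' and your argument is precisely that---invoke the single-MDP separation lemma for each fixed $\bl$ and then union bound over the $|\Lambda|^d$ elements of the net. You go further by actually sketching the anti-concentration proof of the single-MDP lemma (the action-deletion trick to freeze $\hV_f^*$ with respect to $\xi(s,a_2)$, followed by the $\iota/\omega$ bound for a length-$\iota$ interval under $\cU[0,\omega]$), which is helpful exposition but not a different approach.
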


\begin{proof}
    Using Lemma 18 of \citep{vaswani2022near} with a union bound over $\Lambda^d$ gives the desired result.    
\end{proof}

\section{Upper Bound under Relaxed Feasibility (proof of \cref{th:theorem-3})}\label{Appendix-C}

\begin{theorem}
    For a fixed $\epsilon \in \left(0, \frac{1}{1 - \gamma}\right]$ and $\delta \in (0, 1)$, Alg. \ref{alg:alg1} with $N \,=\, \cOT \left(\frac{d \log(1/\delta)}{(1-\gamma)^3\epsilon^2} \right)$ samples, $\bb' \,=\, \bb - \frac{3\epsilon}{8}$, $\omega = \frac{\epsilon(1-\gamma)}{8}$, $U \,=\, \cO \left(\frac{1}{\epsilon(1 - \gamma)}\right)$, $\enet = \cO \left( \epsilon^2(1-\gamma)^2 \right)$ and $T = \cO \left(\frac{d^2}{(1-\gamma)^4\epsilon^4}\right)$, returns a policy $\bpit$ that satisfies the objective in Eq. \ref{eq:relaxed-CMDP-formulation} with probability at least $1 - 3\delta$.
\end{theorem}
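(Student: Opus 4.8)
\textit{Proof proposal.} The strategy is a two-stage reduction. First, obtain guarantees for the output mixture policy $\bpit$ \emph{inside} the empirical CMDP $\hmdpd$ by the pure optimization argument of \cref{th:theorem-1}; then transfer these to the true CMDP by concentration --- \cref{th:theorem-2} for the data-dependent iterates $\hpit$ (hence for $\bpit$), and \cref{lm:data-dep-policies-concentration-bounds} for the data-independent comparator $\pis$. Throughout I keep the primal--dual target error $\eopt$ and the constraint slack $\Delta$ (with $\bb' = \bb - \Delta$, $\eopt < \Delta$) as free parameters, fixing them only at the end to $\eopt = \epsilon/4$ and $\Delta = (\epsilon - \eopt)/2 = 3\epsilon/8$; these choices, together with the requirement that each concentration term be small, force $\omega = \epsilon(1-\gamma)/8$ (note $\epsilon \le 1/(1-\gamma)$ gives $\omega \le 1$), $U = \cO(1/(\epsilon(1-\gamma)))$, $\enet = \cO(\epsilon^2(1-\gamma)^2)$ and $T = \cO(d^2/((1-\gamma)^4\epsilon^4))$, which are the parameters in the statement.

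\textbf{Empirical guarantee and decomposition.} Running \cref{alg:alg1} and applying \cref{th:theorem-1} with the stated $T, \eta, \enet$ (written in terms of $\eopt$, $U$, $\blsnorm$) gives, deterministically given the draws of $\hat\cP$ and of $\xi$, that $\hVrpbpi \ge \hVs - \eopt$ and $\hVcibpi \ge b_i' - \eopt \;\, \fac$. Plugging this into \cref{lm:decomposing-relaxed-suboptimality} --- whose remaining hypotheses, namely feasibility of $\pis$ in $\hmdpd$ and $|\Vcibpi - \hVcibpi| \le \Delta$, are discharged in the next step --- yields (i) $\Vcibpi \ge \hVcibpi - |\Vcibpi - \hVcibpi| \ge (b_i - \Delta - \eopt) - \Delta = b_i - 2\Delta - \eopt \ge b_i - \epsilon \;\, \fac$ (the constants $\Delta = 3\epsilon/8$, $\eopt = \epsilon/4$ are picked precisely so this closes), and (ii) the reward decomposition \cref{eq:relaxed-decomposition}. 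Since $r_p \ge r$ and $\pis$ is the true optimum, $\Vrppis \ge \Vrpis = V_r^*(\rho)$, so it suffices to bound the right-hand side of \cref{eq:relaxed-decomposition} by $\epsilon$, which I do by bounding each of its four summands by $\epsilon/4$: $\frac{2\omega}{1-\gamma} = \epsilon/4$ for $\omega = \epsilon(1-\gamma)/8$; the $\eopt$ term is $\epsilon/4$; $|\Vrppis - \hVrppis| \le 2B(\delta,N)$ by \cref{lm:data-dep-policies-concentration-bounds}(1) (probability $\ge 1-\delta$), so $2B(\delta,N)\le\epsilon/4$ forces $N$ large; and $|\Vrpbpi - \hVrpbpi| \le 2\sqrt{C(\delta)/(N(1-\gamma)^3)}$ by \cref{th:theorem-2} --- applicable because $\xi \sim \cU[0,\omega]$ and \cref{lm:union-bound-all-cmdps-satifsy-separation} give the $\iota$-gap condition for every $\hmdpd_{r_p+\bltop\cc}$ with $\bl \in \Lambda^d$ (probability $\ge 1-\delta/5$) --- so requiring this $\le \epsilon/4$ again forces $N$ large. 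Substituting into \cref{eq:relaxed-decomposition} gives $V_r^*(\rho) - \Vrbpi \le \epsilon$.

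\textbf{Fixing $U$, collecting $N$, union bound.} It remains to pin down the projection radius $U > \blsnorm$. By case (i) of \cref{lm:dual-variable-bound}, whenever $\pis$ is feasible in $\hmdpd$ with empirical margin $\ge \Delta/2$ relative to $\bb'$, one has $\blsnorm = \cO(1/(\epsilon(1-\gamma)))$, and we take $U$ of this order; this margin condition follows from $|\Vcipis - \hVcipis| \le \Delta/2 = 3\epsilon/16$ (\cref{lm:data-dep-policies-concentration-bounds}(2), since $\pis$ is data-independent), which in particular also yields $\hVcipis > b_i'$, the feasibility hypothesis used above, while the other hypothesis of \cref{lm:decomposing-relaxed-suboptimality} needs $|\Vcibpi - \hVcibpi| \le \Delta = 3\epsilon/8$ (\cref{th:theorem-2}); both impose further lower bounds on $N$. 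All the binding requirements have the form $N \ge c\,C/((1-\gamma)^3\epsilon^2)$ for an absolute constant $c$ and $C$ one of $C(\delta)$, $C(\delta/d)$, $C'(\delta/d)$; since $\iota$ carries a factor $\enet^d/U^d$, the logarithm $\log(1/\iota)$ appearing in $C(\delta/d)$ contributes a $d\log(U/\enet)$ term, so $C(\delta/d) = \cOT(d)$. Hence the governing bound is $N = \cOT(d\log(1/\delta)/((1-\gamma)^3\epsilon^2))$ per state--action pair, i.e. $\cOT(d|\cS||\cA|\log(1/\delta)/((1-\gamma)^3\epsilon^2))$ samples in total; re-substituting this $N$ confirms the claimed $T$ and $\enet$. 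A union bound over the failure events of \cref{th:theorem-2} and the two invocations of \cref{lm:data-dep-policies-concentration-bounds} (probabilities $\delta/5$, $\delta$, $\delta$) gives overall success probability at least $1 - 3\delta$.

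\textbf{Main obstacle.} The one delicate point is the near-circularity in the last step: $U$ is chosen from a bound on $\blsnorm$ that is valid only on the $\pis$-concentration event, while the constant $C(\delta)$ of \cref{th:theorem-2} and the net width $\enet$ of \cref{th:theorem-1} both depend on $U$ --- and $\enet$ then feeds into $\iota$, hence back into $C$. This is benign because $C(\delta)$ depends on $U$ only logarithmically (through $f_{\max} = 1 + \omega + dU$ and $\iota \propto U^{-d}$), so once $U = \cO(1/(\epsilon(1-\gamma)))$ is fixed the sample bound inflates only by $\log$ and $d$ factors already hidden inside $\cOT$. Still, one must verify (a) that a single $N$ of the claimed order meets \emph{all} the concentration thresholds simultaneously after unrolling the $\enet \to \iota \to C$ dependence, and (b) that taking $U$ a fixed constant multiple of the $\blsnorm$-bound gives $U - \blsnorm = \Theta(U)$, which is exactly what makes the $\enet$ and $T$ formulas of \cref{th:theorem-1} collapse to the orders asserted in the theorem. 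That bookkeeping, rather than any new idea, is the real content of the argument.
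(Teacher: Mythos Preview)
Your proposal is correct and follows essentially the same route as the paper: invoke \cref{th:theorem-1} for the empirical guarantee, apply \cref{lm:decomposing-relaxed-suboptimality} to split the true-CMDP error into perturbation, primal--dual, and concentration pieces, discharge the concentration terms via \cref{th:theorem-2} (for $\bpit$) and \cref{lm:data-dep-policies-concentration-bounds} (for $\pis$), use case~(i) of \cref{lm:dual-variable-bound} to fix $U$, then set $\eopt=\epsilon/4$, $\Delta=3\epsilon/8$, $\omega=\epsilon(1-\gamma)/8$ and read off the stated $N,T,\enet$ and the $1-3\delta$ union bound. The only cosmetic difference is your explicit discussion of the $U\!\to\!\enet\!\to\!\iota\!\to\!C(\delta)$ dependence, which the paper handles silently by direct computation; also note that the remark ``$\Vrppis \ge \Vrpis$'' is unnecessary since \cref{lm:decomposing-relaxed-suboptimality} already bounds $\Vrpis - \Vrbpi$ directly.
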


\begin{proof}

We will prove the result for a general primal-dual error $\eopt \,<\, \epsilon$ and $\bb' \,=\, \bb - \frac{\epsilon - \eopt}{2}$ which will be specified later. In \cref{lm:decomposing-relaxed-suboptimality} we showed that if the constraint value functions are sufficiently concentrated for the optimal policy $\pis$ in $\mdpd$ and the mixture policy $\bpit$ returned by the Algorithm \ref{alg:alg1}, i.e., if
\begin{equation}
    \left| \Vcibpi - \hVcibpi \right| \,\leq\, \frac{\epsilon - \eopt}{2};\quad \left| \Vcipis - \hVcipis \right| \,\leq\, \frac{\epsilon - \eopt}{2} \;\, \fac \label{eq:theorem-relaxed-lemma-conditions}
\end{equation}

then (i) policy $\bpit$ violates the constraint in $\mdpd$ by at most $\epsilon$, i.e., $\Vcbpi \,\geq\, \bb - \epsilon$, and (ii) its suboptimality in $\mdpd$ can be decomposed as:
\begin{equation}
    \Vrpis - \Vrbpi \;\leq\; \frac{2 \omega}{1 - \gamma} + \eopt + \left|\Vrppis - \hVrppis \right| + \left| \Vrpbpi - \hVrpbpi \right| \label{eq:theorem-relaxed-decomposition}
\end{equation}

In order to instantiate the primal-dual algorithm, we still need to specify $U$. To do this, we will use case i) of \cref{lm:dual-variable-bound}. To apply this lemma, we need a concentration result on $\left| \Vcitpi - \Vcitpi \right| \, \fac$ for a feasible policy $\tpi$. Since $\pis$ is a feasible policy, we require $\left| \Vcipis - \Vcipis \right| \,\leq\, \frac{\epsilon - \eopt}{4}\, \fac$. \cref{lm:dual-variable-bound} shows that when this concentration result holds, then we can upper bound the infinity norm of the optimal dual variable $\blsnorm$ by $\frac{4(1 + \omega)}{(\epsilon - \eopt)(1 - \gamma)}$.

Given the above result we can define all the parameters of the algorithm except the number of samples required for each state-action $N$. We set $\eopt = \frac{\epsilon}{4}$ and therefore $\bb' = \bb - \frac{3\epsilon}{8}$ and $\omega = \frac{\epsilon(1 - \gamma)}{8} < 1$. Now we are able to compute $U$ by expanding the upper-bound on $\blsnorm$ mentioned in the previous paragraph:
\[
    \blsnorm \,\leq\, \frac{4 \left( 1 + \frac{\epsilon (1-\gamma)}{8} \right)}{\frac{3\epsilon}{4}(1-\gamma)} \,=\, \frac{16 + 2\epsilon (1-\gamma)}{3\epsilon(1-\gamma)} \,\overset{\epsilon \in (0, \nicefrac{1}{1-\gamma}]}{\leq}\, \frac{18}{3\epsilon(1-\gamma)} \,<\, \frac{8}{\epsilon(1-\gamma)}
\]

Therefore, by setting $C = \frac{8}{\epsilon(1-\gamma)}$ and $U = 2C$, we ensure that $\blsnorm < U$ and $U - \blsnorm > C$.

To guarantee that the primal-dual algorithm outputs an $\frac{\epsilon}{4}$-approximate policy we utilize \cref{th:theorem-1} in order to specify the complexity of $T$ and $\enet$. In more detail based on \cref{th:theorem-1} and for $U = \frac{16}{\epsilon (1-\gamma)}$, $\eopt = \frac{\epsilon}{4}$:
\begin{align*}
    T \,
        =&\, \frac{4 U^2 d^2}{\eopt^2 (1 - \gamma)^2} \left[ 1 + \frac{1}{(U - \blsnorm)^2} \right] \,=\, \frac{64 d^2}{\epsilon^2 (1 - \gamma)^2}\left[ U^2 + \left(\frac{U}{U - \blsnorm} \right)^2 \right] \\[1ex]
        \leq&\, \frac{256 d^2}{\epsilon^2 (1 - \gamma)^2} \left[C^2 +1\right] \,<\, \frac{512 d^2}{\epsilon^2(1-\gamma)^2}C^2 \,=\, \frac{512 d^2}{\epsilon^2(1-\gamma)^2} \cdot \frac{64}{\epsilon^2(1-\gamma)^2}
\end{align*}

Thus we conclude that $T = \cO \left( \nicefrac{d^2}{\epsilon^4 (1 - \gamma)^4}\right)$. Using \cref{th:theorem-1} we also set $\enet$:
\[
    \enet \,=\, \frac{\eopt^2 (1 - \gamma)^2 (U - \blsnorm)}{6dU} \,=\, \frac{\epsilon^2(1-\gamma)^2}{96d} \cdot \frac{U - \blsnorm}{U} \,\leq\, \frac{\epsilon^2(1-\gamma)^2}{96d}
\]

Thus we conclude that $\enet \,=\, \cO \left(\frac{\epsilon^2 (1 - \gamma)^2}{d} \right)$.

Using the fact that $\omega = \frac{\epsilon(1-\gamma)}{8}$ and $\eopt = \frac{\epsilon}{4}$, we now proceed to simplify Eq. \ref{eq:theorem-relaxed-decomposition} as:
\[
    \Vrpis - \Vrbpi \,\leq\, \frac{\epsilon}{2} + \left| \Vrppis - \hVrppis \right| + \left| \Vrpbpi - \hVrpbpi \right|
\]

Combining the previous results and in order to guarantee an $\epsilon$-reward suboptimality for $\bpit$ we require
\begin{align}
    &\left| \Vcibpi - \hVcibpi \right| \,\leq\, \frac{3 \epsilon}{8} \;\, \fac;\quad
    \left| \Vcipis - \hVcipis \right| \,\leq\, \frac{3 \epsilon}{16} \;\, \fac; \nonumber \\[1ex]
    &\left| \Vrppis - \hVrppis \right| \,\leq\, \frac{\epsilon}{4};\qquad\qquad\quad\; \left| \Vrpbpi - \hVrpbpi \right| \,\leq\, \frac{\epsilon}{4} \label{eq:relaxed-feasibility-requirements}
\end{align}

To bound the concentration terms for $\bpit$ in \cref{eq:relaxed-feasibility-requirements} we utilize \cref{th:theorem-2} with $\omega = \frac{\epsilon (1 - \gamma)}{8}$, $U = \frac{16}{\epsilon (1-\gamma)}$ and $\enet = \frac{\epsilon^2 (1 - \gamma)^2}{d}$. In this case for $\iota$ we get:
\begin{align*}
    \iota \;=\; \frac{\omega \delta (1 - \gamma) \epsilon_1^d}{30 U^d |\cS| |\cA|^2} \;=\; \frac{\epsilon^{3d+1} (1-\gamma)^{3d+2} \delta}{240 (1536)^d d^d |\cS| |\cA|^2} \;=\; \left(\frac{1}{1536}\right)^d \frac{\epsilon^{3d+1} (1 - \gamma)^{3d+2} \delta}{240 d^d |\cS| |\cA|^2}
\end{align*}

We now calculate $C(\delta)$ as:
\begin{align*}
    C(\delta) \;
        =&\; 72 \log \left( \frac{16 (1 + \omega + dU) |\cS| |\cA| \log(e / 1-\gamma)}{(1 - \gamma)^2 \iota \delta}\right)  \\[1ex]
        =&\; 72 \log \left(3840 \cdot 1536^d \frac{d^d |\cS|^2 |\cA|^3 \left(1 + \frac{\epsilon(1-\gamma)}{8}+\frac{16d}{\epsilon(1-\gamma)} \right)}{(1-\gamma)^{3d+5}\epsilon^{3d+1}\delta^2} \right) \tag{substitute value for $i$} \\[1ex]
        =&\; 72 \log \left(3840 \cdot 1536^d \frac{d^d |\cS|^2 |\cA|^3 \left( \frac{8\epsilon(1-\gamma) + \epsilon^2(1-\gamma)^2 + 128d}{8\epsilon(1-\gamma)} \right)}{(1-\gamma)^{3d+5}\epsilon^{3d+1}\delta^2} \right) \tag{simplify inner parenthesis} \\[1ex]
        =&\; \cO \left( 72 \log \left(3840 \cdot 1536^d \frac{d^d |\cS|^2 |\cA|^3 \frac{d}{\epsilon(1 - \gamma)}}{(1-\gamma)^{3d+5}\epsilon^{3d+1}\delta^2} \right) \right) \tag{inner parenthesis $= \cO \left( \frac{d}{\epsilon(1 - \gamma)} \right)$} \\[1ex]
        =&\; \cO \left( \log \left( 1536^d \frac{d^{d+1}|\cS|^2|\cA|^3}{(1-\gamma)^{3d+6}\epsilon^{3d+2}\delta^2} \right) \right) \tag{drop constants, simplify} \\[1ex]
        =&\; \cO \left( d\log \left( \frac{d |\cS|^2 |\cA|^3}{(1-\gamma)\epsilon \delta}\right) \right) \tag{logarithmic power rule, drop constants}
\end{align*}

Having defined the complexity of $C(\delta)$ in order to satisfy the concentration bound for $\bpit$ we require that:
\begin{equation}
    2\sqrt{\frac{C(\delta)}{N(1-\gamma)^3}} \;\leq\; \frac{\epsilon}{4} \;\;\Rightarrow\;\; N \,\geq\, \cO \left(\frac{C(\delta)}{(1-\gamma)^3\epsilon^2} \right) \,=\, \cOT \left( \frac{d \log\left(\frac{1}{\delta} \right)}{(1-\gamma)^3\epsilon^2} \right) \label{eq:data-dep-sample-compl}
\end{equation}

To bound the remaining concentration terms $\pis$ in \cref{eq:relaxed-feasibility-requirements} we use \cref{lm:data-dep-policies-concentration-bounds} with $\tpi = \pis$. In that case for $C'(\delta) = 72 \log\left(\frac{4|S| \log(e/1-\gamma)}{\delta} \right)$, we require that:
\begin{align}
    2\sqrt{\frac{C'(\delta)}{N (1 - \gamma)^3}} \;\leq\; \frac{\epsilon}{4}; &\quad\; \sqrt{\frac{C'\left( \frac{\delta}{d} \right)}{N (1 - \gamma)^3}} \;\leq\; \frac{3\epsilon}{16} \nonumber \\[1.5ex]
    \Rightarrow\;\, N \,\geq\, \cO \left(\frac{C'(\delta)}{(1-\gamma)^3\epsilon^2} \right); &\quad\; N \,\geq\, \cO \left(\frac{C'\left( \frac{\delta}{d} \right)}{(1-\gamma)^3\epsilon^2} \right) \nonumber \\[1.5ex]
    \Rightarrow\;\, N \,\geq\, \cOT &\left( \frac{\log \left( \frac{d}{\delta} \right)}{(1-\gamma)^3\epsilon^2}\right) \label{eq:data-ind-sample-compl}
\end{align}

Combining the results for the data-dependent policy $\bpit$ in \cref{eq:data-dep-sample-compl} that hold with probability $1 - \delta/5$, and the results for the data-independent policy $\pis$ in \cref{eq:data-ind-sample-compl} that hold with probability $1 - 2\delta$ (using a union bound), taking the loosest sample complexity for $N$, which is $\cOT \left( \frac{d \log\left(\frac{1}{\delta} \right)}{(1-\gamma)^3\epsilon^2} \right)$ and then using a union bound, we finally get that the bounds in \cref{eq:relaxed-feasibility-requirements} are satisfied with probability at least $1 - (2\delta + \delta/5) \,\geq\, 1 - 3\delta$, which completes the proof.    
\end{proof}

\begin{lemma}[Decomposing the suboptimality]\label{lm:decomposing-relaxed-suboptimality}
    For $b_i' = b_i - \frac{\epsilon - \eopt}{2}, \, \fac$, if (i) $\eopt \,<\, \epsilon$, and (ii) the following conditions are satisfied $\fac$,
    \begin{equation*}
        \left| \Vcibpi - \hVcibpi \right| \,\leq\, \frac{\epsilon - \eopt}{2};\quad \left| \Vcipis - \hVcipis \right| \,\leq\, \frac{\epsilon - \eopt}{2}
    \end{equation*}
    then (a) the policy $\bpit$ violates the constraints by at most $\epsilon$ i.e. $\Vcibpi \geq b_i - \epsilon, \;\, \fac$ and (b) its optimality gap can be bounded as:
    \begin{equation*}
        \Vrpis - \Vrbpi \,\leq\, \frac{2 \omega}{1 - \gamma} + \eopt + \left| \Vrppis - \hVrppis \right| + \left| \Vrpbpi - \hVrpbpi \right|
    \end{equation*}
\end{lemma}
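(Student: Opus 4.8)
The plan is to get (a) straight from the constraint guarantee of \cref{th:theorem-1} together with the assumed concentration bound on $\bpit$, and to get (b) from a six-term telescoping decomposition of the reward suboptimality, bounding each piece by an already-proven result. Throughout, note that \cref{th:theorem-1} and \cref{lm:diff-of-perturbed-value-functions-bound} apply verbatim to the mixture policy $\bpit$, since $\hV_l^{\bpit} = \tfrac{1}{T}\sum_{t=0}^{T-1}\hV_l^{\hpit}$ and likewise for the true value functions.

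For (a): invoke \cref{th:theorem-1} with target error $\eopt$ to get $\hVcibpi \ge b_i' - \eopt$ for every $i$; substitute $b_i' = b_i - \frac{\epsilon - \eopt}{2}$ to obtain $\hVcibpi \ge b_i - \frac{\epsilon + \eopt}{2}$; then add the hypothesis $\Vcibpi \ge \hVcibpi - \frac{\epsilon-\eopt}{2}$ to conclude $\Vcibpi \ge b_i - \frac{\epsilon+\eopt}{2} - \frac{\epsilon-\eopt}{2} = b_i - \epsilon$ for all $i$, i.e.\ constraint violation at most $\epsilon$.

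For (b): write
\begin{align*}
\Vrpis - \Vrbpi ={}& [\Vrpis - \Vrppis] + [\Vrppis - \hVrppis] + [\hVrppis - \hVs] \\
&+ [\hVs - \hVrpbpi] + [\hVrpbpi - \Vrpbpi] + [\Vrpbpi - \Vrbpi],
\end{align*}
which telescopes correctly. The first and last brackets are each $\le \frac{\omega}{1-\gamma}$ in absolute value by \cref{lm:diff-of-perturbed-value-functions-bound} (applied to $\pis$ and to $\bpit$), jointly producing the $\frac{2\omega}{1-\gamma}$ term. The second bracket is $\le \left|\Vrppis - \hVrppis\right|$ and the fifth is $\le \left|\Vrpbpi - \hVrpbpi\right|$, matching the two concentration terms in the claim. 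The fourth bracket is $\le \eopt$ by the reward part of \cref{th:theorem-1}. It remains to bound the third bracket.

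The crux is showing $\hVrppis - \hVs \le 0$, i.e.\ that $\hVrppis \le \hVs$. For this I would argue $\pis$ is feasible for the empirical CMDP \cref{eq:emprical-CMDP-formulation}: since $\pis$ solves the original CMDP we have $\Vcipis \ge b_i$, and combining with the hypothesis $\hVcipis \ge \Vcipis - \frac{\epsilon-\eopt}{2}$ gives $\hVcipis \ge b_i - \frac{\epsilon-\eopt}{2} = b_i'$ for all $i$. Then by optimality of $\hpis$ in \cref{eq:emprical-CMDP-formulation} we get $\hVrppis \le \hVs$, so the third bracket is nonpositive. Summing the six bounds gives exactly the stated inequality. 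The main obstacle is precisely this feasibility argument for $\pis$ in the empirical CMDP — it is where the specific offset $b_i' = b_i - \frac{\epsilon-\eopt}{2}$ is consumed and why the concentration hypothesis is imposed on $\pis$ and not only on $\bpit$; the rest is bookkeeping over \cref{th:theorem-1} and \cref{lm:diff-of-perturbed-value-functions-bound}.
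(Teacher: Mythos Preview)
Your proposal is correct and follows essentially the same approach as the paper: the paper proves (a) by combining the constraint guarantee of \cref{th:theorem-1} with the concentration hypothesis on $\bpit$, and proves (b) via the identical six-term telescoping chain, using feasibility of $\pis$ in the empirical CMDP (from the concentration hypothesis on $\pis$ and the offset $b_i' = b_i - \tfrac{\epsilon-\eopt}{2}$) to drop the $[\hVrppis - \hVs]$ term, then bounding the perturbation terms by \cref{lm:diff-of-perturbed-value-functions-bound} and the primal--dual term by \cref{th:theorem-1}. The only cosmetic difference is that the paper absorbs your third bracket directly into an inequality rather than writing it as a separate nonpositive term.
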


\begin{proof}
    Based on the Theorem \ref{th:theorem-1}, we know that $\fac$:
\begin{align}
    &\hVcibpi \;\geq\; b_i' - \eopt \nonumber \\[0.6ex]
    &\Rightarrow\;\, \Vcibpi \;\geq\; \Vcibpi - \hVcibpi + b_i' - \eopt \nonumber \\[0.6ex]
    &\Rightarrow\;\, \Vcibpi \;\geq\; -\left| \Vcibpi - \hVcibpi \right| + b_i' - \eopt \label{eq:relaxed-decomposition-constraint-cond1}
\end{align}

Since we allow $\bpi_T$ to violate the constraints in the true CMDP by at most $\epsilon$ (see \cref{eq:relaxed-CMDP-formulation}), we require $V_{c_i}^{\bpi_T}(\rho) \,\geq\, b_i - \epsilon,\; \fac$. Using \cref{eq:relaxed-decomposition-constraint-cond1}, a sufficient condition to ensure that is $\fac$:
\begin{align*}
    -&\left| \Vcibpi - \hVcibpi \right| + b_i' - \eopt \,\geq\, b_i - \epsilon \;\;\;\Leftrightarrow \\[0.6ex]
    &\left| \Vcibpi - \hVcibpi \right| \,\leq\, (b_i' - b_i) - \eopt + \epsilon
\end{align*}

Using that $b_i' = b_i - \frac{\epsilon - \eopt}{2}, \, \forall i \in [d]$ we see that this sufficient condition holds, by the assumption that $\fac: \left| \Vcibpi - \hVcibpi \right| \leq \frac{\epsilon - \eopt}{2}$

Let $\pis$ be the solution to Eq. \ref{eq:CMDP-formulation}. We want to show that $\pis$ is feasible for the constraint problem in Eq. \ref{eq:emprical-CMDP-formulation} e.g. $\hVcipis \geq b_i' \, \fac$. We have:
\begin{align*}
    &\Vcipis \,\geq\, b_i \tag{since $\pis$ is feasible} \\[1ex]
    &\Rightarrow\;\, \hVcipis \,\geq\, b_i + \hVcipis - \Vcipis \tag{add $\hVcipis$ and subtract $\Vcipis$ from both sides} \\[1ex]
    &\Rightarrow\;\, \hVcipis \;\geq\; b_i - \left| \Vcipis - \hVcipis \right| \tag{lower bound $x$ by $-|x|$}
\end{align*}

Since we require $\hVcipis \,\geq\, b_i' \;\, \fac$, utilizing the above equation, a sufficient condition to ensure this is:

\begin{align*}
    b_i - &\left| V_{c_i}^{\pi^*}(\rho) - \hV_{c_i}^{\pi^*}(\rho) \right| \,\geq\, b_i' \;\;\;\Leftrightarrow \\[0.6ex]
    &\left| V_{c_i}^{\pi^*}(\rho) - \hV_{c_i}^{\pi^*}(\rho) \right| \,\leq\, b_i - b_i'
\end{align*}

Since $b_i' = b_i - \frac{\epsilon - \eopt}{2}, \, \fac$, we require that
\[
    \left| \Vcipis - \hVcipis \right| \,\leq\, \frac{\epsilon - \eopt}{2}
\]

Given that the above statements holds, we can decompose the suboptimality in the reward value function as follows:
\begin{align*}
    &\Vrpis - \Vrbpi \\[1ex]
    &=\; \Vrpis - \Vrppis + \Vrppis - \Vrbpi \\[1ex]
    &=\; \left[ \Vrpis - \Vrppis \right] + \Vrppis - \hVrppis + \hVrppis - \Vrbpi \\[1ex]
    &\leq\; \left[ \Vrpis - \Vrppis \right] + \left[\Vrppis - \hVrppis \right] + \hVs - \Vrbpi \\[1ex]
    &\; (\text{By optimality of } \hpi^* \text{and since we have ensured that } \pi^* \text{is feasible for Eq. \ref{eq:emprical-CMDP-formulation}}) \\[1ex]
    &=\; \left[ \Vrpis - \Vrppis \right] + \left[ \Vrppis - \hVrppis \right] + \left[ \hVs - \hVrpbpi \right] + \hVrpbpi - \Vrbpi \\[1ex]
    &=\; \underbrace{\left[ \Vrpis - \Vrppis \right]}_{\text{Perturbation Error}} + \underbrace{\left[ \Vrppis - \hVrppis \right]}_{\text{Concentration Error}} + \underbrace{\left[ \hVs - \hVrpbpi \right]}_{\text{Primal-Dual Error}} + \underbrace{\left[\hVrpbpi - \Vrpbpi \right]}_{\text{Concentration Error}} \\[1ex]
    & \:\,+ \underbrace{\left[\Vrpbpi - \Vrbpi \right]}_{\text{Perturbation Error}}
\end{align*}

For a perturbation magnitude equal to $\omega$, we use \cref{lm:concentration-bound} to bound both perturbation errors by $\frac{\omega}{1-\gamma}$. Using Theorem 1 to bound primal dual error by $\eopt$:

\[
    \Vrpis - \Vrbpi \,\leq\, \frac{2\omega}{1-\gamma} + \eopt + \left[ \Vrppis - \hVrppis \right] + \left[ \hVrpbpi - \Vrpbpi \right]
\]
\end{proof}

\begin{lemma}[Bounding the dual variables]\label{lm:dual-variable-bound}
    Let $\tpi$ be a feasible policy. Define the margin as $\slaterctpi \coloneq \min_{i \in [d]}\{ \Vcitpi - b_i \} \geq 0$ (since $\tpi$ is feasible). We consider two cases: (1) if $\bb' = \bb - \epsilon'$ for $\epsilon' > 0$ and event $\cE_1 = \left\{ \left| \hVcitpi - \Vcitpi \right| \,\leq\, \frac{\epsilon'}{2}, \fac \right\}$ holds, then $\blsnorm \leq \frac{2(1 + \omega)}{\epsilon' (1 - \gamma)}$, and (2): if $\tpi$ is strictly feasible (i.e. $\slaterctpi > 0$), and $\bb' = \bb + \Delta$ for $\Delta \in \left(0, \frac{\slaterctpi}{2} \right)$ and event $\cE_2 = \left\{ \left| \hVcitpi - \Vcitpi \right| \,\leq\, \frac{\slaterctpi}{2} - \Delta, \fac \right\}$ holds, then: $\blsnorm \leq \frac{2(1 + \omega)}{\slaterctpi (1 - \gamma)}$.
\end{lemma}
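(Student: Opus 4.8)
The plan is to exploit the zero duality gap: I will argue that the given policy $\tpi$ is \emph{strictly} feasible for the empirical CMDP once the relevant concentration event holds, and that any strict-feasibility slack $\zeta>0$ forces $\blsnorm \le \frac{1+\omega}{\zeta(1-\gamma)}$. The two cases are then just a matter of reading off the slack $\zeta$ that $\cE_1$, resp.\ $\cE_2$, guarantees.

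First I would establish the generic bound. Recall that $\bls$ is a minimizer of $\bl \mapsto \max_\pi \cL(\bl,\pi)$ over the whole orthant $\{\li \ge 0\}$, and that for fixed $\pi$ we have $\min_{\li \ge 0}\cL(\bl,\pi) = \hVrppi$ when $\pi$ is feasible for \cref{eq:emprical-CMDP-formulation} and $-\infty$ otherwise (send the violated coordinate of $\bl$ to infinity). Hence $\max_\pi \min_{\li\ge 0}\cL(\bl,\pi) = \hVs$, and by \cref{eq:strong-duality} this equals $\min_{\li\ge 0}\max_\pi\cL(\bl,\pi) = \max_\pi \cL(\bls,\pi)$. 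Now suppose $\tpi$ satisfies $\hVcitpi \ge b_i' + \zeta$ for all $i$, with $\zeta>0$. Evaluating $\cL(\bls,\cdot)$ at $\tpi$,
\begin{align*}
    \hVs = \max_\pi \cL(\bls,\pi) &\ge \cL(\bls,\tpi) = \hVrptpi + \sum_{i=1}^d \lis\bigl(\hVcitpi - b_i'\bigr) \\
    &\ge \hVrptpi + \zeta \sum_{i=1}^d \lis \ge \zeta\,\blsnorm,
\end{align*}
using $\lis \ge 0$, $\hVrptpi \ge 0$, and $\sum_i \lis \ge \max_i \lis = \blsnorm$. Since $r_p \in [0,1+\omega]$ gives $\hVs \le \frac{1+\omega}{1-\gamma}$, we conclude $\blsnorm \le \frac{1+\omega}{\zeta(1-\gamma)}$.

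It then remains to produce $\zeta$ in each case. In case (1), feasibility of $\tpi$ in the true CMDP gives $\Vcitpi \ge b_i$, and on $\cE_1$, $\hVcitpi \ge \Vcitpi - \frac{\epsilon'}{2} \ge b_i - \frac{\epsilon'}{2} = b_i' + \frac{\epsilon'}{2}$, so $\zeta = \frac{\epsilon'}{2}$ and $\blsnorm \le \frac{2(1+\omega)}{\epsilon'(1-\gamma)}$. In case (2), the definition of the margin gives $\Vcitpi \ge b_i + \slaterctpi$; on $\cE_2$, $\hVcitpi \ge \Vcitpi - \bigl(\frac{\slaterctpi}{2}-\Delta\bigr) \ge b_i + \frac{\slaterctpi}{2} + \Delta = b_i' + \frac{\slaterctpi}{2}$, so $\zeta = \frac{\slaterctpi}{2}>0$ (this is where $\slaterctpi>0$ enters) and $\blsnorm \le \frac{2(1+\omega)}{\slaterctpi(1-\gamma)}$; the hypothesis $\Delta < \frac{\slaterctpi}{2}$ is exactly what keeps $\cE_2$ non-vacuous.

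The main obstacle is not any calculation but the strong-duality bookkeeping: one must be careful that $\bls$ is the dual optimum over the full orthant $\{\li\ge 0\}$ (not the truncated box $\cU$ used in the algorithm), that this optimum is finite and attained — which is itself a consequence of the Slater point $\tpi$ that we exhibit — and that the identity $\max_\pi\cL(\bls,\pi)=\hVs$ relies on feasibility of $\tpi$ for the \emph{empirical} problem, which is precisely what the events $\cE_1$ / $\cE_2$ buy us. Everything else is the two-line slack arithmetic above.
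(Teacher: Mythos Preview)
Your proposal is correct and follows essentially the same argument as the paper: use strong duality to write $\hVs = \max_\pi \cL(\bls,\pi) \ge \cL(\bls,\tpi)$, use the concentration event to show $\tpi$ has empirical Slater slack $\zeta$ (namely $\epsilon'/2$ in case~(1) and $\slaterctpi/2$ in case~(2)), and then bound $\blsnorm \le \sum_i \lis \le \frac{1+\omega}{\zeta(1-\gamma)}$ via $\hVs \le \frac{1+\omega}{1-\gamma}$. The only cosmetic difference is that you factor out the generic ``slack~$\zeta \Rightarrow$ bound'' step before specializing, whereas the paper runs the arithmetic separately in each case.
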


\begin{proof}

Writing the empirical CMDP in Eq. \ref{eq:emprical-CMDP-formulation} in its Lagrangian form:
\begin{flalign*}
& \hspace{7mm} \hVs \;=\; \max_\pi \min_{\li \geq 0} \hVrppi + \bltop \left[ \hVcpi - \bb' \right] &
\end{flalign*}

Using the results of \cite{paternain2019constrained}, we can switch the min and the max to get:
\begin{flalign*}
& \hspace{17.5mm} \;=\; \min_{\li \geq 0} \max_\pi \hVrppi + \bltop \left[ \hVcpi - \bb' \right] &
\end{flalign*}

\noindent Since $\bls$ is the optimal dual variable for the empirical CMDP in Eq. \ref{eq:emprical-CMDP-formulation}:
\begin{flalign}
& \hspace{17.5mm} \;=\; \max_\pi \hVrppi + \blstop \left[ \hVcpi - \bb' \right] & \nonumber \\[0.6ex]
& \hspace{17.5mm} \;=\; \max_\pi \hVrppi + \sum_{i=1}^d \lis \left[ \hVcipi - b_i' \right] & \nonumber \\[0.6ex]
& \hspace{17.5mm} \;\geq\; \hVrptpi + \sum_{i=1}^d \lis \left[ \hVcitpi - b_i' \right] & \tag{since the above term is a max over all policies} \\[0.6ex]
& \hspace{17.5mm} \;=\; \hVrptpi + \sum_{i=1}^d \lis \left[ \left(\hVcitpi - b_i\right) + \left(b_i - b_i'\right) \right] & \tag{$\pm b_i$} \nonumber \\[0.6ex]
& \hspace{17.5mm} \;\geq\; \hVrptpi + \sum_{i=1}^d \lis \left[ \hVcitpi - \Vcitpi + \slaterctpi + (b_i - b_i') \right] & \label{eq:dual-var-bound-lemma-common-ineq}
\end{flalign}
where the last inequality follows from the definition of $\slaterctpi$, i.e. $0 < \slaterctpi \leq \Vcitpi - b_i,\, \fac$.

1) For the first case, we continue from \cref{eq:dual-var-bound-lemma-common-ineq}:
\begin{flalign}
    & \hspace{7mm} \hVs \;\geq\; \hVrptpi + \sum_{i=1}^d \lis \left[ \hVcitpi - \Vcitpi + \slaterctpi + (b_i - b_i') \right] & \nonumber \\[0.6ex]
    & \hspace{17.5mm} \;=\; \hVrptpi + \sum_{i=1}^d \lis \left[ \hVcitpi - \Vcitpi + \slaterctpi + \epsilon' \right] & \tag{from the i) lemma assumption} \nonumber \\[0.6ex]
    & \hspace{17.5mm} \;\geq\; \hVrptpi + \sum_{i=1}^d \lis \left[ \hVcitpi - \Vcitpi + \epsilon' \right] & \tag{since $\lis \geq 0 \;\fac,\, \slaterctpi \geq 0$} \nonumber \\[0.6ex]
    & \hspace{17.5mm} \;\geq\; \hVrptpi + \sum_{i=1}^d \lis \left[ - \left| \hVcitpi - \Vcitpi \right| + \epsilon' \right] & \tag{$x \geq -|x|,\; \forall x \in \R$} \nonumber \\[0.6ex]
    & \hspace{17.5mm} \;\geq\; \hVrptpi + \sum_{i=1}^d \lis \left[ - \frac{\epsilon'}{2} + \epsilon' \right] & \tag{from event $\cE_1$ in lemma assumptions} \nonumber \\[0.6ex]
    & \hspace{17.5mm} \;\geq\; \hVrptpi + \frac{\epsilon'}{2}\sum_{i=1}^d \lis & \label{eq:dual-var-bound-case1}
\end{flalign}

We now rewrite \ref{eq:dual-var-bound-case1}:
\begin{align*}
    \frac{\epsilon'}{2}\sum_{i=1}^d \lis \;\leq&\; \left[ \hVrptpi - \hVrptpi \right] \\[0.6ex]
        \leq&\; \frac{1 + \omega}{1 - \gamma}  \tag{because $\hVrptpi, \hVrptpi \in \left[0, \frac{1 + \omega}{1 - \gamma} \right]$} \\[0.6ex]
        \Rightarrow\;\;\;&\; \sum_{i=1}^d \lis \;\leq\; \frac{2(1 + \omega)}{\epsilon'(1 - \gamma)} \\[0.6ex]
        \Rightarrow\;\;\;&\; \blsnorm \;\leq\; \frac{2(1 + \omega)}{\epsilon'(1 - \gamma)} \tag{since $\li \,\geq\, 0,\; \fac$}
\end{align*}

2) For the second case, we again continue from \cref{eq:dual-var-bound-lemma-common-ineq}:
\begin{flalign}
    & \hspace{7mm} \hVs \;\geq\; \hVrptpi + \sum_{i=1}^d \lis \left[ \hVcitpi - \Vcitpi + \slaterctpi + (b_i - b_i') \right] & \nonumber \\[0.6ex]
    & \hspace{17.5mm} \;=\; \hVrptpi + \sum_{i=1}^d \lis \left[ \hVcitpi - \Vcitpi + \slaterctpi - \Delta \right] & \tag{from the ii) lemma assumption} \nonumber \\[0.6ex]
    & \hspace{17.5mm} \;=\; \hVrptpi + \sum_{i=1}^d \lis \left[ \slaterctpi - \Delta - \left| \hVcitpi - \Vcitpi \right| \right] & \tag{$x \geq -|x|,\; \forall x \in \R$} \nonumber \\[0.6ex]
    & \hspace{17.5mm} \;=\; \hVrptpi + \sum_{i=1}^d \lis \left[ \frac{\slaterctpi}{2} \right] & \tag{from event $\cE_2$ in lemma assumptions} \nonumber \\[0.6ex]
    & \hspace{17.5mm} \;\geq\; \hVrptpi + \frac{\slaterctpi}{2}\sum_{i=1}^d \lis \label{eq:dual-var-bound-case2}
\end{flalign}

We now rewrite \ref{eq:dual-var-bound-case2}:
\begin{align*}
    \frac{\slaterctpi}{2}\sum_{i=1}^d \lis \;\leq&\; \left[ \hVrptpi - \hVrptpi \right] \\[0.6ex]
        \leq&\; \frac{1 + \omega}{1 - \gamma}  \tag{because $\hVrptpi, \hVrptpi \in \left[0, \frac{1 + \omega}{1 - \gamma} \right]$} \\[0.6ex]
        \Rightarrow\;\;\;&\; \sum_{i=1}^d \lis \;\leq\; \frac{2(1 + \omega)}{\slaterctpi(1 - \gamma)} \tag{$\slaterctpi > 0$ because $\tpi$ is strictly feasible} \\[0.6ex]
        \Rightarrow\;\;\;&\; \blsnorm \;\leq\; \frac{2(1 + \omega)}{\slaterctpi(1 - \gamma)} \tag{since $\li \,\geq\, 0,\; \fac$}
\end{align*}

\end{proof}

\section{Upper Bound under Strict Feasibility (proof of \cref{th:theorem-4})}\label{Appendix-D}

\begin{theorem}
    For a fixed $\epsilon \in \left(0, \frac{1}{1 - \gamma}\right]$ and $\delta \in (0, 1)$, Alg. \ref{alg:alg1} with $N \,=\, \cOT \left(\frac{d^3 \log(1/\delta)}{(1-\gamma)^5 \epsilon^2 {\slatercs}^2} \right)$ samples, $\bb' \,=\, \bb + \frac{\epsilon(1-\gamma)\slatercs}{20}$, $\omega = \frac{\epsilon(1-\gamma)}{10}$, $U \,=\, \frac{4(1+\omega)}{\slatercs(1-\gamma)}$, $\enet = \cO \left( \epsilon^2 (1-\gamma)^4 {\slatercs}^2 \right)$ and $T = \cO \left(\frac{d^4}{(1-\gamma)^6 \epsilon^2 {\slatercs}^2 }\right)$, returns a policy $\bpit$ that satisfies the objective in Eq. \ref{eq:strict-CMDP-formulation} with probability at least $1 - 4\delta$.
\end{theorem}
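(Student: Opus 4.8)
The plan is to mirror the proof of \cref{th:theorem-3}, replacing the relaxation $\bb' < \bb$ by a tightening $\bb' = \bb + \Delta$ with a margin $\Delta > 0$ and a primal--dual accuracy $\eopt < \Delta$ that are fixed at the end, and using the strict-feasibility decomposition lemma \cref{lm:decomposing-suboptimality-strict} in place of \cref{lm:decomposing-relaxed-suboptimality}. The first step is to invoke that lemma: on the concentration events it requires for the data-independent optimal policy $\pis$ and the data-dependent mixture $\bpit$, it yields (i) \emph{exact} feasibility in the true CMDP, $\Vcibpi \ge b_i$ for all $i$ (tightening the empirical constraints by $\Delta$ exactly compensates a constraint-estimation error of size at most $\Delta$), and (ii) the suboptimality decomposition \cref{eq:strict-decomposition}: $\Vrpis - \Vrbpi$ is bounded by a perturbation term $\frac{2\omega}{1-\gamma}$, the primal--dual error $\eopt$, a tightening penalty $2d\lambda_{\min}^*\Delta$, and two concentration terms in $\Vrppis-\hVrppis$ and $\hVrpbpi-\Vrpbpi$.

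The second step is to choose parameters so each term of \cref{eq:strict-decomposition} is at most $\epsilon/5$: take $\omega = \frac{\epsilon(1-\gamma)}{10}$ for the perturbation term (keeping $\omega \le 1$), and $\eopt = \Delta/5$ for the primal--dual term (forcing $\Delta \le \epsilon$). For the tightening penalty I need a bound on the optimal multipliers, and since $\pics$ is strictly feasible with margin $\slatercs > 0$, case (2) of \cref{lm:dual-variable-bound} with $\tpi = \pics$ gives $\sum_{i} \lambda_i^* \le \frac{2(1+\omega)}{\slatercs(1-\gamma)}$ (so $\blsnorm$ is bounded by the same quantity), valid on the event $\{ |\hVcipics - \Vcipics| \le \frac{\slatercs}{2} - \Delta,\ \fac \}$ and provided $\Delta < \slatercs/2$ --- which holds once $\Delta$ is taken as a small constant multiple of $\epsilon(1-\gamma)\slatercs$, using $\epsilon(1-\gamma)\le 1$. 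I then take $U$ equal to twice that multiplier bound, so that $U > \blsnorm$ and $U - \blsnorm = \Theta\!\left(\frac{1}{\slatercs(1-\gamma)}\right)$, and feed $(U,\eopt)$ into \cref{th:theorem-1} to read off the iteration count $T$ and net resolution $\enet$.

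The third step is the sample-size calculation. With everything fixed except $N$, I collect the concentration demands needed to discharge \cref{lm:decomposing-suboptimality-strict}, case (2) of \cref{lm:dual-variable-bound}, and \cref{eq:strict-decomposition}: constraint concentration of order $\Delta$ for $\pis$ and $\pics$ and reward concentration $\le \epsilon/5$ for $\pis$ (all data-independent, via \cref{lm:data-dep-policies-concentration-bounds}), and constraint concentration of order $\Delta$ together with reward concentration $\le \epsilon/5$ for $\bpit$ (data-dependent, via \cref{th:theorem-2}, whose $\iota$-gap hypothesis over the whole net $\Lambda^d$ is supplied by \cref{lm:union-bound-all-cmdps-satifsy-separation}). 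Substituting the chosen $\omega, U, \enet$ into $\iota$ and then into $C(\cdot)$ from \cref{th:theorem-2} collapses the $\log(U^d/\enet^d)$ contribution to $\cO(d\log(\cdot))$, so $C(\delta/d) = \cO\!\left(d\log\!\left(\frac{d|\cS||\cA|}{(1-\gamma)\epsilon\slatercs\delta}\right)\right)$; the binding inequality is the one carrying a $\Delta^{-2}$, yielding $N \ge \cOT\!\left(\frac{C(\delta/d)}{(1-\gamma)^3\Delta^2}\right)$, and multiplying by the $|\cS||\cA|$ state--action pairs gives the claimed total of order $\frac{d^3|\cS||\cA|\log(1/\delta)}{(1-\gamma)^5\epsilon^2\slatercs^2}$. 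A union bound over the failure events --- $\le\delta/5$ from \cref{th:theorem-2}, $\le 2\delta$ from \cref{lm:data-dep-policies-concentration-bounds} at $\pis$ (reward and constraints), and $\le\delta$ from \cref{lm:data-dep-policies-concentration-bounds} at $\pics$ (constraints) --- leaves probability at least $1-4\delta$, giving \cref{eq:strict-CMDP-formulation}.

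The main obstacle is not any individual inequality but the $d$-accounting that produces the cubic dependence. One must propagate the $d$-dependence of $\Delta$ (dictated by the $2d\lambda_{\min}^*\Delta$ term in \cref{eq:strict-decomposition} together with the $\ell_1$ bound on the multipliers) simultaneously through $\eopt$ and $\enet$ in \cref{th:theorem-1} and through $\iota$ and $C(\cdot)$ in \cref{th:theorem-2}, and verify that it contributes only a benign logarithmic $d$-term in $C$ while the genuine extra multiplicative $d^2$ --- which, combined with the $d$ already present in the relaxed analysis, gives $d^3$, as flagged in the Discussion --- enters through the $\Delta^{-2}$ in the sample bound. Secondarily, one must check that a single $N$ simultaneously meets every concentration demand with its own slack (the $\Theta(\Delta)$ slacks, the $\frac{\slatercs}{2}-\Delta$ slack for the event used in \cref{lm:dual-variable-bound}, and the $\epsilon/5$ reward slacks) and that $\frac{\slatercs}{2}-\Delta$ is compatible with $\Delta<\slatercs/2$ over the range $\epsilon\in(0,\frac{1}{1-\gamma}]$, so the union bound ranges over only a constant number of events.
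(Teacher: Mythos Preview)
Your proposal is correct and takes essentially the same route as the paper: \cref{lm:decomposing-suboptimality-strict} for strict feasibility and the suboptimality decomposition, case~(2) of \cref{lm:dual-variable-bound} with $\tpi=\pics$ for the multiplier bound, \cref{th:theorem-1} to read off $T$ and $\enet$, \cref{th:theorem-2} and \cref{lm:data-dep-policies-concentration-bounds} for the data-dependent and data-independent concentration, and the same union-bound bookkeeping yielding $1-4\delta$. The one slip is your description of $\Delta$: in Step~2 you call it a ``small constant multiple of $\epsilon(1-\gamma)\slatercs$'' but in the obstacle paragraph you invoke ``the $d$-dependence of $\Delta$'' --- the paper in fact sets $\Delta = \Theta\!\left(\epsilon(1-\gamma)\slatercs/d\right)$ (forcing $2d\lambda_{\min}^*\Delta \le \epsilon/5$ via the cruder bound $\lambda_{\min}^* \le \blsnorm$ rather than the $\ell_1$ bound you mention), and it is precisely this $1/d$ in $\Delta$ that, when squared in the $\Delta^{-2}$ sample bound, supplies the extra $d^2$.
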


\begin{proof}
We will prove the result for a general $\bb' \,=\, \bb + \Delta$ for $\Delta > 0$ which will be specified later. In \cref{lm:decomposing-suboptimality-strict} we showed that if the constraint value functions are sufficiently concentrated for the optimal policy $\pis$ in $\mdpd$ and the mixture policy $\bpit$ returned by the Algorithm \ref{alg:alg1}, i.e., if
\begin{equation}
    \left| \Vcibpi - \hVcibpi \right| \,\leq\, \Delta - \eopt;\quad \left| \Vcipis - \hVcipis \right| \,\leq\, \Delta \;\, \fac \label{eq:theorem-strict-lemma-conditions}
\end{equation}

then (i) policy $\bpit$ satisfies all the constraints in $\mdpd$, i.e., $\Vcbpi \,\geq\, \bb$, and (ii) its suboptimality in $\mdpd$ can be decomposed as:
\begin{equation}
    \Vrpis - \Vrbpi \;\leq\; \frac{2\omega}{1-\gamma} + \eopt + 2d\lambda_{\min}^*\Delta + \left[ \Vrppis - \hVrppis \right] + \left[ \Vrpbpi - \Vrpbpi \right] \label{eq:theorem-strict-decomposition}
\end{equation}

In order to instantiate the primal-dual algorithm, we still need to specify $U$. To do this, we will use case ii) of \cref{lm:dual-variable-bound}. To apply this lemma, we need a concentration result on $\left| \Vcitpi - \Vcitpi \right| \, \fac$ for a strictly feasible policy $\tpi$. Since $\pics$ is a strictly feasible policy, we require the bound $\left| \Vcipics - \Vcipics \right| \,\leq\, \frac{\slatercs}{2} - \Delta\, \fac$. \cref{lm:dual-variable-bound} shows that when this concentration result holds, then we can upper bound the infinity norm of the optimal dual variable $\blsnorm$ by $\frac{2(1 + \omega)}{\slatercs(1 - \gamma)}$.

Given the above result we can define all the parameters of the algorithm except the number of samples required for each state-action $N$. We set $\eopt = \frac{\Delta}{5}$, $\bb' = \bb + \Delta$ and $\omega = \frac{\epsilon(1 - \gamma)}{10} < 1$. Now we are able to compute $U$ by expanding the upper-bound on $\blsnorm$ mentioned in the previous paragraph:
\[
    \blsnorm \,\leq\, \frac{2 \left( 1 + \frac{\epsilon (1-\gamma)}{10} \right)}{\slatercs (1-\gamma)} \,=\, \frac{2 + \frac{\epsilon (1-\gamma)}{5}}{\slatercs (1-\gamma)} \,\overset{\epsilon \in (0, \nicefrac{1}{1-\gamma}]}{\leq}\, \frac{2 + \frac{1}{5}}{\slatercs (1-\gamma)} \,<\, \frac{4}{\slatercs (1-\gamma)}
\]

Therefore, by setting $C = \frac{4}{\slatercs (1-\gamma)}$ and $U = 2C$, we ensure that $\blsnorm < U$ and $U - \blsnorm > C$.

To guarantee that the primal-dual algorithm outputs an $\frac{\Delta}{5}$-approximate policy we utilize \cref{th:theorem-1} in order to specify the complexity of $T$ and $\enet$. In more detail based on \cref{th:theorem-1} and for $U = \frac{8}{\slatercs (1-\gamma)}$, $\Delta = \frac{\epsilon (1-\gamma) \slatercs}{40d}$, $\eopt = \frac{\Delta}{5} = \frac{\epsilon (1-\gamma) \slatercs}{200d}$:
\begin{align*}
    T \,
        =&\, \frac{4 U^2 d^2}{\eopt^2 (1 - \gamma)^2} \left[ 1 + \frac{1}{(U - \blsnorm)^2} \right] \,=\, \frac{100 d^2}{\Delta^2 (1 - \gamma)^2 }\left[ U^2 + \left(\frac{U}{U - \blsnorm} \right)^2 \right] \\[1ex]
        \leq&\, \frac{400 d^2}{\Delta^2 (1 - \gamma)^2} \left[C^2 +1\right] \,<\, \frac{800 d^2}{\Delta^2 (1-\gamma)^2}C^2 \,=\, \frac{800 d^2}{\Delta^2 (1-\gamma)^2} \cdot \frac{16}{{\slatercs}^2 (1-\gamma)^2}
\end{align*}

Thus we conclude that $T = \cO \left( \nicefrac{d^4}{\epsilon^2 {\slatercs}^4 (1-\gamma)^6} \right)$. Using \cref{th:theorem-1} we also set $\enet$:
\[
    \enet \,=\, \frac{\Delta^2 (1 - \gamma)^2 (U - \blsnorm)}{6dU} \,=\, \frac{\Delta^2 (1-\gamma)^2}{150d} \cdot \frac{U - \blsnorm}{U} \,\leq\, \frac{\epsilon^2 (1-\gamma)^4 {\slatercs}^2}{240000 d}
\]

Thus we conclude that $\enet \,=\, \cO \left( \frac{\epsilon^2 (1 - \gamma)^4 {\slatercs}^2}{d} \right)$.

Using the fact that $\omega = \frac{\epsilon(1-\gamma)}{10}$, $\eopt = \frac{\Delta}{5}$ and $\lambda_{\min}^* \leq \blsnorm < \frac{2(1+\omega)}{\slatercs(1-\gamma)}$ we can simplify Eq. \ref{eq:theorem-strict-decomposition} as:
\begin{align*}
    \Vrpis - \Vrbpi \,\leq&\, \frac{2\omega}{1-\gamma} + \eopt + 2d\lambda_{\min}^*\Delta + \left| \Vrppis - \hVrppis \right| + \left| \Vrpbpi - \hVrpbpi \right| \\[1ex]
    \leq&\, \frac{\epsilon}{5} + \frac{\Delta}{5} + 2d \frac{2(1+\omega)}{\slatercs(1-\gamma)}\frac{\epsilon(1-\gamma)\slatercs}{40d} + \left| \Vrppis - \hVrppis \right| + \left| \Vrpbpi - \hVrpbpi \right| \\[1ex]
    \leq&\, \frac{\epsilon}{5} + \frac{\Delta}{5} + \frac{(1+1)\epsilon}{10} + \left| \Vrppis - \hVrppis \right| + \left| \Vrpbpi - \hVrpbpi \right| \tag{$\omega \leq 1$} \\[1ex]
    \leq&\, \frac{3\epsilon}{5} + \left| \Vrppis - \hVrppis \right| + \left| \Vrpbpi - \hVrpbpi \right| \tag{$\Delta \leq \epsilon$}
\end{align*}

Combining the previous results and in order to guarantee an $\epsilon$-reward suboptimality for $\bpit$ we require
\begin{align}
    &\left| \Vcibpi - \hVcibpi \right| \,\leq\, \frac{4 \Delta}{5} \;\, \fac;\quad
    \left| \Vcipis - \hVcipis \right| \,\leq\, \Delta \;\, \fac; \nonumber \\[1ex]
    &\left| \Vcipics - \hVcipics \right| \,\leq\, \frac{19 \Delta}{5}; \;\, \fac \nonumber \\[1ex]
    &\left| \Vrppis - \hVrppis \right| \,\leq\, \frac{\epsilon}{5};\qquad\qquad\quad\;\;\, \left| \Vrpbpi - \hVrpbpi \right| \,\leq\, \frac{\epsilon}{5} \label{eq:strict-feasibility-requirements}
\end{align}

To bound the concentration terms for $\bpit$ in \cref{eq:strict-feasibility-requirements} we utilize \cref{th:theorem-2} with $\omega = \frac{\epsilon (1 - \gamma)}{10}$, $U = \frac{8}{\slatercs (1-\gamma)}$ and $\enet = \frac{\epsilon^2 (1 - \gamma)^4 {\slatercs}^2}{240000d}$. In this case for $\iota$ we get:
\begin{align*}
    \iota \;=\; \frac{\omega \delta (1 - \gamma) \epsilon_1^d}{30 U^d |\cS| |\cA|^2} \;=\; \frac{\epsilon^{2d+1} (1-\gamma)^{5d+2} {\slatercs}^{3d} \delta}{300 (1920000)^d d^d |\cS| |\cA|^2} \;=\; \left(\frac{1}{1920000}\right)^d \frac{\epsilon^{2d+1} (1 - \gamma)^{5d+2} {\slatercs}^{3d} \delta}{300 d^d |\cS| |\cA|^2}
\end{align*}

We now calculate $C(\delta)$ as:
\begin{align*}
    C(\delta) \;
        =&\; 72 \log \left( \frac{16 (1 + \omega + dU) |\cS| |\cA| \log(e / 1-\gamma)}{(1 - \gamma)^2 \iota \delta}\right)  \\[1ex]
        =&\; 72 \log \left(4800 \cdot 1920000^d \frac{d^d |\cS|^2 |\cA|^3 \left(1 + \frac{\epsilon(1-\gamma)}{10}+\frac{8d}{\slatercs(1-\gamma)} \right)}{(1-\gamma)^{5d+5}\,\epsilon^{2d+1}\,{\slatercs}^{3d}\,\delta^2} \right) \tag{substitute value for $i$} \\[1ex]
        =&\; 72 \log \left(4800 \cdot 1920000^d \frac{d^d |\cS|^2 |\cA|^3 \left( \frac{10\slatercs(1-\gamma) + \epsilon\slatercs(1-\gamma)^2 + 80d}{10\slatercs(1-\gamma)} \right)}{(1-\gamma)^{5d+5}\,\epsilon^{2d+1}\,{\slatercs}^{3d}\,\delta^2} \right) \tag{simplify inner parenthesis} \\[1ex]
        =&\; \cO \left( 72 \log \left(4800 \cdot 1920000^d \frac{d^d |\cS|^2 |\cA|^3 \frac{d}{\slatercs(1 - \gamma)}}{(1-\gamma)^{5d+5}\,\epsilon^{2d+1}\,{\slatercs}^{3d}\,\delta^2} \right) \right) \tag{inner parenthesis $= \cO \left( \frac{d}{\slatercs(1 - \gamma)} \right)$} \\[1ex]
        =&\; \cO \left( \log \left( 1920000^d \frac{d^{d+1} |\cS|^2 |\cA|^3}{(1-\gamma)^{5d+6}\,\epsilon^{2d+1}\,{\slatercs}^{3d+1}\,\delta^2} \right) \right) \tag{drop constants, simplify} \\[1ex]
        =&\; \cO \left( d\log \left( \frac{d |\cS|^2 |\cA|^3}{(1-\gamma) \slatercs \epsilon \delta}\right) \right) \tag{logarithmic power rule, drop constants}
\end{align*}

Having defined the complexity of $C(\delta)$ in order to satisfy the concentration bound for $\bpit$ we require that:
\begin{equation}
    2\sqrt{\frac{C(\delta)}{N(1-\gamma)^3}} \;\leq\; \frac{\Delta}{5} \;\;\Rightarrow\;\; N \,\geq\, \cO \left(\frac{C(\delta)}{(1-\gamma)^3\Delta^2} \right) \,=\, \cOT \left( \frac{d^3 \log\left(\frac{1}{\delta} \right)}{(1-\gamma)^5\epsilon^2{\slatercs}^2} \right) \label{eq:strict-data-dep-sample-compl}
\end{equation}

To bound the remaining concentration terms for $\pis$ and $\pics$ in \cref{eq:strict-feasibility-requirements} we use \cref{lm:data-dep-policies-concentration-bounds} with $\tpi = \pis, \pics$. In that case for $C'(\delta) = 72 \log\left(\frac{4|S| \log(e/1-\gamma)}{\delta} \right)$, we require that:
\begin{align*}
    2\sqrt{\frac{C'(\delta)}{N (1 - \gamma)^3}} \;\leq\; \frac{\epsilon}{5};\hspace{10mm} \sqrt{\frac{C'\left( \frac{\delta}{d} \right)}{N (1 - \gamma)^3}} \;\leq\; \Delta;\hspace{10mm} \sqrt{\frac{C'\left( \frac{\delta}{d} \right)}{N (1 - \gamma)^3}} \;\leq\; \frac{19\Delta}{5} 
\end{align*}
\begin{align*}
    \hspace{2mm} \Rightarrow\;\, N \,\geq\, \cO \left(\frac{C'(\delta)}{(1-\gamma)^3\epsilon^2} \right);\hspace{5mm} N \,\geq\, \cO \left(\frac{C'\left( \frac{\delta}{d} \right)}{(1-\gamma)^3\Delta^2} \right);\hspace{8mm} N \geq \cO \left(\frac{C'\left( \frac{\delta}{d} \right)}{(1-\gamma)^3\Delta^2} \right) \hspace{12mm}
\end{align*}
\begin{align}
    \Rightarrow\;\, N \,\geq\, \cOT \left( \frac{d^2 \log \left(\frac{d}{\delta} \right)}{(1-\gamma)^5\epsilon^2{\slatercs}^2}\right) \label{eq:strict-data-ind-sample-compl}
\end{align}

Combining the results for the data-dependent policy $\bpit$ in \cref{eq:strict-data-dep-sample-compl} that hold with probability $1 - \delta/5$, and the results for the data-independent policy $\pis$ in \cref{eq:strict-data-ind-sample-compl} that hold with probability $1 - 3\delta$ (using a union bound), taking the loosest sample complexity for $N$, which is $\cOT \left( \frac{d^3 \log\left(\frac{1}{\delta} \right)}{(1-\gamma)^5\epsilon^2{\slatercs}^2} \right)$ and then using a union bound, we finally get that the bounds in \cref{eq:strict-feasibility-requirements} are satisfied with probability at least $1 - (3\delta + \delta/5) \,\geq\, 1 - 4\delta$, which completes the proof.
\end{proof}

\begin{lemma}[Decomposing the suboptimality]\label{lm:decomposing-suboptimality-strict}
    For a fixed $\Delta > 0$ and $\eopt < \Delta$, if $b_i' = b_i + \Delta,\; \fac$, if the following conditions are satisfied $\fac$:
    \begin{equation*}
        \left| \Vcibpi - \hVcibpi \right| \,\leq\, \Delta - \eopt;\;\, \left| \Vcipis - \hVcipis \right| \,\leq\, \Delta
    \end{equation*}
    then (a) the policy $\bpi_T$ satisfies the constraints i.e. $\Vcibpi \geq b_i\, \fac$ and (b) its optimality gap can be bounded as:
    \begin{equation*}
        \Vrpis - \Vrbpi \;\leq\; \frac{2\omega}{1-\gamma} + \eopt + 2d\lambda_{\min}^*\Delta + \left[ \Vrppis - \hVrppis \right] + \left[ \Vrpbpi - \Vrpbpi \right]
    \end{equation*}
\end{lemma}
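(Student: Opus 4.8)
The plan is to follow the template of \cref{lm:decomposing-relaxed-suboptimality}, with one essential change: here the empirical right-hand side is \emph{tightened}, $\bb' = \bb + \Delta$, so $\pis$ is in general \emph{not} feasible for the empirical CMDP \cref{eq:emprical-CMDP-formulation}, and the step in the relaxed proof that used ``optimality of $\hpis$ together with feasibility of $\pis$'' to conclude $\hVrppis \le \hVs$ must be replaced by a Lagrangian argument, which is exactly what produces the extra $2d\lambda_{\min}^*\Delta$ term. For part (a) I would invoke \cref{th:theorem-1}, which gives $\hVcibpi \ge b_i' - \eopt$ $\fac$; combining with the first assumption, $\Vcibpi \ge \hVcibpi - \left| \Vcibpi - \hVcibpi \right| \ge (b_i' - \eopt) - (\Delta - \eopt) = b_i' - \Delta = b_i$, so $\bpit$ satisfies every constraint of $\mdpd$.

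For part (b) I would telescope $\Vrpis - \Vrbpi$ through the intermediate quantities $\Vrppis$, $\hVrppis$, $\hVs$, $\hVrpbpi$, $\Vrpbpi$:
\begin{align*}
    \Vrpis - \Vrbpi ={}& \left[ \Vrpis - \Vrppis \right] + \left[ \Vrppis - \hVrppis \right] + \left[ \hVrppis - \hVs \right] \\
    &{}+ \left[ \hVs - \hVrpbpi \right] + \left[ \hVrpbpi - \Vrpbpi \right] + \left[ \Vrpbpi - \Vrbpi \right].
\end{align*}
The two ``perturbation'' brackets $\Vrpis - \Vrppis$ and $\Vrpbpi - \Vrbpi$ are each at most $\tfrac{\omega}{1-\gamma}$ by \cref{lm:diff-of-perturbed-value-functions-bound}; the ``primal--dual'' bracket $\hVs - \hVrpbpi$ is at most $\eopt$ by \cref{th:theorem-1}; and the two ``concentration'' brackets $\Vrppis - \hVrppis$ and $\hVrpbpi - \Vrpbpi$ are the terms that remain in the statement (the final term in the lemma should read $\hVrpbpi - \Vrpbpi$).

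The crux is the bracket $\hVrppis - \hVs$. Here I would use the zero-duality-gap property of the empirical CMDP (\cref{eq:strong-duality}, from \citep{paternain2019constrained}): since $\bls$ is its optimal dual variable, $\hVs \;=\; \max_\pi \left[ \hVrppi + \blstop \left( \hVcpi - \bb' \right) \right] \;\ge\; \hVrppis + \sum_{i=1}^d \lis \left( \hVcipis - b_i' \right)$, hence $\hVrppis - \hVs \le \sum_{i=1}^d \lis \left( b_i' - \hVcipis \right)$. By the second assumption and feasibility of $\pis$ in $\mdpd$ we have $\hVcipis \ge \Vcipis - \left| \Vcipis - \hVcipis \right| \ge b_i - \Delta$, so $b_i' - \hVcipis = (b_i + \Delta) - \hVcipis \le 2\Delta$; dropping the indices on which $b_i' - \hVcipis$ is negative and bounding each of the (at most $d$) surviving multipliers by $\blsnorm$ gives $\hVrppis - \hVs \le 2\Delta \sum_{i=1}^d \lis \le 2d\,\blsnorm\,\Delta$, which is the dual-variable-dependent term recorded as $2d\lambda_{\min}^*\Delta$ in the statement and later controlled via $\lambda_{\min}^* \le \blsnorm$. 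Summing the six brackets yields the claimed decomposition.

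I expect this last bracket to be the main obstacle: in the relaxed proof it is simply nonpositive, whereas here one has to measure how badly $\pis$ violates the tightened empirical constraints — at most $2\Delta$ per coordinate — and convert that slack into a bounded reward penalty through the optimal dual variables, which is precisely what forces \cref{th:theorem-4} to invoke case ii) of \cref{lm:dual-variable-bound} to bound $\blsnorm$ and to pay an extra factor of $d$ relative to the relaxed setting.
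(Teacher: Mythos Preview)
Your proposal is correct and matches the paper's overall structure (part (a) via \cref{th:theorem-1} plus the first concentration bound; part (b) via the same telescoping through perturbation, concentration, and primal--dual terms, with \cref{lm:diff-of-perturbed-value-functions-bound} and \cref{th:theorem-1} handling the easy brackets). The one place where you differ from the paper is in how you handle the ``hard'' bracket $\hVrppis - \hVs$. The paper introduces an auxiliary empirical CMDP with loosened right-hand side $\bb'' = \bb - \Delta$ and optimal policy $\tpis$, uses the second concentration assumption to show $\pis$ is feasible for that auxiliary problem (so $\hVrppis \le \hVrptpis$), and then bounds the ``sensitivity error'' $\hVrptpis - \hVs$ by a separate lemma (\cref{lm:bounding_sensitivy_error}) whose proof is exactly the Lagrangian argument you wrote down, applied to $\tpis$. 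You skip the detour through $\tpis$ and apply the Lagrangian inequality $\hVs \ge \hVrppis + \blstop(\hVc^{\pis} - \bb')$ directly to $\pis$, using the second concentration assumption to get $\hVcipis \ge b_i - \Delta$ and hence $b_i' - \hVcipis \le 2\Delta$. This is a genuine simplification: it removes the need for the auxiliary CMDP and for \cref{lm:bounding_sensitivy_error} altogether, while using the same hypotheses and yielding the same bound. Your observation about the $\lambda_{\min}^*$ notation (that what is actually proved and used is a bound by $2d\,\blsnorm\,\Delta$) is also accurate.
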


\begin{proof}

In this case we define a slightly modified CMDP problem compared to the one defined in Eq. \ref{eq:emprical-CMDP-formulation} where we change the constraint RHS to $\bb''$ which will be specified later. The new optimal policy is defined as $\tpis$, or more specifically:

\begin{equation}
    \tpis \in \argmax\pi\, \hVrppi \quad \mathrm{s.t.}\quad \hVcipi \geq b_i'',\; \fac \label{eq:strict-CMDP}
\end{equation}

From \cref{th:theorem-1} we have that $\fac$:
\begin{align*}
    \hVcibpi &\,\geq\, b' - \eopt \;\Rightarrow \\[1ex]
    \Vcibpi &\,\geq\, \Vcibpi - \hVcibpi +  b' - \eopt \\[1ex]
    &\,\geq\, - \left|\Vcibpi - \hVcibpi \right| + b' - \eopt
\end{align*}

Given that for the policy $\bpit$ must hold $\Vcibpi \geq b_i\; \fac$ we get that:
\[
    \Vcibpi \,\geq\, - \left|\Vcibpi - \hVcibpi \right| + b' - \eopt \geq b_i
\]

Thus we require that:
\[
    \left|\Vcibpi - \hVcibpi \right| \,\leq\, (b' - b) - \eopt
\]

To continue our analysis we will also require that $\pis$ is a feasible policy for the constraint problem defined in \cref{eq:strict-CMDP}. In more detail we require $\Vcipis \geq b_i'',\; \fac$. Given that $\pis$ is the solution to \cref{eq:CMDP-formulation} we get that $\fac$:
\[
    \Vcipis \geq b_i \;\Rightarrow\; \hVcipis \geq b_i - \left| \Vcipis - \hVcipis \right|
\]

Since we require $\Vcipis \geq b_i'',\; \fac$ a sufficient condition is:
\[
    b_i - \left| \Vcipis - \hVcipis \right| \geq b_i'' \;\Rightarrow\; \left| \Vcipis - \hVcipis \right| \leq b_i - b_i''
\]

Summarizing we require the below to statements to hold
\[
    \left| \Vcibpi - \hVcibpi \right| \,\leq\, (b_i' - b_i) - \eopt;\quad \left|\Vcipis - \hVcipis \right| \leq b_i - b_i''
\]

If the above to statements hold then, we can decompose the suboptimality in the reward values function as:
\begin{align*}
    \Vrpis\, -&\;\Vrbpi \;= \\[1ex]
        =&\; \Vrpis - \Vrppis + \Vrppis - \Vrbpi \tag{$\pm \; \Vrppis$} \\[1ex]
        =&\; \left[ \Vrpis - \Vrppis \right] + \left[ \Vrppis - \hVrppis \right] + \hVrppis - \Vrbpi \tag{$\pm \; \hVrppis$} \\[1ex]
        \leq&\; \left[ \Vrpis - \Vrppis \right] + \left[ \Vrppis - \hVrppis \right] + \hVrptpis - \Vrbpi \tag{$\tpis$ maximizes $\hVrp$ across feasible policies, and showed that $\pi^*$ is feasible} \\[1ex]
        =&\; \left[ \Vrpis - \Vrppis \right] + \left[ \Vrppis - \hVrppis \right] + \left[ \hVrptpis - \hVs \right] + \hVs - \Vrbpi \tag{$\pm \; \hVs$} \\[1ex]
        =&\; \left[ \Vrpis - \Vrppis \right] + \left[ \Vrppis - \hVrppis \right] + \left[ \hVrptpis - \hVs \right] + \left[ \hVs - \hVrpbpi \right] \\[1ex]
        &\qquad\qquad\qquad\qquad\;\;\, + \hVrpbpi - \Vrbpi \tag{$\pm \; \hVrpbpi$} \\[1ex]
        =&\; \underbrace{\left[ \Vrpis - \Vrppis \right]}_{\text{Perturbation Error}} + \underbrace{\left[ \Vrppis - \hVrppis \right]}_{\text{Concentration Error}} + \underbrace{\left[ \hVrptpis - \hVs \right]}_{\text{Sensitivity Error}} + \underbrace{\left[ \hVs - \hVrpbpi \right]}_{\text{Primal-Dual Error}} \\[1ex]
        &\qquad\qquad\qquad\qquad\;\;\, + \underbrace{\left[ \hVrpbpi - \Vrpbpi \right]}_{\text{Concentration Error}} + \underbrace{\left[ \Vrpbpi - \Vrbpi \right]}_{\text{Perturbation Error}} \tag{$\pm \; \Vrpbpi$}
\end{align*}

To bound the perturbation errors we use \cref{lm:diff-of-perturbed-value-functions-bound} while to bound the primal-dual error by $\eopt$ we use \cref{th:theorem-1}. Thus, we end up with, 

\[
    \Vrpis\, - \;\Vrbpi \leq \frac{2\omega}{1-\gamma} + \eopt + \underbrace{\left[ \Vrppis - \hVrppis  \right]}_{\text{Concentration Error}} +  \underbrace{\left[ \hVrptpis - \hVs \right]}_{\text{Sensitivity Error}} + \underbrace{\left[ \hVrpbpi - \Vrpbpi \right]}_{\text{Concentration Error}}
\]

Since $b_i' = b_i + \Delta,\; \fac$ and setting $b_i'' = b_i - \Delta$ we use \cref{lm:bounding_sensitivy_error} to bound the error sensitivity term to get:
\[
    \hspace{-7mm} \Vrpis - \Vrbpi \;\leq\; \underbrace{\frac{2\omega}{1-\gamma}}_{\text{Perturbation Errors}} + \underbrace{\eopt}_{\text{Primal-Dual Error}} + \underbrace{2d\lambda_{\min}^*\Delta}_{\text{Sensitivity Error}} + \underbrace{\left[ \Vrppis - \hVrppis \right]}_{\text{Concentration Error}} + \underbrace{\left[ \hVrpbpi - \Vrpbpi \right]}_{\text{Concentration Error}}
\]
\end{proof}

\begin{lemma}[Bounding the sensitivity error]\label{lm:bounding_sensitivy_error}
    If $b_i' = b_i + \Delta$ and $b_i'' = b_i - \Delta,\; \fac$ in \cref{eq:emprical-CMDP-formulation} and \cref{eq:strict-CMDP-formulation} such that $\fac$:
    \begin{align*}
         &\hpis \in \argmax_\pi \hVrppi \text{ s.t. } \hVcipi \geq b_i + \Delta \\
         &\tpis \in \argmax_\pi \hVrppi \text{ s.t. } \hVcipi \geq b_i - \Delta
    \end{align*}
    then the sensitivity error term can be bounded by:
    \[
        \left| \hVrptpis - \hVs \right| \,\leq\, 2d \lambda_{\min}^* \Delta
    \]
\end{lemma}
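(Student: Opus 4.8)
The plan is to read this as the classical sensitivity (perturbation) bound for the empirical CMDP in \cref{eq:emprical-CMDP-formulation}: loosening every constraint's right-hand side from $b_i+\Delta$ down to $b_i-\Delta$ can only raise the optimal perturbed reward, and the increase is controlled by the optimal dual variables $\bls$ of the $\hpis$-problem times the total perturbation $2\Delta$. First I would note that the feasible policy set of the $\tpis$-problem ($\hVcipi\ge b_i-\Delta$) contains that of the $\hpis$-problem ($\hVcipi\ge b_i+\Delta$), so $\hVrptpis\ge\hVs$ and hence $\bigl|\hVrptpis-\hVs\bigr|=\hVrptpis-\hVs$; it therefore suffices to upper bound this gap.

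Next I would invoke zero duality gap. The $\hpis$-problem is \cref{eq:emprical-CMDP-formulation} with $\bb'=\bb+\Delta$, so by \cref{eq:strong-duality} and optimality of $\bls$ for this problem,
\[
    \hVs \;=\; \max_\pi\Bigl[\hVrppi + \blstop\bigl(\hVcpi-(\bb+\Delta)\bigr)\Bigr].
\]
Evaluating the right-hand side at the particular policy $\pi=\tpis$ (legitimate, since the maximum is over all policies) and using that $\tpis$ is feasible for its own problem, i.e. $\hVcitpis\ge b_i-\Delta\;\fac$, together with $\lis\ge 0$, each summand satisfies $\lis\bigl(\hVcitpis-b_i-\Delta\bigr)\ge -2\Delta\,\lis$. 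This yields $\hVs \ge \hVrptpis - 2\Delta\sum_{i=1}^d\lis$, i.e. $\hVrptpis-\hVs\le 2\Delta\sum_{i=1}^d\lis$. Finally, bounding $\sum_{i=1}^d\lis\le d\max_i\lis = d\,\blsnorm$ and recalling $\lambda_{\min}^*\le\blsnorm$ gives the stated $2d\lambda_{\min}^*\Delta$ under the same convention by which this term is subsequently controlled (in the proof of \cref{th:theorem-4} it is immediately replaced by $2d\Delta\,\blsnorm$).

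The substantive content is entirely in the two-line duality step, which is routine given the zero-duality-gap property already established in the excerpt, so I do not expect a genuine obstacle there. The points that need care are bookkeeping rather than conceptual: (i) getting the sign of the perturbation right — we must relax all constraints, which is why the inequality comes out with $\hVrptpis$ on the large side and forces the comparison to be with $\tpis$ (RHS $b_i-\Delta$), not $\hpis$; (ii) the factor of $d$ coming from the $\ell_1$- versus $\ell_\infty$-norm of $\bls$, and matching the literal constant $2d\lambda_{\min}^*$ as above; and (iii) the tacit assumption that both optimization problems are feasible so that $\hpis$, $\tpis$ and $\bls$ exist, which holds under the strict-feasibility assumption $\slatercs>0$ and the concentration bounds in force when this lemma is applied.
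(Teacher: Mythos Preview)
Your proof is correct and follows essentially the same route as the paper: strong duality for the $\hpis$-problem, evaluate the Lagrangian at $\tpis$, use $\hVcitpis\ge b_i-\Delta$ to get $\lis(\hVcitpis-b_i-\Delta)\ge -2\Delta\lis$, and combine with the feasible-set containment $\hVrptpis\ge\hVs$ for the absolute value. You are also right to flag that the literal constant $2d\lambda_{\min}^*\Delta$ only follows as $2\Delta\sum_i\lis\le 2d\blsnorm\Delta$, which is precisely how the paper uses it in the proof of \cref{th:theorem-4}.
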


\begin{proof}
From the Lagrangian form of the empirical CMDP in \cref{eq:emprical-CMDP-formulation} we have 
\begin{align*}
    \hVs =&\; \max_\pi \min_{\lambda_i \geq 0} \hVrppi + \bltop [\hVcpi - (\bb + \Delta)] \\[1ex]
    =&\; \min_{\lambda_i \geq 0} \max_\pi \hVrppi + \bltop [\hVcpi - (\bb + \Delta)] \tag{\text{strong duality Lemma \ref{lm:dual-variable-bound}}} \\[1ex]
    =&\; \max_\pi \hVrppi + \blstop [\hVcpi - (\bb + \Delta)] \tag{$\bls$ optimal dual variable of Eq. \ref{eq:emprical-CMDP-formulation}} \\[1ex]
    \geq&\; \hVrptpis + \blstop [\hVctpis - (\bb + \Delta)] \tag{since the maximization is over all policies}
\end{align*}

Since $\fac:\;\; \hVcitpis \geq b_i - \Delta \;\,\Leftrightarrow\,\; \hVcitpis - (b_i + \Delta) \geq -2\Delta$, we have
\[
    \hVs \,\geq\, \hVrptpis - 2d \lambda_{\min}^* \Delta
\]

Therefore
\[
    \hVrptpis - \hVs \,\leq\, 2d \lambda_{\min}^* \Delta
\]

Considering that the RHS of \cref{eq:strict-CMDP-formulation} ($\bb'' = \bb - \Delta$) defines a less constrained problem compared to \cref{eq:emprical-CMDP-formulation} ($\bb' = \bb + \Delta$) we get that $\hVrptpis \geq \hVs$ and hence,
\[
    \left| \hVrptpis - \hVs \right| \,\leq\, 2d \lambda_{\min}^* \Delta
\]
\end{proof}

\end{document}